\newif\ifshownotes
\title{Interpolation can hurt robust generalization even when there is no noise}
\author{%
  David S.~Hippocampus\thanks{Use footnote for providing further information
    about author (webpage, alternative address)---\emph{not} for acknowledging
    funding agencies.} \\
  Department of Computer Science\\
  Cranberry-Lemon University\\
  Pittsburgh, PA 15213 \\
  \texttt{hippo@cs.cranberry-lemon.edu} \\
  % examples of more authors
  % \And
  % Coauthor \\
  % Affiliation \\
  % Address \\
  % \texttt{email} \\
  % \AND
  % Coauthor \\
  % Affiliation \\
  % Address \\
  % \texttt{email} \\
  % \And
  % Coauthor \\
  % Affiliation \\
  % Address \\
  % \texttt{email} \\
  % \And
  % Coauthor \\
  % Affiliation \\
  % Address \\
  % \texttt{email} \\
}
\begin{document}

\maketitle

\begin{abstract}
  %% Conventional wisdom suggests that regularization helps to reduce
  %% variance by avoiding the interpolation of noisy training samples but
  %% in turn leads to an increase in bias. Yet,
Numerous recent works show that overparameterization implicitly reduces variance
for minimum-norm interpolators and max-margin classifiers, suggesting vanishing
benefits for ridge regularization in high dimensions. Putting this narrative
into perspective, our paper reveals the unexpected benefits of a non-zero ridge
penalty even when there is no noise: we prove that for both overparameterized
linear regression and classification, avoiding interpolation can in fact
significantly improve generalization. The benefits are particularly
prominent for distributionally and adversarially robust risks and we hence effectively provide,
to the best of our knowledge, the first theoretical analysis on the phenomenon of robust overfitting.
%empirical
  %% curtail a model’s variance even if the model fits training noise perfectly and
  %% hence suggest that interpolation does not hurt performance.
\end{abstract}

 %% The modern narrative suggests that in the overparameterized regime, regularization fails to decrease the generalization gap. Recent empirical observations indicate, however, that avoiding interpolation improves robust test evaluation metrics (also known as robust overfitting). Several hypotheses have been put forth to explain this phenomenon based on experimental evidence and hinge on the role of noisy samples. In this paper we show that interpolating overparametrized linear models can indeed suffer from robust overfitting in high dimensions even in the noiseless case. In particular, we prove formal results for both linear and logistic regression and argue that regularization continues to play an important role, even for overparameterized models, that goes beyond variance reduction.

\section{Introduction}

Conventional statistical wisdom
cautions the user who trains a model by minimizing a loss $\Loss(\theta)$:
%Traditionally, optimizing the data fit
%with respect to parameters $\theta$
%should be exercised with care:
if a global minimizer achieves zero or near-zero training loss (i.e.,
it \emph{interpolates}), we run the risk of overfitting (i.e., high
variance) and thus suboptimal prediction performance.  Instead,
\emph{regularization} is commonly used to reduce the effect of noise
and to obtain an estimator with better generalization.
%\fy{too long, split in two}
Specifically, regularization limits model complexity and induces worse data
fit, for example via an explicit penalty term $R(\theta)$. The resulting
penalized loss $\Loss(\theta) + \lambda R(\theta)$ explicitly imposes certain structural
properties on the minimizer.
%% of the
%% penalized loss $\Loss(\theta) + \lambda R(\theta)$.
This classical rationale, however, does seemingly not apply to overparameterized models:
%% neural networks with large widths yield the same generalization performance
%% when $\Loss(\theta)$ vanishes as compared to
for example, large neural networks in practice exhibit good
generalization performance on \iid{} samples even if $\Loss(\theta)$
vanishes and label noise is present~\cite{Nakkiran20}.

Since interpolators are not unique in the overparameterized regime, it is
crucial to study the specific \emph{implicit biases} of interpolating estimators.
In particular, for common losses, a large body of recent work analyzes the properties of the
solutions found via gradient descent at convergence
% choices of $\Loss(\theta)$
(see
e.g.~\cite{Rosset04,Chizat18,Chizat20,Gunasekar18,Ji18,Ji19,Soudry18,Li2020}). For
example, for linear and logistic regression, it is well-known that
gradient descent converges to the min-$\ell_2$-norm and
max-$\ell_2$-margin solutions, respectively~\cite{Gunasekar18, Ji19,Lyu20,Soudry18}. These interpolating estimators also
minimize the respective penalized loss $\Loss(\theta) + \lambda
\|\theta\|^2_2$ in the limit of $\lambda\to 0$ \cite{Rosset04}.

%How well do these specific interpolators generalize?
A plethora of recent papers explicitly study generalization properties
on min-$\ell_2$-norm interpolators~\cite{Dobriban15,Ghorbani19,Hastie19,Bartlett20,Mei19, Muthukumar20,
    Muthukumar20a} and max-$\ell_2$-margin solutions
\cite{Deng20,Muthukumar20,Salehi19}, and show that the variance
decreases as the overparameterization ratio increases beyond the interpolation threshold.
While regularization with $\lambda >0$ is commonly known to reduce the risk at the
interpolation threshold~\cite{Hastie19, Nakkiran21}, many of these works are motivated by
the second descent of the double descent phenomenon~\cite{Belkin19b}
which suggests that regularization becomes redundant with sufficient overparameterization.
%Motivated by this
%the \fy{second descent of the} double descent phenomenon \cite{Belkin18c},
Hence, previous papers focus on highly overparameterized settings
where the optimal regularization parameter satisfies $\lambdaopt \leq
0$~\cite{Kobak20, Wu20, Richards21}, implying that for large $d/n$,
explicit regularization with $\lambda >0$ is redundant or even
detrimental for generalization.
%% In particular, while recent
%% works~\cite{Nakkiran21,Hastie19} show that explicit regularization can
%% be helpful to mitigate the peak in the double descent curve caused by
%% a diverging variance, with sufficient overparametrization the effect
%% of regularization becomes redundant.

\begin{figure*}[btp]
    \centering
    \begin{subfigure}[t]{\textwidth}
        \centering
        \begin{subfigure}{0.33\textwidth}
            \centering
            \includegraphics[width=1.8in]{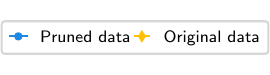}
        \end{subfigure}
        \begin{subfigure}{0.66\textwidth}
            \centering
            \includegraphics[width=2.4in]{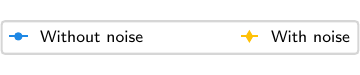}
        \end{subfigure}
    \end{subfigure}
    \begin{subfigure}[b]{\textwidth}
        \centering
        \begin{subfigure}{0.32\textwidth}
            \centering
            \includegraphics[width=1.8in]{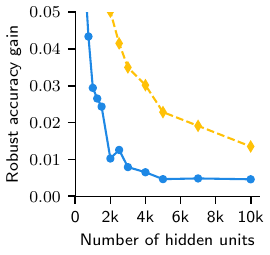}
            \caption{Neural networks}
            \label{fig:teaser_plot_nn}
        \end{subfigure}
        \begin{subfigure}{0.32\textwidth}
            \centering
            \includegraphics[width=1.8in]{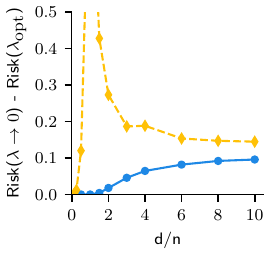}
            \caption{Linear regression}
            \label{fig:teaser_plot_linreg}
        \end{subfigure}
        \begin{subfigure}{0.32\textwidth}
            \centering
            \includegraphics[width=1.8in]{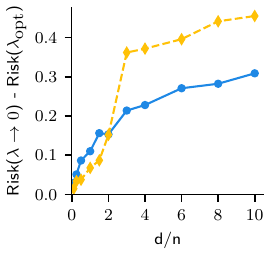}
            \caption{Linear classification}
            \label{fig:teaser_plot_logreg}
        \end{subfigure}
    \end{subfigure}
    \caption{
        %% Early stopping (a) and nonzero $\ell_2$-regularization $\lambdaopt > 0$ (b),(c)
        %% \fy{non-interpolating? sounds a bit like laundry list - can then specify when talking about a,b,c}
        %% yield more robust estimators
        Avoiding interpolation can benefit robustness even in the overparameterized ($d \gg n$) regime
        and for noiseless training data.
        We plot the robust accuracy gain of (a) early-stopped neural networks compared to models at convergence, fit on sanitized (binary 1-3) MNIST that arguably has minimal noise; and $\ell_2$ regularized estimators compared to interpolators with $\lambda \to 0$ for
        %and that we further sanitize. \fy{reads a bit weird to have it at the end}
        %% (b) gap
        %% between the standard and distributionally robust risk of the best
        %% regularized and final estimator for a (a) two-layer neural network
        %% with varying width, trained on (binary $1$-$3$) MNIST after using
        %% suggested variance reduction methods (see
        %% Appendix~\ref{sec:appendix_experiments}, regularized using early
        %% stopping .
        (b) linear regression with $n=10^3$ and
        (c) robust logistic regression with $n=10^3$.
        %% The
        %% robust risk benefits significantly from regularization.
        %This illustrates how avoiding interpolation can benefit robustness. \fy{this is redundant now}
        See \suppmat~\ref{sec:appendix_experiments} for experimental details
        and Sections~\ref{sec:linreg}~and~\ref{sec:logreg} for
        the precise settings of (b) and (c).
    }
    \neuripsvspace{-0.2in}
    \label{fig:teaser}
\end{figure*}
% \rh{should be $\lambdaopt = 0$, right? Also, I still don't see how this is ``motivated by the double descent phenomena''.}
% \fy{maybe here is a place to mention the few older works on two-layer NN
    %   who say its better to regularize, but then need to change transition}

Taking a step back, this narrative
%The narrative that regularization is obsolete for large
%overparameterized models %to achieve good performance
originated from theoretical and experimental findings that consider the \emph{standard}
test risk with identically distributed training and test data.
However, this measure cannot reflect the
\emph{robust risk} of models when the test data has a shifted
distribution, is attacked by adversaries, or contains many
samples from minority groups~\cite{Goodfellow15,Grother10,Quionero09,Zadrozny04}.
In fact, mounting empirical evidence suggests that regularization is indeed
helpful for robust generalization, even in highly overparameterized regimes
where the benefits for the standard risk are negligible~\cite{Rice20}.
This phenomenon is sometimes referred to as \emph{robust overfitting}.
% \fy{careful now it seems like definition of robust overfitting is s.t. not
    % having standard overfitting}

In the presence of noise, the following intuition holds true:
since the robust risk amplifies estimation errors, its variance is larger
and hence regularization -- such as early stopping -- can be beneficial for generalization~\cite{Sanyal21}.
%% increases in the presence of noise, meaning that
%% %% one can argue that in the noisy case, the resulting parameter error
%% %% is amplified and hence
%% additional variance reduction techniques
%% like early stopping can help \cite{Sanyal21}.
%% The reason why robust risk can behave differently than standard risk
%% is that robust risk amplifies parameter errors. Now of course
%% interpolating noise increases parameter error and robust risk would
%% amplify that (Amartya style argument).
However, we observe that even when estimating entirely noiseless signals,
robust overfitting persists!
% \fy{that include adversarially robust losses $\Loss(\theta)$ (aka adv. training) thatlead to fitting imperceptible or consistent perturbation sets around original inputs (see Section~\ref{sec:setting} for clear motivation} In
We observe this phenomenon in experiments with shallow neural networks on sanitized image data depicted in
Figure~\ref{fig:teaser_plot_nn} and, in fact, even for linear models trained on high-dimensional synthetic noiseless data.
In particular,
Figures~\ref{fig:teaser_plot_linreg},\ref{fig:teaser_plot_logreg} show that
%truly high-dimensional feature models, \fy{what is nontruly? also don't get highdim feature models}
min-$\ell_2$-norm and robust max-$\ltwo$-margin interpolators
(minimizers of the training loss for $\lambda \to 0$), % \fy{sth like that...}
%% (i.e., $\lambda \to 0$) \fy{so I don't get more out of this bracket than what was written before}
%% suffers from overfitting when compared to \fy{also unclear isn't overfitting by definition when compared to regularization}
achieve a higher robust risk than the corresponding regularized estimators that do not interpolate noiseless observations (minimizers for $\lambda>0$).
%% explicit ridge regularization with $\lambda >0$ which prevents
%% interpolation.
%In Figure~\ref{fig:teaser} we provide three examples,

%% including neural networks trained on sanitized image data and linear models,
%% that all exhibit robust overfitting.

%% However, the implicit bias of gradient descent or equivalently,
%% $\ell_2$ penalty is unlikely to match the structure of the ground
%% truth.  As a consequence, even when there is no noise, $\lambda \to 0$
%% yields a estimator with high standard and robust risk. In such a case \fy{restate?}, can
%% increasing $\lambda >0$ actually improve generalization?
% In fact it is highly unlikely.
%% When the final solution of gradient descent equivalently or
%% $\lambda \to 0$ incurs a large error and we have noiseless signals,
%% would avoiding interpolation using early stopping help or $\lambda >
%% 0$ ?

To date, our
observations in the noiseless case cannot be
explained by prior work.
%. Furthermore, our results contradict
On the contrary, they seem to contradict
a simple intuition: if the min-$\ell_2$-norm and robust max-$\ell_2$-margin
interpolators exhibit large risks as $\lambda\to 0$,
the induced bias for a small $\ell_2$-norm is potentially suboptimal
and a larger penalty weight $\lambda >0$
should only degrade performance.
In this paper, we provide possible explanations that debunk this intuition in the high-dimensional overparameterized regime.
%% as we prove that the robust risk
%% suffers from overfitting
%% for both min-$\ell_2$-norm and robust max-$\ell_2$-margin interpolation
%% even with noiseless training data. \fy{somehow seems repetitive at this point}
%% In particular,
We prove for isotropic Gaussian covariates
that a strictly positive regularization parameter $\lambda$
systematically improves robust generalization
for linear and robust logistic regression.
Empirically, we show that early stopping and other
factors  that lead to a non-interpolating estimator
achieve a similar effect.
%for early-stopped estimators which avoid interpolation.
Our results provide the first rigorous explanation
of robust overfitting even in the absence of noise.

In Section~\ref{sec:setting}, we formally define the setting that we use throughout our analysis.
We then first present precise asymptotic expressions for the robust risk for linear regression in Section~\ref{sec:linreg} that explicitly explain robust overfitting. Furthermore, in Section~\ref{sec:logreg}, we
consider classification with logistic regression and derive asymptotic results.
% for the robust risk that predict that the robust max-$\ell_2$-margin solution is inferior to regularized and other non-interpolating estimators.
%not sure how to add the intuition point.
%We provide additional intuition for

%% We then present precise asymptotic results of the robust risk \fy{results of is not precise enough - doesn't even say that we prove overfitting}
%% as well as additional intuition \fy{intuition for what - sentence a bit long}
%% for the min-$\ell_2$-norm interpolator in Section~\ref{sec:linreg}
%% and for the max-$\ell_2$-margin solution for logistic regression
%% in Section~\ref{sec:logreg}. \fy{not only for lambda ->0 but actually also for regularized..}

%% In Section~\ref{sec:linreg} we give precise
%% asymptotics for the robust risk of the ridge estimator and the min
%% norm interpolator in the context of linear regression. In
%% Section~\ref{sec:logreg} we show that adversarial training for
%% logistic regression can benefit from a non-zero ridge penalty and
%% provide insights explaining this phenomenon.

\input{motivation}
\section{Risk minimization framework}
\label{sec:setting}

In this section, we introduce
%we describe the setup for our theoretical analysis
%of linear and logistic regression,
%that is,
the data generating process that we assume
throughout our analysis, and define
the standard and robust risks that we use
as evaluation metrics.
%that constitute our evaluation metrics.
% We only introduce concepts that apply
% to both regression as well as classification
% and defer more specific details in the respective sections.
%\fy{this section had lots of redundancy}

\subsection{Problem setting}
\label{sec:data_model}

This paper considers the supervised learning problem
of estimating a mapping
from $d$-dimensional real-valued features $x\in \R^d$ to a target $y
\in \YY \subseteq \R$ given a training set of labeled samples %of input-output pairs
$\Traindata = \{(\xind{i}, \y_i)\}_{i=1}^n$.
%with $(\xind{i}, \y_i) \in \R^d \times \YY$.
We assume that the feature vectors $x_i$ are drawn \iid{} from
the marginal distribution $\probx$ that we assume to be
an isotropic Gaussian.
%We generate the data by drawing $\numsamples$ \iid{} feature vectors
%$x$ from the marginal distribution of the inputs $\probx$
%and determining targets
We further focus on noiseless observations $y_i = \langle \thetatrue, x_i\rangle$ for regression tasks
and $y_i = \sgn(\langle \thetatrue, x_i \rangle)$ for classification tasks,
respectively. However, the main results are more general and apply to noisy observations as well.
%\fy{why did you come up with ``noise perturbed'' as concise as possible!}
For regression, we assume additive Gaussian noise with zero mean and
$\sigma^2$ variance, while for classification we
flip a certain percentage of the training labels.
% \fy{what I really find weird here is that you just announced you only want to
% talk about concepts that are shared and here you are introducing different things for each}
%% Gaussian noise and random label flips In that case, we
%% add \iid{} zero mean $\sigma^2$ variance Gaussian noise to the targets
%% for regression and flip a certain percentage of the training labels
%% for classification.

%
% that follow a conditional distribution depending
% on the linear ground truth function $\langle \thetatrue, x\rangle$.
%% joint distribution
%% $\prob = \prob_X \prob_{Y\vert X}$. Intuitively, the predictor variable $Y$ is
%% generated from a deterministic linear function of the covariates $X$, i.e.\
%% $\langle \thetastar, X\rangle$, which we call the ground truth estimator, and
%% noise.
% The exact expressions for regression and classification are given in
% Sections~\ref{sec:linreg} and~\ref{sec:logreg}, respectively.
%
% \kd{for classification we assume and for regression we assume }

%\rh{The sentence below is not necessary, as this is introduced in the respective section on linear and logistic regression; ok to delete?}
%We assume that $\prob_X = \Normal(\mu,\inputcov)$ \fy{dependence on $\mu$?} and the conditional  distribution of $Y$ depends on $X$ via a linear function
%  $\ytrue(X) = \langle\thetatrue, X\rangle$ and \iid{} %random noise $\xi$.

%% More specifically, we focus on regression with $\YY = \R$ and binary
%% classification with $\YY = \{-1, +1\}$.  In the following we denote with
%% $\|\cdot\|_p$ the $\ell_p$ norm and with $\sgn$ the signum function.

This paper studies the high-dimensional asymptotic regime where $d/n \to \gamma$
as both the dimensionality $d$ and the number of samples $n$ tend to infinity.
This high-dimensional setting is widely studied as it can often -- as in our
experiments -- yield precise predictions for the risk of the estimator when both
the input dimension and the data set size are
large~\cite{Buhlmann11,Wainwright19}. 
%\fy{these works don't exactly
% make that point, but somehow should cite some here that illustrate this point
  % exactly}.
It is also the predominant setting considered in previous theoretical works that
discuss overparameterized linear models~\cite{Ali20,Deng20,Dobriban15,
Hastie19,Javanmard20a,Javanmard20b,Sur19}.
%   \fy{maybe cite more or sth? cause if you get reviewer}.
%
% Without loss of generality our analysis considers input distributions $\prob$
% with identity covariance matrix. However, as we explain in
% Section~\ref{sec:highdim} our observations hold for any effectively
% high-dimensional setting, that is where $\frac{\trace(\inputcov)}{n}\to
% \gamma$. In the case where the input data is inherently low-dimensional
% classical asymptotic theory applies and

%% For an overview of this topic we refer to the standard
%% works \cite{Buhlmann11,Wainwright19} and references therein.

\subsection{\Natrisk{} and robust risk}

%% \rh{I think ``standard'' risk is more common than ``natural'' risk; Fanny can decide on what's better}
%% \fy{yes lets try standard since we don't mention standard training that much anyway!}
%% \fy{after writing, if you really want the denote all the risks with $f$ instead of $\langle\theta,x\rangle$
%%   thats also ok I guess - looks a bit more general and simple, potentially sells better, but is additional notation I guess}

% Further, for a ground truth vector
% $\thetanull \in \RR^p$ we denote with $\Pi_{\parallel} =
% \frac{\thetanull\thetanull^\top}{\|\thetanull\|^2_2} $ the projection onto the
% one dimensional subspace spanned by $\thetanull$ and by $\Pi_{\perp} =
% \idmat_d
% - \Pi_{\parallel}$ the projection onto the orthogonal subspace.

We now introduce the \natrisk{} and robust evaluation metrics for regression and
classification.
%that we study in this paper. %\fy{Sometimes we also refer to standard as natural to avoid confusion with standard training - need to decide on this} \rh{let's use `standard' throughout. more confusing to use both standard and natural}
Given a pointwise test loss $\losstest \colon \R \times \R \to \R$, we define the \natrisk{} (population) risk of an estimator
$\thetahat$ as
\begin{equation}
    \label{eq:SR}
    \SR(\thetahat) :=  \EE_{X\sim\probx} \losstest\left(\langle \thetahat, X\rangle, \langle
    \thetastar, X\rangle\right),
\end{equation}
%% We use two different evaluation metrics to capture different properties of
%% learned estimators: the standard risk and the robust risk.  Unless
%% otherwise stated we use the population variants of these quantities for our
%% theoretical analysis.  In practice, the population risks can be estimated on a
%% held-out test set drawn \iid{} from the training data distribution.
%% The standard risk assesses the prediction error of a model $\yhattheta$ and is
%% defined as
where the expectation is taken over the marginal feature distribution $\probx$.
Note that for any data-dependent estimator $\thetahat$,
%\fy{actually we never defined $\thetahat$ as sth dependent on the data}
this risk is fixed if conditioned on the training data. Our asymptotic bounds
hold almost surely over draws of the training set.
%\fy{the ``and'' read a bit weird}
As standard in the literature, we choose the square loss $\losstest(u,v)= (u-v)^2$
for regression and the 0-1 loss
%% which yields the population mean squared error,
%% while for classification, we choose the 0-1 loss
$\losstest(u,v) = \indicator_{\sgn(u) \neq \sgn(v)}$
for classification.
%which yields the classification error probability.

The broad application of ML models in
real-world decision-making processes increases requirements on their
generalization abilities beyond \iid{} test sets.
%Specifically, robustness.
For example, in the image domain, classifiers
should be \emph{robust} and output the same prediction for
perturbations of an image that do not change the ground truth label (e.g., imperceptible $\ell_p$-perturbations~\cite{Goodfellow15}).
%% modified via an
%% additive imperceptible $\ell_p$-perturbation that does not change the
%% ground truth label.
%\fy{if we go practice, then forget $\ell_p$}
In this case, we say the perturbations are \emph{consistent} and the estimator that achieves zero robust population risk
also has zero standard population risk.
For linear models in particular,
one way to enforce consistency is to restrict perturbations to the space
orthogonal to the ground truth, as proposed in~\cite{Raghunathan20}.

%also achieves zero robust population risk.
%% %\fy{this is wrong, not all
%% standard estimators are also automatically robust estimators cause standard
%% doesn't need to work on the support of the bigger set (when you think about it
%% it's quite weird)}
%Transferring this concept of consistent perturbations to linear regression and classification and
Motivated by the imperceptibility assumption and $\ell_p$-adversarial attacks widely studied in the image domain,
%% such additive \emph{consistent} perturbations need
%% to be orthogonal to $\thetatrue$.
%% In particular,
we consider the adversarially robust risk of a parameter $\theta$ with respect to consistent $\ell_p$-perturbations
\begin{align}
    \label{eq:AR}
    \AR(\thetahat) \define \EE_{X\sim\probx} \max_{\delta \in \pertset{p}(\epsilon)
    }\losstest(\langle \thetahat, X + \delta\rangle, \langle \thetastar, X\rangle) \:,
\end{align}
with the perturbation set $\pertset{p}(\epsilon) \define \{\delta \in \R^d: \|\delta\|_p \leq \epsilon~\textrm{and} ~\langle \thetatrue,\delta\rangle = 0\}$.

In many scientific applications, security against adversarial attacks may not be
the dominating concern;
%% security concerns go beyond robustness to adversarial
%% $\ell_p$-perturbations and
one may instead require estimators to be robust against
small distribution shifts. Earlier work~\cite{sinha18} has pointed out that
distribution shift robustness and adversarial robustness are equivalent for
losses that are convex in the parameter $\theta$.
Similarly, in our setting, adversarial robustness against consistent $\ell_p$-perturbations
implies distributional robustness against $\ell_p$-bounded mean shifts in the
covariate distribution $\probx$ (see \suppmat{}~\ref{sec:dro}).

\section{Min-\texorpdfstring{$\ell_2$}{l2}-norm interpolation in robust linear regression}
\label{sec:linreg}

In the context of regression, we illustrate
overfitting of the robust risk in Equation~\eqref{eq:AR}
with the set of consistent $\ell_2$-perturbations $\pertset{2}(\eps)$.
More precisely, we show that
preventing min-$\ell_2$-norm interpolation on noiseless samples via ridge
regularization improves the robust risk. 
\ifarxiv
Furthermore, we explain why regularization benefits
the robust risk even in settings where it does not decrease the standard risk.
\else
We refer  the reader to \suppmat~\ref{sec:intuitionlinreg} for  an intuitive explanation.
%% or equivalently via early stopped gradient descent (see
%% \cite{Ali19,Ali20}), . \fy{this sounds like we show that
%%   early stopping helps explicitly - but we don't feel like this early stop comment is not
%% important here where we want to be precise}
% We expect robust overfitting also
% for $\ell_\infty$ perturbations but leave a detailed discussion for future work.
\fi
Lastly, we note that due to the rotational invariance of the problem,
our results hold for sparse and dense ground truths $\thetastar$ alike.

%
%
% \subsection{Problem setting}
% \label{sec:linregsetting}
%
% We consider an observation model with covariates $x \in
% \R^d$ drawn from a standard normal distribution with zero mean, i.e., $x \sim
% \Normal(0,I_d)$, and with the target variable $y \in \R$ defined as
% a noisy observation of a linear function  $y = \langle \thetatrue, x \rangle + \xi $ with $\xi \sim \Normal(0, \ynoisestd^2)$.
% Here, $\thetatrue$ denotes the ground truth vector which we set to
% have unit $\ell_2$-norm.  As mentioned before, we focus on the
% noiseless case of $\sigma=0$ since overfitting is not expected for
% that case and include $\sigma>0$ only for
% completeness.
%
%

% \fy{we expect RO also for: linf AT on dense vectors with AT. linf
%   sparse vectors and ST} \fy{maybe add some exps?}
%with distributional neighborhood definedin~\eqref{eq:DR}.
%% \fy{no
%%   real motivation} Equivalently, our goal can be viewed as searching
%% for a distributionally robust estimator, using Equation~\eqref{eq:DR}
%% for evaluation, where we choose $\probspace$ to be the space of all
%% normal distributions $\Normal(\mu,\Sigma)$ with $\langle \mu,
%% \thetastar\rangle = 0$ and $\probmet(\probx,\probrobx) =
%% \|\mu(\probx)-\mu(\probrobx)\|_2$. \fy{a bit too many symbols for what
    %%   it is}
%% %% against small mean shifts
%% %% i.e. $X\sim \Normal(\mu,\inputcov)$ with $\|\mu\|_2\leq \eps$.
%% A formal proof of this equivalence can be found in Lemma~\ref{lem:dro} in the
%% appendix. \fy{write Lemma!}

\subsection{Interpolating and regularized estimator}

We study linear ridge regression estimates defined as
\begin{equation}
    \label{eq:linregridge}
    \thetalambda =
    \arg\min_{\theta} \frac{1}{n} \sum_{i=0}^n (y_i - \langle \theta, x_i \rangle)^2
    + \lambda \|\theta\|_2^2.
\end{equation}
The min-$\ell_2$-norm interpolator  is the limit of the linear ridge regression estimate with
$\lambda \to 0$ and is given by
\begin{equation}
\label{eq:minnorm} \thetaminnorm = \arg\min_{\theta} \|\theta\|_2
~~\mathrm{such~that}~~ \langle \theta, x_i\rangle = \y_i ~~\mathrm{for~all}~ i.
\end{equation}
%% that is the limit of the linear ridge regression estimate with
%% $\lambda \to 0$, given by
%% \begin{equation}
%%     \label{eq:linregridge}
%%     \thetalambda =
%%     \arg\min_{\theta} \frac{1}{n} \sum_{i=0}^n (y_i - \langle \theta, x_i \rangle)^2
%%     + \lambda \|\theta\|_2^2.
%% \end{equation}
%  namely the unregularized estimator that corresponds to $\lambda \to 0$ and
%  can be written as follows:
%
% ridge penalized loss for regression~\eqref{eq:ridgeloss} that corresponds to
% $\epstrain = 0$ and choosing $\losstrain$ as the squared loss which yields
%
% % that we obtain by choosing the square loss %
% $\losstrain(u,v)=\frac{1}{2}(u-v)^2$ and $\epstrain =0$ in %
% Equation~\eqref{eq:ridge}.  and reads %% trained by minimizing the %% standard
% risk, i.e.\ $\epstrain = 0$, which, for the square loss, takes the %%
% following form For $\lambda > 0$, we obtain a regularized predictor
% $\thetalambda$. In addition, we also consider
Note that the min-$\ell_2$-norm interpolator
is also the estimator that gradient descent on the unregularized loss converges to,
%% as the number
%% of iterations tend to infinity,
while ridge regression with $\lambda > 0$
corresponds to early-stopped estimators~\cite{Ali19,Ali20}.
%% tracks the early stopping path \cite{Ali19,Ali20}.
%% \fy{now that I removed early stopping from intro of section, need to}
Therefore, by proving that a ridge regularized estimator with $\lambda >0$ significantly
outperforms the min-$\ell_2$-norm interpolator with $\lambda \to 0$,
we also show that early stopping benefits robust generalization.
%% Therefore, our subsequent results equivalently apply to
%% early-stopped estimators. \fy{maybe be more precise which ``results'' you mean.
%%   I'd say the statement transfers: $\lambda >0$ corresponds to
%%   strictly early stopped}

Whenever the goal is to achieve a low robust risk,
a popular alternative to using the standard linear regression
estimate in Equations~\eqref{eq:linregridge},\eqref{eq:minnorm}
is to consider adversarially
trained estimators~\cite{Goodfellow15,Javanmard20a}.
However, $\ell_2$-adversarial training in its usual form (i.e., with inconsistent perturbations)
%% (i.e., using inconsistent perturbations) \fy{not sure we need to say this here
%% its the first time we use this word and it can be confusing}
prevents regression estimators from interpolating, and hence, has a similar effect to
$\ell_2$-regularization as we discuss in more detail
in \suppmat{}~\ref{sec:linreg_inconsistent}.
% \fy{in contrast sounds like thats the real deal you're gonna pursue}
On the other hand, training with consistent perturbations as defined in the robust risk
is equivalent to full knowledge of the direction of $\thetastar$
and hence simply recovers the ground truth in the noiseless case.
Since our goal is to reveal the shortcomings of interpolators
compared to regularized estimators,
%min-$\ell_2$-norm \emph{interpolators},
we only analyze ridge estimators trained without perturbations.

\subsection{Robust overfitting in noiseless linear regression}

% ne maybe unexpected observation has been pointed out in \cite{Hastie19} The
% simplicity of the model allows This allows a very rigorous theoretical
% understanding of the behavior of the model.

The following theorem provides a precise asymptotic expression of the
robust risk under consistent $\ell_2$-perturbations
for the ridge regression estimate in Equation~\eqref{eq:linregridge}.
%as $d/n\to\gamma$ and $d,n\to\infty$.
The proof extends
% \fy{uses sounds a bit lame like we didn't have to do anything}
techniques from previous works~\cite{Hastie19,Dobriban15} based on results from random matrix theory~\cite{Bai10,Knowles16}
and can be found in \suppmat~\ref{sec:prooflinregthm}. Without loss of generality, we can assume that $\|\thetastar\|_2 = 1$.

\begin{theorem}
    \label{thm:main_thm_lr}
% \fy{this read a bit like a mouthful, so I tried to cut}
%% For any $\lambda \geq0$ the risk of the standard estimator $\thetahat$
%% minimizing the mean squared loss from Equation \eqref{eq:SE} and
%% \eqref{eq:minnorm} with additional ridge regularization $\lambda \geq 0$
%% Assume isotropic Gaussian covariates, i.e., $\probx = \Normal(0,
%% \idmat_d)$. Then, for $d,n \to \infty$ with $d/n \to \gamma$ and for
%% any $\lambda\geq 0$, the robust risk in Equation~\eqref{eq:AR}
%% (with respect to consistent $\ell_2$-perturbations)
%% of the estimators $\thetalambda$
%% defined in Equations~\eqref{eq:minnorm},~\eqref{eq:linregridge}
%% converges to
Assuming the marginal input distribution $\probx = \Normal(0,\idmat_d)$, the robust risk~\eqref{eq:AR}
of the estimator $\thetalambda$ for $\lambda > 0$ (defined
in~\eqref{eq:linregridge}) with respect to
consistent $\ell_2$-perturbations $\pertset{2}(\epsilon)$
asymptotically converges to
\begin{align}
    \AR(\thetalambda) \: \overset{\text{a.s.}}{\longrightarrow}
    \:\Riskinfty + \epstest^2 \Pperpinfty + \sqrt{\frac{8
            \epstest^2}{\pi} \Pperpinfty \Riskinfty} =: \limitrisk
\end{align}
as $d,n \to \infty$ with $d/n \to \gamma$, where $ {\Pperpinfty= \Riskinfty -\lambda^2
(m(-\lambda))^2}$ and ${\Riskinfty = \lambda^2 m'(-\lambda) +\sigma^2 \gamma (m(-\lambda) - \lambda m'(-\lambda))}$
is the asymptotic standard risk, i.e., ${\SR(\thetalambda)\:
  \overset{\text{a.s.}}{\longrightarrow}\: \Riskinfty}$. The function $m(z)$ is the Stieltjes transform of the Marchenko-Pastur law and is given by $m(z) = \frac{1 -
    \gamma - z - \sqrt{(1-\gamma -z)^2 - 4 \gamma z }}{2\gamma z}$.
%Further,  $m'$ is the derivative of $m$.
Further, the limit $\lim_{\lambda \to 0}\limitrisk$ exists for all $\eps \geq 0$
and corresponds to the asymptotic standard ($\eps = 0$) and robust risks ($\eps >0$) of the min-$\ell_2$-norm interpolator $\thetaminnorm$~\eqref{eq:minnorm}.

%% \begin{equation}
%% \label{eq:limitmaxnorm}
%%  \AR(\thetahat_0)  \: \overset{\text{a.s.}}{\longrightarrow} \lim_{\lambda \to 0} \left(\Riskinfty + \epstest^2 \Pperpinfty + \sqrt{\frac{8
%%      \epstest^2}{\pi} \Pperpinfty \Riskinfty}\right).
%% \end{equation}

% Furthermore, the limit of the standard risk is
% given by
% ${\SR(\thetalambda)\: \overset{\text{a.s.}}{\longrightarrow}\: \Biasinfty +
%   \Varinfty}$.
% \begin{align*}
%  \Pperpinfty &= \Biasinfty + \Varinfty - 2 \lambda m(-\lambda) +1 - (1-  \lambda m(-\lambda))^2.
% \end{align*}
% \begin{align*}
%  \Pperpinfty &= \Biasinfty + \Varinfty -\lambda^2 (m(-\lambda))^2.
% \end{align*}
% \fy{honestly I'm a bit confused - if B+V always appear together, why don't you just write
%   robust risk as a function of the standard risk? I mean its weird that there is P thats somewhat similar to the sum but the sum doesn't have a symbol ... do you see what I mean?}
\end{theorem}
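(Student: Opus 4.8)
The plan is to first reduce the adversarial risk to an explicit scalar function of the estimation error $r \define \thetalambda - \thetastar$, and then to identify the almost-sure limits of the only two data-dependent quantities that appear, $\|r\|_2^2$ and $\|P^\perp r\|_2^2$ (with $P^\perp$ the orthogonal projection onto $\thetastar^\perp$), via random matrix theory in the spirit of~\cite{Hastie19,Dobriban15}. Conditioned on the training data $\thetalambda$, and hence $r$, is deterministic. For a test point $X$ and the square loss, $\langle\thetalambda, X+\delta\rangle - \langle\thetastar, X\rangle = \langle r, X\rangle + \langle\thetalambda,\delta\rangle$; since $\langle\thetastar,\delta\rangle = 0$ and $\|\thetastar\|_2 = 1$ we have $\langle\thetalambda,\delta\rangle = \langle P^\perp r,\delta\rangle$, which ranges over $[-\epsilon\|P^\perp r\|_2,\, \epsilon\|P^\perp r\|_2]$ as $\delta$ runs over $\pertset{2}(\epsilon)$. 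Because $t \mapsto (\langle r, X\rangle + t)^2$ is convex, its maximum over that interval is $(|\langle r, X\rangle| + \epsilon\|P^\perp r\|_2)^2$. Taking the expectation over $X \sim \Normal(0,\idmat_d)$ and using $\langle r, X\rangle \sim \Normal(0, \|r\|_2^2)$, so that $\EE\,|\langle r, X\rangle| = \sqrt{2/\pi}\,\|r\|_2$ and $\EE\,\langle r, X\rangle^2 = \|r\|_2^2$, gives the exact identity
\[
\AR(\thetalambda) = \|r\|_2^2 + \epsilon^2\|P^\perp r\|_2^2 + \sqrt{\tfrac{8\epsilon^2}{\pi}}\,\|P^\perp r\|_2\,\|r\|_2 ,
\]
whose first term is precisely $\SR(\thetalambda)$. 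It therefore suffices to show $\|r\|_2^2 \overset{\mathrm{a.s.}}{\to} \Riskinfty$ and $\|P^\perp r\|_2^2 \overset{\mathrm{a.s.}}{\to} \Pperpinfty$.

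\textbf{The two limits via RMT.} With $\hat\Sigma \define \tfrac1n X^\top X$ and label noise $\xi$ one has $r = -\lambda(\hat\Sigma + \lambda\idmat_d)^{-1}\thetastar + (\hat\Sigma + \lambda\idmat_d)^{-1}\tfrac1n X^\top\xi$, so that (in expectation over $\xi$, with a cross term that vanishes almost surely) $\|r\|_2^2$ splits into $\lambda^2\,\thetastar^\top(\hat\Sigma + \lambda\idmat_d)^{-2}\thetastar$ and $\tfrac{\sigma^2}{n}\mathrm{tr}\!\big((\hat\Sigma + \lambda\idmat_d)^{-2}\hat\Sigma\big)$, while $\langle r, \thetastar\rangle = -\lambda\,\thetastar^\top(\hat\Sigma + \lambda\idmat_d)^{-1}\thetastar + \thetastar^\top(\hat\Sigma + \lambda\idmat_d)^{-1}\tfrac1n X^\top\xi$, the last summand concentrating at $0$ since $\xi$ is mean-zero and independent of $X$. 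By rotational invariance take $\thetastar = e_1$; the deterministic-equivalent (anisotropic local law) results of~\cite{Bai10,Knowles16} give $\thetastar^\top(\hat\Sigma + \lambda\idmat_d)^{-1}\thetastar \to m(-\lambda)$ and $\thetastar^\top(\hat\Sigma + \lambda\idmat_d)^{-2}\thetastar \to m'(-\lambda)$ almost surely, and $\tfrac1d\mathrm{tr}\!\big((\hat\Sigma + \lambda\idmat_d)^{-2}\hat\Sigma\big) \to m(-\lambda) - \lambda m'(-\lambda)$, where $m$ is the Marchenko--Pastur Stieltjes transform. Using $d/n \to \gamma$ this yields $\|r\|_2^2 \to \Riskinfty$, while $\|P^\perp r\|_2^2 = \|r\|_2^2 - \langle r,\thetastar\rangle^2 \to \Riskinfty - \lambda^2 m(-\lambda)^2 = \Pperpinfty$. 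Substituting into the scalar identity proves $\AR(\thetalambda) \to \limitrisk$.

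\textbf{The $\lambda \to 0$ limit.} For every fixed $n, d$ one has $\thetalambda \to \thetaminnorm$ as $\lambda \to 0$. Direct inspection of the closed form of $m$ shows that $\lambda m(-\lambda)$, $\lambda^2 m'(-\lambda)$, and $\lambda^2 m(-\lambda)^2$ all have finite limits as $\lambda \to 0^+$: for $\gamma > 1$ the Marchenko--Pastur law has an atom of mass $1 - 1/\gamma$ at $0$, so $m(-\lambda)$ diverges like $(1-1/\gamma)/\lambda$ but these products stay bounded, whereas for $\gamma < 1$ the law is supported away from $0$ and $m$ is continuous there. Hence $\lim_{\lambda\to0}\limitrisk$ exists, and by continuity of $\SR$ and $\AR$ in $\theta$ together with uniform control of the $\lambda$-dependence of the convergence it equals the asymptotic standard ($\epsilon = 0$) and robust ($\epsilon > 0$) risks of $\thetaminnorm$.

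\textbf{Main obstacle.} The scalar reduction is elementary; the technical heart is the RMT step, namely upgrading convergence of \emph{normalized traces} of resolvents to convergence of the \emph{quadratic forms} $\thetastar^\top(\hat\Sigma + \lambda\idmat_d)^{-k}\thetastar$ and of the single coordinate $\langle r,\thetastar\rangle$ (equivalently $r_1$ after rotating $\thetastar$ to $e_1$), which requires an anisotropic local law rather than bulk spectral convergence alone, plus uniform control of the noise cross terms. Exchanging the $\lambda \to 0$ and $n,d \to \infty$ limits for the final claim also needs care, which is why it is cleanest to verify separately that the min-$\ell_2$-norm interpolator satisfies $\|r\|_2^2 - \langle r,\thetastar\rangle^2 \to \Pperpinfty|_{\lambda\to0}$ directly (here $r = (\idmat_d - X^\top(XX^\top)^{-1}X)\thetastar + \text{noise}$, whose squared norm converges to $1 - 1/\gamma$ for $\gamma > 1$).
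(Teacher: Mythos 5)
Your proposal is correct and follows essentially the same route as the paper: the same closed-form scalar reduction of the robust risk to $\|r\|_2^2$ and $\|\Pi_\perp r\|_2^2$ (the paper's Lemma~\ref{lm:linregrisk}), the same bias/variance/cross-term decomposition with the identity $\|\Pi_\perp\thetalambda\|_2^2=\|r\|_2^2-\langle r,\thetastar\rangle^2$ and the limit $\langle r,\thetastar\rangle\to-\lambda m(-\lambda)$, and the same case analysis of $m$ near $\lambda=0$. The only point to tighten is that the noise quadratic form must converge almost surely over a single draw of $\xi$ (not merely in expectation), which the paper handles via $\|\xi\|_2^2/n\to\sigma^2$ and concentration.
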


% \begin{theorem}
% \label{thm:main_thm_lr}
%
% %% For any $\lambda \geq0$ the risk of the standard estimator $\thetahat$
% %% minimizing the mean squared loss from Equation \eqref{eq:SE} and
% %% \eqref{eq:minnorm} with additional ridge regularization $\lambda \geq 0$
% Assueme isotropic Gaussian covariates (i.e. $\prob_X = \Normal(0, \idmat_d))$ and $d,n \to \infty$ with $d/n \to \gamma$ and for any $\lambda\geq 0$, the robust and standard risk of the
% estimators $\thetalambda$ defined in
% Equations~\ref{eq:linregridge},~\ref{eq:minnorm} converge to
% \begin{align}
%  \SR(\thetalambda)\: \overset{\text{a.s.}}{\longrightarrow}\: \Biasinfty + \Varinfty ~~\textrm{and} ~~
%  \AR(\thetalambda) \: \overset{\text{a.s.}}{\longrightarrow} \:\Biasinfty + \Varinfty + 2\sqrt{\frac{2}{\pi}} \epstest \Pperpinfty \sqrt{\Biasinfty + \Varinfty} + \epstest^2  \Pperpinfty^2,
% \end{align}
% where $\Biasinfty$ and $\Varinfty$ denote the usual bias and variance \fy{rephrase} and
% \begin{align*}
%  \Biasinfty &= \lambda^2 m'(-\lambda),~~
%  \Varinfty = \sigma^2 \gamma (m(-\lambda) - \lambda m'(-\lambda)) ~~\mathrm{and}\\
%  \Pperpinfty^2 &= \Biasinfty + \Varinfty - 2 \lambda m(-\lambda) +1 - (1-  \lambda m(-\lambda))^2.
% \end{align*}
% The function $m(z)$ is given by $m(z) = \frac{1 - \gamma - z - \sqrt{(1-\gamma
% -z)^2 - 4 \gamma z }}{2\gamma z}$ and $m'$ is the derivative of $m$.
% \end{theorem}

We plot the precise asymptotic risks of the ridge estimate with optimal
regularization parameter
$\lambdaopt$\footnote{
    Here we choose $\lambda$ using the population risk oracle.
    In practice, one would resort to standard tools
    such as cross-validation techniques
    that also enjoy theoretical guarantees
    (see e.g.\ \cite{Patil21}).
}
and of the min-$\ell_2$-norm interpolator
for $\lambda \to 0$ in Figure~\ref{fig:main_thm_lr}.
For the robust risk, we use $\epsilon=0.4$.
%% The main message of this theorem is captured by the curves in
%% Figure~\ref{fig:main_thm_lr} that show the theoretical predictions for the
%% standard and robust population risks ($\SR$ and $\AR$, respectively) as
%% functions of the $d/n$ ratio\footnote{For standard risk, similar curves have
%%   been presented in \cite{Hastie19}}.
%% We highlight a number of surprising observations that follow from
%% Theorem~\ref{thm:main_thm_lr}.
We first observe in Figure~\ref{fig:lra} that ridge regularization reduces the
robust risk even for $d/n \gg 1$ well beyond the interpolation threshold -- the regime
where previous works show that the variance is negligible, and hence, regularization does not
improve generalization.
%regularization has no effect on the standard risk.

Moreover, Figure~\ref{fig:lrb} illustrates that the beneficial effect of ridge
regularization persists even for noiseless data.
This supports our statement that regularization not only helps to reduce
variance,
% \fy{ superfluous cause we have the noiseless case, also don't wanna stress }
%% as we see for instance close to the interpolation threshold
%% when the data is noisy,
but also reduces the part of the robust risk that is unaffected by noise in the
overparameterized regime.
%% but it also endows estimators with the right
%% structure that is required to achieve a low robust risk,
%% thus acting
%% as a form of bias reduction in the overparameterized
%% regime.
Furthermore, we show that experiments run with finite values of $d$ and $n$
(depicted by the markers in Figure~\ref{fig:main_thm_lr}) closely match the
predictions obtained from Theorem~\ref{thm:main_thm_lr} for $d, n \to \infty$
and $d/n \to \gamma$. This indicates that the high-dimensional asymptotic regime
does indeed correctly predict and characterize the high-dimensional
non-asymptotic regime.
% well.  \fy{here or where?}
Finally, even though Theorem~\ref{thm:main_thm_lr} assumes isotropic
Gaussian covariates, we can extend it to more general
covariance matrices following the same argument as in~\cite{Hastie19}, based on results from random matrix theory~\cite{Knowles16}.

\begin{figure*}
    \centering
    \includegraphics[width=5.5in]{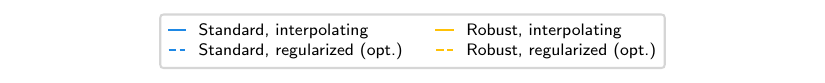}
    \begin{subfigure}{0.49\textwidth}
        \centering
        \includegraphics[width=2.7in]{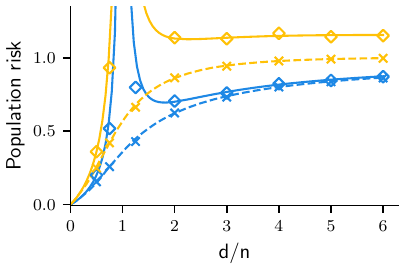}
        \caption{Noisy observations}
        \label{fig:lra}
    \end{subfigure}
    \begin{subfigure}{0.49\textwidth}
        \centering
        \includegraphics[width=2.7in]{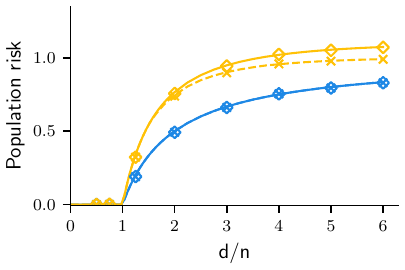}
        \caption{Noiseless observations}
        \label{fig:lrb}
    \end{subfigure}
    \caption{
        Asymptotic theoretical predictions for $d,n\to\infty$ (curves)
        and experimental results with finite $d$ and $n=10^3$ (markers)
        for the robust ($\epstest=0.4$) and standard risk of the
        min-$\ell_2$-norm interpolator
        (solid, \emph{interpolating}) and the ridge
        regression estimate with optimal $\lambda$ (dashed,
        \emph{regularized}) for (a) noisy data with $\sigma^2 = 0.2$
        and (b) noiseless data.
        We observe that the gap between
        the robust risk of the interpolating and optimally regularized
        estimators persists even for noiseless observations.
    }
%%   The for the asymptotic robust risk of the standard linear ridge
%%   regression algorithm ($\epstrain = 0$) as a function of $\gamma
%%   = d/n$ for isotropic Gaussian features with (a) additional
%%   $\sigma^2= 0.2$-normal distributed noise and (b) noiseless
%%   observations.  The risk is plotted for different choices of
%%   test attack strengths $\epstest$. Dashed lines show the
%%   unregularized risk ($\lambda = 0$) and solid lines the risk for
%%   the optimal regularization parameter $\optridgeparam$.  We
%%   verify our predictions empirically using $n = 1000$ training
%%   samples and mark the risks of estimators trained with and
%%   without optimal regularization using crosses and diamons
%%   respectively.
    \label{fig:main_thm_lr}
    \neuripsvspace{-0.2in}
\end{figure*}

\ifarxiv
\subsection{Intuitive explanation and discussion}

We now shed light on the phenomena revealed by
Theorem~\ref{thm:main_thm_lr} and Figure~\ref{fig:main_thm_lr}.
In particular, we discuss why regularization can
reduce the robust risk even in a noiseless setting
and why the effect is indiscernible for the standard risk.

% We now give insights explaining why the robust error overfits even in the
% noiseless case.  As shown in Figure~\ref{fig:lrb}, in this setting, the standard
% risk does not benefit from regularization, while we can clearly see that it
% improves the robust risk.
% This phenomenon is exclusively an issue in overparameterized regimes for
% noiseless observations, as illustrated in Figure~\ref{fig:lrb}. In this case
% the standard risk never benefits from regularization while we can clearly see
% that regularization helps the robust risk.

For this purpose, we examine the robust risk as a function of
$\lambda$, depicted in Figure~\ref{fig:linreg_increasing_fit} for
different overparameterization ratios $\gamma>1$ and $\epsilon=0.4$.
The arrows point in the direction of increasing $\lambda$.
We observe how the minimal robust risk is achieved for a $\lambdaopt$
bounded away from zero and how the optimum increases with the
overparameterization ratio $d/n \to\gamma$, indicating that stronger
regularization is needed the more overparameterized the estimator is.

In order to understand this overfitting phenomenon better, we decompose the
ridge estimate $\thetalambda$ into its projection $\Pi_{\parallel}$ onto the
ground truth direction $\thetatrue$ and the projection $\Pi_\perp$
onto the orthogonal complement,
i.e., $\thetalambda = \Pi_\parallel \thetalambda + \Pi_\perp \thetalambda$.
%% onto to the direction ground truth $\thetatrue$
%% Let $\Pi_{\parallel}$ be the orthonormal projection onto the ground truth
%% $\thetanull$, and let $\Pi_{\perp}$ be the projection on its orthogonal
%% complement. Basic algebra yields that we can decompose the estimator
%% $\thetalambda$ into orthogonal components $\thetalambda = \Pi_\parallel
%% \thetalambda + \Pi_\perp \thetalambda$, and,
%% As a straight-forward extension of
%% Lemma 3.1 in \cite{Javanmard20a} \fy{sorry isn't that kinda obvious and not worth mentioning javanmard?},
For the noiseless setting ($\sigma^2=0$), substituting this decomposition into
Equation~\eqref{eq:AR} yields the following closed-form expression of the robust
risk which now involves the parallel error $\|\thetanull - \Pi_{\parallel}
\thetalambda\|_2^2$ and the orthogonal error $\|\Pi_{\perp} \thetalambda\|_2^2$:
%and can be rewritten as
%% \rh{We can have $\epstest=\epsilon$ throughout, right? Let's please do that if possible, and note that this is important below where the RHS uses $\epstest$ but the LHS uses $\epsilon$.
%% }
\begin{align}
\label{eq:ARnoiseless}
% \SR(\thetalambda) &=  \|\thetanull - \Pi_{\parallel} \thetalambda\|_2^2 + \|\Pi_{\perp} \thetalambda\|_2^2 \\
\AR(\thetalambda) =  \|\thetanull - \Pi_{\parallel} \thetalambda\|_2^2  + (1+\epstest^2) \|\Pi_{\perp} \thetalambda\|_2^2
+ \sqrt{\frac{8 \epstest^2 }{\pi} \|\Pi_{\perp} \thetalambda\|_2^2 (\|\thetanull - \Pi_{\parallel} \thetalambda\|_2^2 + \|\Pi_{\perp} \thetalambda\|_2^2 )}.
\end{align}
%simple algebra based on
We provide the proof in \suppmat{}~\ref{sec:linrobrisk}.
% \fy{TODO ???- write
% Lemma}

Figure~\ref{fig:tradeofflinreg} shows that, as $\lambda$ increases, the
orthogonal error decreases faster than the parallel error increases.
% \fy{some detail how we drew Figure}
Since, by Equation~\eqref{eq:ARnoiseless}, the orthogonal error is weighted more
heavily for large enough perturbation strengths $\epstest$, some nonzero ridge
coefficient
%$\lambda$ that \emph{prevents interpolation}
yields the best trade-off.
On the other hand, the standard risk with $\epstrain =0 $
weighs both errors equally, resulting in an optimum at $\lambda \to 0$.

\begin{figure*}
	\centering
	\includegraphics[width=5.5in]{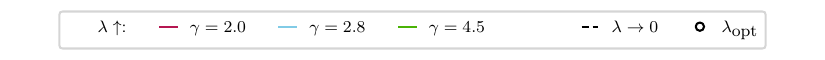}
	\begin{subfigure}{0.61\textwidth}
		\centering
		\includegraphics[width=3.15in]{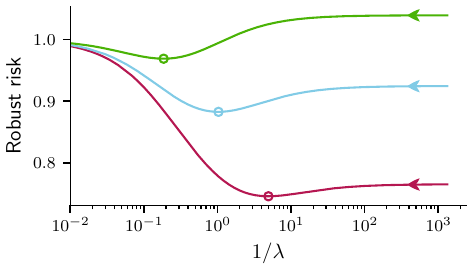}
		\caption{Robust risk as $\lambda$ increases}
		\label{fig:linreg_increasing_fit}
	\end{subfigure}
	\begin{subfigure}{0.38\textwidth}
		\centering
		\includegraphics[width=2.25in]{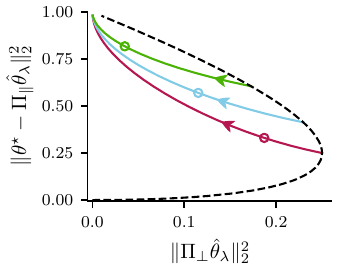}
		\caption{Orthogonal and parallel parameter error}
		\label{fig:tradeofflinreg}
	\end{subfigure}
	\caption{
		Theoretical curves depicting the robust risk with $\epstest=0.4$ (a)
		and decomposed terms (b) as $\lambda$ increases (arrow direction)
		for different choices of the overparameterization ratio $d/n\to\gamma$.
		In (b) we observe that for large $\gamma > 1$, as $\lambda$ increases,
		the orthogonal error $\|\Proj_\perp \thetalambda\|_2$ decreases
		whereas the parallel error $\|\thetatrue - \Proj_\parallel \thetalambda\|_2$ increases.
		For $\eps>0$, the optimal $\lambda$ is large enough to prevent interpolation.
	}
	
	%%   Relation between the key quantities of the robust risk for
	%%   standard ridge regression estimators in the noiseless case.  Each dashed
	%%   line corresponds to the trajectory of the risk terms for a fixed $\gamma$ as
	%%   the ridge parameter $\lambda$ increases.  The slide lines correspond to the
	%%   $\lambda$ which minimize the robust risk for fixed test attack strengths
	%%   $\epstest$ as a function of $\gamma$.  The arrows denote the direction of
	%%   increase for both types of trajectories.  \rh{As the curve show, ...
		%% conclusion/take-away here} }

\label{fig:linreg_noiseless_theory}
\end{figure*}

\fi

\section{Max-\texorpdfstring{$\ell_2$}{l2}-margin interpolation in robust linear classification}
\label{sec:logreg}

% \kd{intro must be fully rewritten}
% In this section, we show that the max-$\ell_2$-margin solution
% (corresponding to $\lambda \to 0$ in $\empriskrob$~\eqref{eq:AR}
% for the logistic loss with $\ell_{\infty}$-perturbations)
% yields worse generalization than when choosing $\lambda >0$.
% We derive precise asymptotic predictions
% that show a systematic benefit
% for the standard and robust risk
% under $\ell_2$-regularization
% when $d, n \to \infty$ and $d/n \to \gamma$.
% We explain this phenomenon by arguing that unexpectedly, for the $\ell_\infty$-robust
% logistic loss, a large $\ell_2$-penalty in fact enhances the
% bias of the estimator towards approximately sparse solutions.  As a
% consequence, for a sparse ground truth, both the robust and standard
% risk benefit from a non-zero regularization parameter.
%

% We consider a discriminative data-generating model for classification
% with isotropic Gaussian features $x\sim \Normal(0,I_d)$. Since the
% focus of this paper is on interpolation with noiseless data,
% we observe deterministic ground truth labels $\sgn \langle \thetatrue,\x \rangle$ for all covariates $x\in\R^d$ and fixed vector $\thetatrue$ in this section.
% We refer to \suppmat{}~\ref{sec:logregnoise}, \ref{sec:logregtheory} for results
% under observation noise.

Unlike linear regression, adversarially trained
binary logistic regression classifiers
may still interpolate the training data,
resulting in
\emph{robust} max-$\ltwo$-margin interpolators as $\lambda \to 0$.
Hence, in this section we train and evaluate classifiers
with $\ell_\infty$-perturbation sets
$\pertset{\infty}(\eps)$, a standard choice in the experimental and theoretical
classification literature \cite{Goodfellow15,Javanmard20b,Rice20,Sanyal21},
but also discuss $\ell_2$-perturbations in \suppmat~\ref{sec:logreg_l2_perturbations} for completeness.
Our theoretical results show that the robust max-$\ltwo$-margin interpolator with $\lambda \to 0$
has a worse robust risk than a regularized predictor with $\lambda >0$.

\subsection{Interpolating and regularized estimator} \label{sec:estimator}

%% When the goal is to achieve a low robust risk, a common choice is to
%% use adversarial training to learn
%% predictors~\cite{Goodfellow15}. \fy{this was mentioned above but
%% without reference...}
As discussed in Section~\ref{sec:linreg}, a common method to obtain robust
estimators is to use adversarial training. However, for linear regression,
adversarial training either renders interpolating estimators infeasible, or
requires oracle knowledge of the ground truth.
% which in the case of linear regression, however, either renders interpolating
% estimators infeasible or requires oracle knowledge of the ground truth.
%for linear regression prevents the estimator from interpolating the training
%data.  Since the goal is to achieve a low robust risk, it makes sense to study
%adversarially trained estimators. Yet, Section \ref{sec:linreg} on linear
%regression focuses on standard training as adversarial
%training~\cite{Goodfellow15} prevents the estimator from interpolating the
%training data.  when training with inconsistent attacks or result in the ground
%truth when training with consistent attacks on noiseless observations.
In contrast, for linear classification, interpolation is easier to achieve -- it
only requires the sign of $\langle x_i , \theta\rangle$ to be the same as the
label $y_i$ for all $i$.  In particular, when the data is sufficiently high
dimensional,  it is possible to find an interpolator of the adversarially
perturbed training set.

We study the robust ridge-regularized
logistic regression estimator with penalty weight $\lambda > 0$,
\begin{equation}
  \label{eq:logridge}
  \thetalambda \define \arg\min_\theta \frac{1}{n}\sum_{i=1}^n \max_{\delta \in \pertset{\infty}(\epstrain)}\log(1 + e^{-\langle \theta, x_i+\delta\rangle y_i}) + \lambda \|\theta\|_2^2.
\end{equation}
In the limit $\lambda \to 0$ the results in~\cite{Rosset04} imply that  the robust ridge-regularized logistic
regression estimator from Equation~\eqref{eq:logridge}
directionally aligns with the
robust
max-$\ell_2$-margin interpolator:\footnote{While~\cite{Rosset04} only proves the result
for $\eps = 0$, it is straightforward to extend it to the general case
where $\eps \geq 0$.}
%% Therefore it makes sense to study the
%% \fy{it makes sense is too colloquial}
% which is the interpolator given by the limit of adversarially trained \cite{Goodfellow15} logistic regression when $\lambda \to 0$.
% In contrast to regression, adversarially trained classifiers can still
% interpolate the training data without requiring oracle knowledge of the ground
% truth function.
% Since the goal is to achieve a low robust risk while
% interpolating the training data, we study the robust
% max-$\ell_2$-margin (or simply max-$\ell_2$-margin) estimator defined as follows:
\begin{align}
  \label{eq:maxmarginAE}
  \thetaminnorm \define &\arg\min_{\theta}
  \|\theta\|_2 ~~\mathrm{such~that} \min_{\delta\in
  \pertset{\infty}(\eps)} \y_i \langle \theta, x_i + \delta \rangle \geq 1 ~\mathrm{for~all}~ i.
  %% &\y_i \langle \theta, x_i\rangle - \epstrain \|\Proj_\perp \theta\|_1  \geq 1
  %%  , \nonumber
\end{align}

We say that the data is \emph{robustly separable} if the robust max-$\ell_2$-margin interpolator exists.
% the limit of the robust ridge-regularized logistic regression estimate $\thetalambda$ when $\lambda \to 0$.
% \fy{change order as in lin reg}

%We remark that studying the limitations of
The robust max-$\ell_2$-margin solution is an interpolating
 estimator of particular importance
 since
%because  for standard training,
it directionally aligns with the estimator found by gradient descent~\cite{Li2020}.
%  since, as argued in~\cite{Javanmard20b},
% %because  for standard training,
% it directionally aligns with the estimator found by gradient descent
% by an extension of~\cite{Ji19,Soudry18}.
%% converges in direction to the robust max-$\ltwo$-margin
%% interpolator~\cite{Ji19,Soudry18}.
%% In fact, as argued in~\cite{Javanmard20b}
%% this result can also be extended to estimators trained with
%% the robust loss from Equation~\eqref{eq:logridge}
%% in a straightforward way.
% Note that for robust seperable data, Equation \eqref{eq:logridge} has no
% unique solution for $\lambda =0$. However, the limit $\lambda \to 0$ points in
% the same direction as the max-$\ell_2$-margin solution \eqref{eq:maxmarginAE}.
Since the robust accuracy (i.e., the robust risk defined using the 0-1 loss)
% \fy{wondering if we can use the short-hand ``robust accuracy'' since some
% people consider expected logistic loss as classification risk as well}
is independent of the norm of the estimator, we simply refer to the robust
max-$\ell_2$-margin solution as the normalized vector
$\frac{\thetaminnorm}{\|\thetaminnorm\|_2}$.

In this paper, we study two
choices for the set of training perturbations $\pertset{\infty}(\epsilon)$:
\begin{align}
 &\text{inconsistent perturbations} &\pertset{\infty}(\epsilon) = \{\delta \in \R^d: \|\delta\|_\infty \leq
\epsilon\} \label{eq:inconsistentpert}\\
&\text{consistent perturbations}  &\pertset{\infty}(\epsilon)  =  \{\delta \in \R^d: \|\delta\|_\infty  \leq
\epsilon , \langle \delta, \thetastar \rangle  = 0\} \label{eq:consistentpert}
\end{align}

Adversarial training with respect to inconsistent
perturbations \eqref{eq:inconsistentpert} is a popular choice in the
literature to improve the robust risk (e.g.\ \cite{Goodfellow15,
Javanmard20b}).  However, perturbed samples
may cross the true decision boundary, and hence, inconsistent
perturbations effectively introduce noise during the training procedure.
In particular, in the data model with noiseless observations that we introduce in
Section~\ref{sec:mainlogreg}, the ground truth function misclassifies
approximately $8\%$ of the labels when perturbing the training data with
inconsistent perturbations of size $\epsilon = 0.1$.
%The terminology is derived by the fact that
%adversarial training with such perturbations is inconsistent in the
%infinite data limit for fixed $d$. The failure to recover the ground
%truth even for noiseless data is caused by perturbed samples that
%may cross the true decision boundary. In other words, inconsistent
%perturbations effectively introduce noise during the training
%procedure.

As mentioned in the introduction, in this paper we are interested in
verifying whether regularization can be beneficial in high dimensions
even in the absence of noise. 
Therefore, in the sequel we study the impact of
both inconsistent~\eqref{eq:inconsistentpert}
and consistent~\eqref{eq:consistentpert}
perturbations on robust overfitting.

\begin{figure*}
	\centering
	\arxivvspace{0.3em}
	\begin{subfigure}[t]{0.62\textwidth}
		\centering
		\includegraphics[width=3.6in]{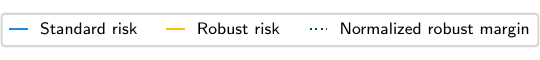}
		\arxivvspace{0.7em}
	\end{subfigure}
	\begin{subfigure}[l]{0.49\textwidth}
		\centering
		\includegraphics[width=2.6in]{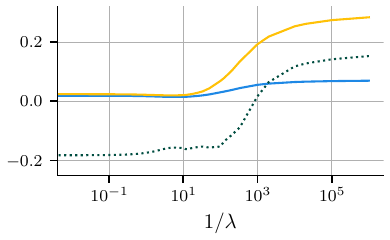}
		\caption{Benefit of ridge regularization}
		\label{fig:logreg_explanation_standard_and_robust_risk}
	\end{subfigure}
	\begin{subfigure}[r]{0.49\textwidth}
		\centering
		\includegraphics[width=2.6in]{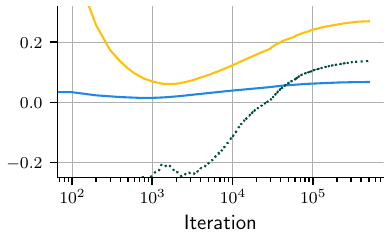}
		\caption{Benefit of early stopping}
		\label{fig:early_stopping}
	\end{subfigure}
	\arxivvspace{0.2em}
	\caption{
		Normalized robust margins and risks
		of empirical simulations using $\eps =0.1$ and $d/n=8$,
		with respect to
		(a) increasing $1/\lambda$ and
		(b) gradient descent iterations when minimizing
		Equation~\eqref{eq:logridge} using $\lambda=0$.
		Both ridge regularization and early stopping
		yield superior robust and standard risks.
		Each experiment uses $n = 10^3$
		and inconsistent $\ell_\infty$-perturbations for training.
		See Appendix~\ref{sec:appendix_experiments} for more details.}
	\label{fig:logreg_empirical}
	\neuripsvspace{-0.15in}
\end{figure*}

\subsection{Robust overfitting in noiseless linear classification}
\label{sec:mainlogreg}
We now show empirically that regularization helps to improve the
robust and standard risks when training with noiseless data and derive precise
asymptotic predictions for both risks.
Throughout this subsection we assume
deterministic, and hence, noiseless training labels, i.e.,~$y_i = \sgn \langle
x_i, \thetastar \rangle$. Furthermore, as we discuss in Section~\ref{sec:logreg_discussion},
the inductive bias of the $\ell_\infty$-robust logistic loss encourages sparse
solutions.
Since we are primarily interested in learning ground truth functions
that match the implicit bias of the estimator,
we assume the sparse ground truth $\thetastar = (1,0,\ldots,0)^T$.
 % \fy{need a sentence, what are you doing in this section? there's no
% transition to the next paragraph and I got lost what I should look out for}

We first show robust overfitting experimentally on noiseless data
when training with inconsistent perturbations and subsequently demonstrate that overfitting persists
even if the training procedure is completely noiseless
(i.e., using consistent training perturbations).
Finally, we provide theoretical evidence for our observations in the high-dimensional asymptotic limit.

%
%
%
% \fy{motivate why this also makes sense} In high dimensions, it is well known
% that inherent low-dimensional structure allow for efficient statistical rates.
% In particular, an additive $\ell_1$-norm penalty, can be used to achieve
% minimax optimal rates \fy{cite regression, but also 1-bit CS maybe}.  Even
% though the inductive bias of the $\ell_\infty$-robust logistic loss with
% $\lambda \to 0$ (that is, the robust max-$\ell_2$-margin interpolator)
% effectively introduces an additive $\ell_1$ penalty term, we show that
% choosing a non-zero ridge-penalty unexpectedly promotes better sparsity and
% generalization.
%

%
% prevent
% overfitting of the robust risk even for high-dimensional and noiseless data.
%
%
% We then derive the precise asymptotic values of the standard and robust
% risks and further provide insights into the causes of robust overfitting in this setting.
% If not stated differently,
% we assume deterministic and hence noiseless training labels,
% i.e.,~$y_i = \sgn \langle x_i, \thetastar \rangle$.

\paragraph{Training with inconsistent adversarial perturbations}
% \fy{next sentence is a mouthful, hard to read/long} We begin by presenting the
% empirical observation that adversarial training with inconsistent
% perturbations, which are most prominently considered in the literature,
% benefits from regularization even when the data is noiseless.
Figure~\ref{fig:logreg_explanation_standard_and_robust_risk} illustrates the
\emph{robust margin} \\$\min_{i}{ \min_{\delta\in
\pertset{\infty}(\eps)}{\frac{1}{\|\theta\|_2} \y_i \langle \theta, x_i + \delta \rangle }
} $
as well as the standard and robust risks of the estimator
$\thetalambda$ trained using inconsistent adversarial perturbations
on a synthetic data set
with fixed overparameterization ratio $d/n = 8$.
% Counterintuitively, we observe that optimizing the robust logistic loss
% \eqref{eq:logridge} well beyond the point when the estimator reaches $100\%$
% robust training accuracy (i.e., the robust margin becomes positive)
% substantially hurts generalization.
We observe that decreasing the ridge coefficient well beyond
the point where the minimizer of the robust logistic loss \eqref{eq:logridge}
reaches $100\%$ robust training accuracy (i.e., the robust margin becomes
positive) substantially hurts generalization.
% This is counterintuitive because... \fy{is it? why? especially after linear regression story?
% have to say that because of bias mismatch this is counterintuitive}

%% \fy{i think we should stress this more
%% coul}

% In particular, this challenges the common belief that training
% overparameterized classifiers even beyond the point of achieving $100\%$
% training accuracy improves generalization.
% We explain the benefits of ridge regularization in
% Appendix~\ref{sec:logreg_discussion} by arguing that, unexpectedly, for the
% $\ell_\infty$-robust logistic loss a large $\ell_2$-penalty in fact enhances the
% bias of the estimator towards approximately sparse solutions. As a consequence,
% for a sparse ground truth, both the robust and standard risk benefit from a
% non-zero regularization parameter, as indicated in
% Figure~\ref{fig:logreg_theory_combined}.

In addition to varying the ridge coefficient $\lambda$, we notice that the same
trends as for $\lambda \to 0$ %(or equivalently $1/\lambda\to \infty$)
also occur for the gradient descent
optimization path as the number of iterations $t$ goes to infinity.
% While there is a one-to-one correspondence between $\ell_2$-regularization and
% early stopping in the linear regression case (see
% Section~\ref{sec:linreg_early_stopping}), we are not aware of similar results
% for logistic regression. We leave the proof for the benefits of early stopping
% as interesting future work.  Figure~\ref{fig:early_stopping} indicates that
% early-stopped gradient descent can improve the robust risk for logistic
% regression, similarly to ridge regularization.  Both early stopping and ridge
% regularization avoid the max-$\ell_2$-margin solution that is obtained for $t
% \to \infty$ and $\lambda \to 0$ respectively, and yield an estimator with
% significantly lower standard and robust risks.
Figure~\ref{fig:early_stopping} indicates that, similarly to ridge
regularization, early stopping also avoids the robust max-$\ell_2$-margin solution that is
obtained for $t \to \infty$
and yields an estimator with significantly lower
standard and robust risks.

\arxivvspace{0.4em}
\paragraph{Training with consistent adversarial perturbations}
As discussed in Section~\ref{sec:estimator},
even for noiseless training data,
inconsistent perturbations can induce noise
during the training procedure.
% \fy{this whole paragraph sounds like you never wrote the explanations in 4.1.
% use it and refer back to it when you can and focus on the new stuff. I'd move
% the sentence with $8\%$ up, definitely merge with the paragraph in section 4.1.}
% still inadvertently cross the true decision boundary and therefore induce noise
% in the training procedure.
Hence, one could hypothesize that the
noise induced by the inconsistent perturbations causes the overfitting observed
in Figure~\ref{fig:logreg_empirical}.
To contradict this hypothesis, we also
study adversarial training with consistent perturbations. By definition,
consistent perturbations do not cross the true decision boundary
and hence leave the training data entirely noiseless.

Figure~\ref{fig:logreg_inconsistent_vs_consistent}
shows that the adversarially trained estimators \eqref{eq:logridge},\eqref{eq:maxmarginAE}
with consistent and inconsistent perturbations
yield comparable robust risks. Moreover, robust overfitting occurs in both
situations,
as the risk is higher for the interpolating estimator
$\lambda\to 0$ compared to an optimal $\lambda >0$.
% \fy{last sentence needs polish}
%% compares the robust risk of
%% the optimally regularized estimator $\thetalambda$ and the robust
%% max-$\ell_2$-margin interpolator when training with both inconsistent and
%% consistent perturbations.  We notice that the two estimators perform similarly
%% and, in particular, both suffer from robust overfitting.
Hence, our observations demonstrate that robust overfitting persists
even if training with consistent perturbations in an entirely noiseless setting.
This observation is counter intuitive since,
according to classical wisdom,
we would expect ridge regularization to only benefit
in noisy settings where the estimator suffers from a high variance.

% the hypothesis that the noise introduced by inconsistency of the adversarial
% perturbations is responsible for the overfitting in Figure
% \ref{fig:logreg_explanation_standard_and_robust_risk}.

We now prove this phenomenon using the next theorem.
In particular, similar
to Theorem~\ref{thm:main_thm_lr} for linear regression,
we show that
robust overfitting occurs in the high-dimensional asymptotic regime where $d/n \to \gamma$ as $d,n\to\infty$.
We state an informal version of the theorem in the main text
and refer to \suppmat{}~\ref{sec:logregtheory} for the precise statement. The
proof is inspired by the works \cite{Javanmard20b,Salehi19} and uses
the Convex Gaussian Minimax Theorem~(CGMT)~\cite{Gordon88,Thrampoulidis15}.

\begin{theorem}[Informal]
  \label{theo:logreg}
Assume that $\eps = \epsnull/\sqrt{d}$ for some constant $\epsnull$  and
${\thetastar = (1,0,\cdots,0)^T}$.
Then, the robust and standard risks of the regularized estimator~$\thetalambda$~\eqref{eq:logridge} ($\lambda > 0$)
and of the robust max-$\ell_2$-margin interpolator~\eqref{eq:maxmarginAE}~($\lambda \to 0)$ 
with inconsistent \eqref{eq:inconsistentpert} or consistent \eqref{eq:consistentpert}
adversarial $\ell_\infty$-perturbations
converge in probability as $d,n \to \infty$, $d/n \to \gamma$ to:
\neuripsvspace{-0.1in}
 \begin{align*}
 \SR(\thetalambda) &\to \frac{1}{\pi}\arccos\left(\frac{\nupar^\star}{\nu^\star}\right) ~~\mathrm{and}~~
  \AR(\thetalambda) \to \SR(\thetalambda) +  \frac{1}{2} \erf\left(\frac{\epsnull \delta^\star}{\sqrt{2} \nu^\star}\right) + I\left(\frac{\epsnull \delta^\star}{\nu^\star}, \frac{\nupar^\star}{\nu^\star}\right)
\end{align*}
We denote by $\erf(.)$ the error function,
\begin{equation*}
 I(t,u) \define \int_{0}^{t} \frac{1}{\sqrt{2\pi}}
 \exp\left(-\frac{x^2}{2}\right)\erf\left(\frac{x u}{\sqrt{2(1- u^2)}}\right)dx,
\end{equation*}
% \begin{align}
%  \|\Piper \thetalambda \|_2 \to \nuper^\star, ~~~~
%  \| \Pipar \thetalambda \|_2 \to  \nupar^\star, ~~~~ \mathrm{and}~~~~
%  \|\Piper \thetalambda \|_1 \to \delta^\star,
% \end{align}
and use the notation $\nu^\star = \sqrt{(\nuper^\star)^2 + (\nupar^\star)^2}$,
where $\nuper^\star,\nupar^\star,\delta^\star$ are the unique solution of a scalar optimization problem specified in Appendix~\ref{sec:logregtheory} that depends on $\thetatrue, \gamma, \epsnull$ and $\lambda$.
%Let $\gamma^\star$ be the asymptotic
%interpolation threshold from Proposition \ref{prop:gammastar} and
%$(\nuper^\star,\nupar^\star,\delta^\star)$ be given by the unique solution of
%one of the following scalar optimization problems:
%\begin{enumerate}
% \item The problem in Equation \eqref{eq:nonseperable} if $\lambda > 0$ or $\gamma < \gamma^\star$
% \item The problem in Equation \eqref{eq:seperable} if $\lambda = 0$ and $\gamma > \gamma^\star$
%\end{enumerate}
\end{theorem}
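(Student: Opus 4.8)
The plan is to use the Convex Gaussian Minimax Theorem (CGMT) program \cite{Gordon88,Thrampoulidis15}, in the form deployed for regularized and robust logistic regression in \cite{Salehi19,Javanmard20b}. First I would \emph{reduce the robust training loss to a penalized ERM}: since $z\mapsto\log(1+e^{-z})$ is decreasing and $y_i^2=1$, the per-sample inner maximization over $\delta\in\pertset{\infty}(\epstrain)$ in \eqref{eq:logridge} is explicit --- the worst-case shift adds $-\epstrain\|\theta\|_{1,\mathrm{eff}}$ to $y_i\langle\theta,x_i+\delta\rangle$, where (using $\thetastar=e_1$) $\|\theta\|_{1,\mathrm{eff}}=\|\theta\|_1$ for inconsistent perturbations \eqref{eq:inconsistentpert} and $\|\theta\|_{1,\mathrm{eff}}=\|\theta\|_1-|\theta_1|$ for consistent ones \eqref{eq:consistentpert} --- so $\thetalambda$ minimizes $\tfrac1n\sum_i\log\!\big(1+e^{-(y_i\langle\theta,x_i\rangle-\epstrain\|\theta\|_{1,\mathrm{eff}})}\big)+\lambda\|\theta\|_2^2$, which for $\lambda\to0$ aligns in direction with the solution of the convex feasibility (robust max-$\ell_2$-margin) problem \eqref{eq:maxmarginAE}. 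Because $\epstrain=\epsnull/\sqrt d$ and the relevant solution has bounded $\ell_2$-norm in the robustly separable regime, the two effective $\ell_1$-norms differ by $\epstrain|(\thetalambda)_1|=o(1)$, which is why the theorem gives the same limit for both perturbation sets. I would then \emph{expose a Gaussian bilinear form}: rotate coordinates so $\thetastar=e_1$, split $\theta=\nupar e_1+\theta_\perp$ with $\theta_\perp\perp e_1$, and use that the noiseless labels $y_i=\sgn(x_{i1})$ depend only on the first coordinate, hence are independent of the matrix $X_\perp$ of remaining coordinates, so $\mathrm{diag}(y)X_\perp\overset{d}{=}X_\perp$ is still i.i.d.\ Gaussian. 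Introducing multipliers $u\in\R^n$ for $a_i=y_i\langle\theta,x_i\rangle-\epstrain\|\theta\|_{1,\mathrm{eff}}$ turns the objective into a min-max whose only term coupling $u$ to $\theta_\perp$ is the bilinear form $\tfrac1n u^\top(\mathrm{diag}(y)X_\perp)\theta_\perp$; every other term depends on $\theta$ only through $\nupar$, $\|\theta_\perp\|_2$ and $\|\theta\|_{1,\mathrm{eff}}$ and is a sum of i.i.d.\ scalar terms.

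Next I would pass to the \emph{auxiliary problem} and \emph{scalarize} it. The ridge penalty bounds $\|\thetalambda\|_2$ a priori (for $\lambda\to0$ one instead bounds the robust max-$\ell_2$-margin norm, having first shown the perturbed data is robustly separable with high probability), so one may restrict to compact convex sets and apply the CGMT, replacing $\tfrac1n u^\top(\mathrm{diag}(y)X_\perp)\theta_\perp$ by $\tfrac1n(\|\theta_\perp\|_2\,g^\top u+\|u\|_2\,h^\top\theta_\perp)$ for independent standard Gaussians $g\in\R^n,\ h\in\R^{d-1}$. In the resulting auxiliary optimization, the identity $\|\theta_\perp\|_2=\min_{\tau>0}\{\|\theta_\perp\|_2^2/(2\tau)+\tau/2\}$ and its analogue for $\|u\|_2$ decouple the problem over the coordinates of $\theta_\perp$ and of $u$: each $\theta_\perp$-coordinate solves a one-dimensional quadratic-plus-$\ell_1$ problem (a soft-thresholding of a Gaussian), each $u$-coordinate a one-dimensional Moreau-envelope problem for $\log(1+e^{-\cdot})$. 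Gaussian concentration and the strong law then replace the $n$ and $d-1\approx d$ empirical averages by deterministic expectations, leaving a deterministic min-max over finitely many scalars --- the parallel component, $\nuper:=\|\theta_\perp\|_2$, the two scaling parameters and the dual scalars --- whose unique saddle point one identifies with $(\nupar^\star,\nuper^\star,\delta^\star,\dots)$, where $\delta^\star=\lim\|\thetalambda\|_{1,\mathrm{eff}}/\sqrt d$, so that $\epstrain\|\thetalambda\|_{1,\mathrm{eff}}\to\epsnull\delta^\star$.

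Finally I would \emph{translate the scalars into risks}. The CGMT transfers convergence in probability of the auxiliary value and, by strong convexity, of the summary statistics $\nupar,\ \|\theta_\perp\|_2,\ \|\theta\|_{1,\mathrm{eff}}/\sqrt d$ of the minimizer, back to $\thetalambda$. Writing $\langle\thetalambda,X\rangle=\nupar X_1+G_\perp$ with $G_\perp\sim\Normal(0,\nuper^2)$ independent of $X_1$, the $0/1$ standard risk is $\Pr[\sgn(\nupar X_1+G_\perp)\neq\sgn(X_1)]=\tfrac1\pi\arccos(\nupar^\star/\nu^\star)$ with $\nu^\star=\sqrt{(\nupar^\star)^2+(\nuper^\star)^2}$, and solving the inner $\ell_\infty$-maximization in \eqref{eq:AR} exactly as above gives $\AR(\thetalambda)=\Pr[\sgn(X_1)(\nupar X_1+G_\perp)<\epstrain\|\thetalambda\|_{1,\mathrm{eff}}]$; subtracting off the $\epstrain=0$ part of this one-dimensional Gaussian integral and changing variables yields $\SR(\thetalambda)+\tfrac12\erf\!\big(\epsnull\delta^\star/(\sqrt2\,\nu^\star)\big)+I\!\big(\epsnull\delta^\star/\nu^\star,\ \nupar^\star/\nu^\star\big)$. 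The $\lambda\to0$ statement then follows either by passing to the limit in the scalar system (using that $\thetalambda/\|\thetalambda\|_2$ converges to the direction of $\thetaminnorm$ \cite{Rosset04}) or by running the whole argument directly on the feasibility problem \eqref{eq:maxmarginAE}.

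The step I expect to be the main obstacle is the scalarization: carrying it out in the presence of the two $\ell_1$ terms introduces the proximal/soft-thresholding structure and requires concentration of the $\approx d$-dimensional empirical averages uniformly over the scalar parameters. A further source of work is verifying the CGMT compactness hypotheses in the interpolating regime $\lambda\to0$, which forces one first to establish robust separability and a uniform upper bound on the robust max-$\ell_2$-margin norm, and then to justify interchanging the $\lambda\to0$ and $d,n\to\infty$ limits.
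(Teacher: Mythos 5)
Your proposal follows essentially the same route as the paper's proof: reduce the inner maximization to a closed-form $\ell_1$-penalized margin (Lemma~\ref{lm:logregrisk}), exploit independence of $D_y X \Piper$ from the labels to apply the CGMT, scalarize via conjugate/Moreau-envelope and soft-thresholding arguments with a separate treatment of the $\lambda \to 0$ max-margin feasibility problem, and translate $(\nupar^\star, \nuper^\star, \delta^\star)$ into the risks via the Gaussian integral of Lemma~\ref{lm:logregrisk}. You also correctly identify the same technical pressure points the paper addresses (compactness for the CGMT in the separable regime, the vanishing $\eps\|\Pipar\theta\|_1$ gap between consistent and inconsistent perturbations), so there is nothing substantive to add.
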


Since the theoretical expressions are hard to interpret,
we visualize the asymptotic values of the standard and robust risks
from Theorem~\ref{theo:logreg}
in Figure~\ref{fig:logreg_theory_combined}
by solving the scalar optimization problem
specified in Appendix~\ref{sec:logregtheory}.
%% Figure~\ref{fig:logreg_theory_combined} illustrates the asymptotic theoretical
%% predictions obtained from Theorem~\ref{theo:logreg}  for the standard and robust
%% risks for $d,n\to\infty$ and $d/n \to \gamma$ when training with consistent
%% perturbations and noiseless data.
We observe that Theorem~\ref{theo:logreg} indeed predicts the benefits of
regularization for robust logistic regression and that simulations using finite values of $d$ and $n$
follow the asymptotic trend.
% The trend revealed by the asymptotic curves is closely followed by the risks
% obtained using simulations for finite values of $d$ and $n$.  and $\eps =
% 0.1$.
%% We obtain the theoretical curves by solving the scalar optimization problem
%% specified in Appendix~\ref{sec:logregtheory}.
We describe the full empirical setup in Appendix~\ref{sec:appendix_experiments}.

\begin{figure*}
	\centering
	\begin{subfigure}{0.49\textwidth}
		\centering
		\includegraphics[width=2.6in]{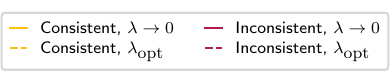}
		\includegraphics[width=2.6in]{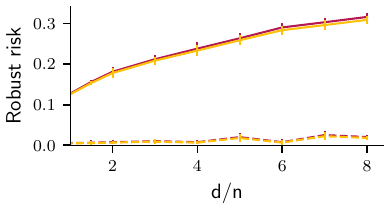}
		\caption{Fixed $\epstrain = 0.1$ for increasing $d$}
		\label{fig:logreg_inconsistent_vs_consistent}
	\end{subfigure}
	\begin{subfigure}{0.49\textwidth}
		\centering
		\includegraphics[width=2.6in]{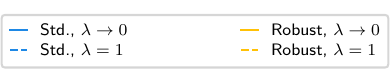}
		\includegraphics[width=2.6in]{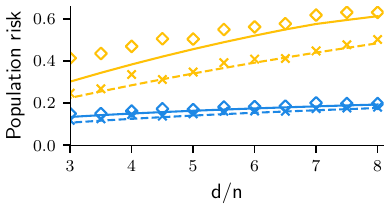}
		\caption{Fixed $n,d$ and asymptotic predictions}
		\label{fig:logreg_theory_combined}
	\end{subfigure}
	
    \caption{ (a) Comparison of consistent and inconsistent
      $\ell_\infty$-perturbations for adversarial logistic regression  with
      respect to the degree of overparameterization $d/n$, using $\epstrain =
      0.1$ for both training and evaluation.  Note that both estimators behave
      very similarly, implying that the effect of inconsistency is negligible
      for small $\eps$.  (b) Robust and standard risks of the robust
      max-$\ell_2$-margin interpolator ($\lambda \to 0$) and robust ridge
      estimate ($\lambda=1$) with consistent perturbations
      \eqref{eq:consistentpert} using $\epstrain = 0.05$ as a function of the
      overparameterization ratio $d/n$ for simulations (markers) and asymptotic
      theoretical predictions from Theorem~\ref{theo:logreg} (lines). We note
      that, for small values of $\gamma$, solving the optimization problem that
      gives the theoretical predictions becomes numerically unstable. All
      simulations use $n=10^3$ samples from our data model; see \suppmat{}
      \ref{sec:appendix_experiments} for further experimental details.  
%     that the solving the optimization problem that gives the theoretical
      %     predictions is numerically challenging, in particular for small
      %     values of $\gamma$. 
}

	\label{fig:logreg_theory}
	\neuripsvspace{-0.1in}
\end{figure*}

\subsection{Intuitive explanation and discussion}
\label{sec:logreg_discussion}
%\begin{figure*}[thbp]

Even though we explicitly derive the precise asymptotic expressions of the
standard and robust risks in Theorem~\ref{theo:logreg} that predict the benefits
of regularization for generalization, it is difficult to extract intuitive
explanations for this phenomenon directly from the proof.
We conjecture that a
non-zero ridge penalty induces a more sparse $\thetalambda$ (i.e., with a smaller $\ell_1$/$\ell_2$-norm ratio) than the robust
max-$\ell_2$-margin solution $\thetamaxmarg$ and use simulations to support our
claim. Since the $\linf$-adversarially robust risk penalizes dense solutions
with large ratio of the $\lone/\ell_2$-norms (see Lemma~\ref{lm:logregrisk} in
\suppmat{}~\ref{sec:logrobrisk}), we expect more sparse estimators
to have a lower robust risk. Indeed, Figure \ref{fig:logreg_explanation_sparsity} shows that the $\ell_1/\ell_2$-norm ratio strongly correlates with the robust risk of the estimator.

% we can conclude that ridge regularization helps to
% find sparser estimators, and hence with lower robust risk.

% \fy{this is very abrupt. You wanna say:
	% However, we conjecture
	% % and provide
	% % experimental evidence to support our claim}
% % Instead, we provide intuition for the surprising
% % observation
% that a non-zero ridge penalty induces a sparser $\thetalambda$
% than the robust max-$\ell_2$-margin solution $\thetaeps$ and use simulations to support
% our claims.
% However, using simulations, we observe in Figure~\ref{fig:logreg_explanation_sparsity} that the increase in the robust risk with respect ot $1/\lambda$ is
% strongly correlated with an incrase of in the ratio of the $\ell_1/\ell_2$-norm, which is a surrogate for the sparsity of the estimator $\thetalambda$ (small $\ell_1/\ell_2$-norm indicates sparsity). To get a better understanding for this phenomenon, we seperately study the cases for large and small $\lambda$.

% and provide
% experimental evidence to support our claim}
% Instead, we provide intuition for the surprising
% observation
% that a non-zero ridge penalty induces a sparser $\thetalambda$
% than the robust max-$\ell_2$-margin solution $\thetaeps$ and use simulations to support
% our claims.
% In particular, we observe show that the decrease in sparsity is strongly correlated with an increase of the robust risk.

We begin by noting that, due to Lagrangian duality, minimizing the ridge-penalized loss~\eqref{eq:logridge}
corresponds to minimizing the unregularized loss constrained to the set of
estimators $\theta$ with a bounded $\ell_2$-norm. This norm decreases as the ridge coefficient $\lambda$ increases.
In what follows, we provide intuition on the effect of the ridge penalty $\lambda$ on the
sparsity of the estimator $\thetalambda$.

\vspace{-0.2cm}
\paragraph{Large $\lambda$ inducing a small $\ell_2$-norm}
% \fy{the following is not language polished}
We first analyze the regularized estimator $\thetalambda$ for large $\lambda$ that constrains solutions to
have small $\ell_2$-norm.
%% $\ell_\infty$-adversarial training on the logistic loss
%% that corresponds to minimizing the objective in~\eqref{eq:logridge}.
% Note that via Lagrangian duality, large $\lambda$ corresponds to minimization
% of the unregularized loss constrained to the set of $\theta$ with small norm.
% For estimators satisfying this constraint, we
We can therefore use Taylor's theorem and the closed-form expression of
adversarial perturbations (see Lemma~\ref{lm:logregrisk} in
\suppmat{}~\ref{sec:logrobrisk}) to approximate the
unregularized robust loss from Equation~\eqref{eq:logridge} as follows:

\begin{equation}
\label{eq:Taylorexp}
\frac{1}{n} \sum_{i=1}^n \log (1 + e^{-y_i \langle \theta,x_i \rangle + \epsilon \|\Piper \theta\|_1})  \approx  \frac{1}{n}\sum_{i=1}^n -\y_i
\langle x_i, \theta\rangle + \epstrain \| \Piper \theta\|_1.
\end{equation}

%% As shown in \suppmat{}~\ref{sec:logregtheory}, the unregularized
%% adversarial robust loss $\empriskrob$
%% with consistent $\ell_\infty$-perturbations
%% can be written as \fy{not very precise, since the approximation is not the loss anymore}
%% \fy{also I completely don't understand whats going on. unregularized has $\lambda$
%% as subscript and then taylor only holds for small norm???}
%% \begin{equation}
%% \label{eq:Taylorexp}
%% \empriskrobunreg(\theta) = \frac{1}{n}\sum_{i=1}^n \log (1 + e^{-y_i
	%% \langle \theta,x_i \rangle + \epsilon \|\Piper \theta\|_1}) \approx - \frac{1}{n}\sum_{i=1}^n \y_i
%% \langle x_i, \theta\rangle + \epstrain \| \Piper \theta\|_1,
%% \end{equation}
%% where the approximation follows from the Taylor series
%% expansion
As a consequence, the minimizer $\thetalambda$ should result in a large
\emph{robust average margin}
% \fy{why mean not average? also in legend it says mean robust margin. ->fix.
% also \emph{normalized}?}
solution, that is, a solution with large $\frac{1}{n } \sum_{i=1}^n
\frac{1}{\|\theta\|_2}\left(y_i \langle x_i, \theta \rangle - \eps \| \Piper
\theta\|_1 \right)$.  Indeed, we observe this using simulations for finite $d,
n$ in
Figures~\ref{fig:logreg_explanation_margins}~and~\ref{fig:logreg_explanation_robust_margin_decay}.
In particular, the objective in Equation~\eqref{eq:Taylorexp} leads to a
trade-off between the sparsity of the estimator (via its convex surrogate, the $\ell_1$-norm) and an \emph{average} of the sample-wise margins $y_i \langle x_i,
\theta \rangle $.  We note that such estimators
% estimators which maximize the average over the margins together with an
% $\ell_1$-penalty term as in Equation~\eqref{eq:Taylorexp}
have been well studied in the literature and are known to achieve good
performance in recovering sparse ground truths~\cite{Bartlett98,Goa13,Plan12}.

\vspace{-0.2cm}
\paragraph{Small $\lambda$ inducing a large $\ell_2$-norm}
%  A small $\|\thetahat\|_2$ leads to a large robust \emph{mean margin}
%  solution, i.e.,large $\frac{1}{n \|\theta\|_2} \sum_{i=1}^n \left( \langle
%  x_i, \theta \rangle - \eps \| \Piper \theta\|_1 \right)$, since \emph{all
%  training points} contribute to the loss, most easily seen via the Taylor
%  series expansion $\empriskrobunreg(\theta)\approx - \frac{1}{n}\sum_{i=1}^n
%  \y_i \langle x_i, \theta\rangle + \epstrain \| \Piper \theta\|_1$. This can
%  also be observed in Figure~\ref{fig:logreg_explanation_robust_margin_decay}.
In contrast, a small ridge coefficient $\lambda$ leads to estimators
$\thetalambda$ with large $\ell_2$-norms.  In this case, the estimator
approaches a large \emph{robust (minimum) margin} solution, i.e., a solution
with large $\min_{i}{\frac{1}{\|\theta\|_2} \left( y_i \langle x_i, \theta
\rangle - \eps \| \Piper \theta\|_1 \right)}$ which is maximized by the robust
max-$\ell_2$-margin interpolator~\eqref{eq:maxmarginAE}.  As a consequence, this
leads to a trade-off between estimator sparsity and the robust minimum margin
$\min_i \frac{1}{\|\theta\|_2} y_i \langle x_i, \theta\rangle$.  Due to the high
dimensionality of the input data, the training samples $x_i$ are approximately
orthogonal.  Thus, to achieve a non-vanishing robust margin, estimators are
forced to trade-off sparsity with \emph{all} sample-wise margins instead of just
the average.
% Therefore, we expect the solution to be less sparse compared to the one of a
% strongly regularized estimator.
We reveal this trade-off in
Figures~\ref{fig:logreg_explanation_sparsity}~and~\ref{fig:logreg_explanation_margins}
where the increase in $\ell_1/\ell_2$-norm ratios corresponds to a decrease in
the robust average margin and an increase in the robust margin.
% In particular, we note that the ground truth has itself a vanishing robust
% margin (horizontal dashed line), which indicates that the optimization
% objective has a bias towards suboptimal solutions.
\\ \\ Finally, we observe that the \emph{sparse} ground truth is characterized
by a large robust average margin (horizontal dotted line in
Figure~\ref{fig:logreg_explanation_margins}) and a small (minimum) robust margin
(horizontal dashed line). Therefore, we expect that the solution that is sparser
and which satisfies the same properties for the robust margin as the ground
truth $\thetastar$, will achieve lower robust and standard risks. Indeed, our
findings indicate that the regularized estimator $\thetalambda$ for large
$\lambda$ aligns better with $\thetastar$, compared to the solution obtained for
a small $\lambda$,
and hence justify the better performance of ridge-regularized
predictors.

\begin{figure*}
	\begin{subfigure}[b]{0.32\textwidth}
		\centering
		\includegraphics[width=1.6in]{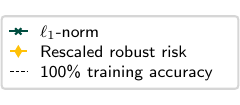}
		\includegraphics[width=1.8in]{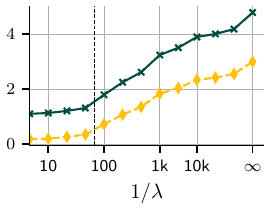}
		\caption{Sparsity of $\thetalambda$}
		\label{fig:logreg_explanation_sparsity}
	\end{subfigure}
	\begin{subfigure}[b]{0.32\textwidth}
		\centering
		\includegraphics[width=1.6in]{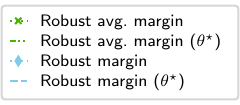}
		\includegraphics[width=1.8in]{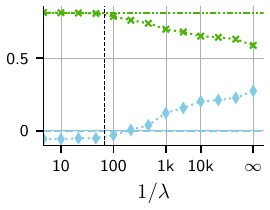}
		\caption{Robust margin of $\thetalambda$}
		\label{fig:logreg_explanation_margins}
	\end{subfigure}
	\begin{subfigure}[b]{0.32\textwidth}
		\centering
		\includegraphics[width=1.6in]{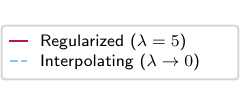}
		\includegraphics[width=1.8in]{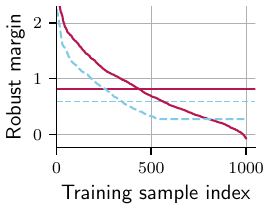}
		\caption{Robust margin decay of $\thetalambda$}
		\label{fig:logreg_explanation_robust_margin_decay}
	\end{subfigure}
	\caption{
		(a) The $\ell_1$-norm and the rescaled (by a factor of 10) robust risk of the estimator with respect to
		$1/\lambda$.
		(b) The robust average
		margin contrasted to the robust margin as a function of $1/\lambda$.
		The horizontal lines denote the corresponding values for
		$\thetatrue$.
		(c) The ordered sample-wise robust margins $y_i \langle
		x_i, \theta\rangle - \epsilon\|\Piper\theta\|_1$ when
		interpolating and regularizing.  For larger $\lambda$, the
		robust (\emph{minimum}) margin decreases while the robust average margin (horizontal
		lines) increases.
		We normalize the estimators, i.e., $\|\thetalambda\|_2 = 1$, for all curves presented in the plots;
		see \suppmat{} \ref{sec:appendix_experiments} for further
		experimental details.
	}
	\label{fig:logreg_explanation}
	\neuripsvspace{-0.2in}
\end{figure*}

\neuripsvspace{-0.2cm}
\subsection{Benefits of an unorthodox way to avoid the robust
max-\texorpdfstring{$\ell_2$}{l2}-margin interpolator}
%\subsection{The benefits of avoiding robust separability} \section{Preventing
%interpolation via noisy adversarial training}
\label{sec:non_separable_data}
% \fy{if you write separable here you might want to write robustly separable
% instead of noiseless as well?  doesn't tie in very nicely right now} Instead
% of preventing interpolation via regularization, we now consider two situations
% in which the noise in the training data does not allow for an interpolating
% solution to exist.
In the previous subsections we focused on robustly separable data and studied
the generalization performance of regularized estimators that do not maximize
the robust margin.
%% avoided convergence to the max-$\ell_2$-margin solution
%% through explicit regularization.
Another way to avoid the robust max-$\ell_2$-margin solution is to introduce
enough label noise in the training data. We now show that, unexpectedly, this
unorthodox way to avoid the robust max-margin solution can also yield an
estimator with better robust generalization than the robust max-$\ell_2$-margin
solution of the corresponding noiseless problem.

Specifically, in our experiments we introduce noise
%in the training objective
by flipping the labels of a fixed fraction of the training data.
Figure~\ref{fig:logreg_nonseparable} shows the robust and standard
risks together with the training loss of the estimator $\thetalambda$
from Equation~\eqref{eq:logridge} trained with consistent
perturbations for $\lambda \to 0$ with varying fractions of flipped
labels. For low noise levels, the data is robustly separable and the
training loss vanishes at convergence, yielding the robust
max-$\ell_2$-margin solution in Equation~\eqref{eq:maxmarginAE}.  For
high enough noise levels, the constraints in
Equation~\eqref{eq:maxmarginAE} become infeasible and the training
loss of the resulting estimator starts to increase.  As discussed in
Subsection~\ref{sec:logreg_discussion}, this estimator has a better
implicit bias than the robust max-$\ell_2$-margin interpolator and
hence achieves a lower robust risk.
%estimator \eqref{eq:logridge}.
%% We plot the risks
%% as a
%% function of the percentage of the fraction of wrong labels in the training
%% set. Starting from 100\% clean labels, flipping labels in the
%% dataset initially increases the risk of the estimator as
%% expected. When further increasing the percentage of clean samples
%% however, the data is no longer robustly seperable, as the robust $0$-$1$ loss becomes non-zero, and hence $\lambda \to 0$ does no longer converge to a max-$\ell_2$-margin solution.

Even though it is well known that introducing covariate noise can induce implicit
regularization~\cite{Bishop95}, our observations show that
in contrast to common intuition,
the robust risk also decreases when introducing wrong labels in
the training loss. In parallel to our work, the paper \cite{Lee21} shows that
training with corrupted labels can be beneficial for the standard risk.

However, we emphasize here that we do not advocate in favor of artificial label
noise as a means to obtain more robust classifiers. In particular, even if the
data is not robustly separable, the estimator with optimal ridge parameter
$\lambdaopt$ in Figure~\ref{fig:logreg_nonseparable} still always outperforms
the unregularized solution.
% In brief, label noise mitigates interpolation and hence leads to implicit
% regularization.

% Finally, we remark that from a practical point of view, this
% section does not motivate to add artificial label noise to the
% training dataset, but motivates the use of regularization. \kd{more specific here}

\begin{figure*}
	\centering
	\begin{subfigure}[t]{\textwidth}
		\centering
		\includegraphics[width=3.6in]{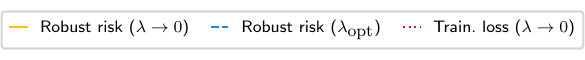}
	\end{subfigure}
	\begin{subfigure}[b]{\textwidth}
		\centering
		\includegraphics[width=3.6in]{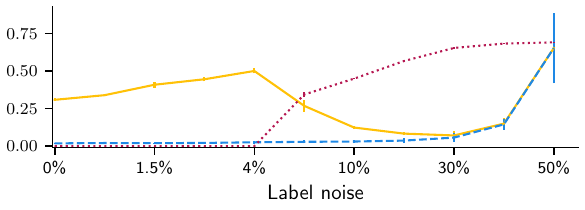}
	\end{subfigure}
	
	\caption{
		Training loss and robust risks with respect to increasing
		training label noise for $\eps=0.1$, $d=8\times 10^3$ and $n=10^3$. We observe for
		unregularized estimators ($\lambda \to 0$) that, counterintuitively,
		moderate amounts of label noise decrease the robust risk by avoiding the
		robust max-$\ell_2$-margin solution.  While this might spuriously imply that
		injecting label noise increases robustness, estimators with optimal ridge
		parameter $\lambdaopt$ still outperform their unregularized counterparts
		in terms of robust risk.
		Since the setting is noisy, we average the risks over five
		independent dataset draws and indicate standard deviations
		via error bars.
	}
	\label{fig:logreg_nonseparable}
	\neuripsvspace{-0.1in}
\end{figure*}

% Firstly,
% we flip $10\%$ of the training labels and use consistent perturbations for
% adversarial training, as described in Section~\ref{sec:setting}. When the
% percentage of label flips and the radius $\epstrain$ for consistent
% perturbations are large enough, the max-$\ell_2$-margin estimator \eqref{eq:maxmarginAE}
% does not exist and hence, the minimizer of the adversarially robust logistic
% loss, i.e.\ \eqref{eq:logridge} for $\lambda \to 0$, does not interpolate.
% Figure~\ref{fig:noisy_data} shows the robust and standard risk together with the
% robust $0-1$ training loss as a function of the percentage of correct labels in
% the training set. Starting from 100\% clean labels and adding noisy labels in
% the data set initially increases the risk of the estimator as expected. When
% further increasing the percentage of noisy samples however, the estimator loses
% the ability to interpolate the perturbed training data, leading to a decrease in
% the risk, similar to the effect observed for explicit regularization. \at{point
% to appendix for more details}
%
%
% Finally, we remark
% that while adding artificial label noise is not desirable in practice, the
% observation that it benefits the robust risk highlights once more that
% non-interpolating estimators have advantages that may be overlooked by the
% classical analyses that focus on the standard risk.

% In particular, training on the data with $20\%$ label flips results in a
% robust risk that is $\sim 25\%$ lower than training on the clean data set
% without any mislabeled samples.

Finally, we remark that a similar effect can also be observed when training with
inconsistent perturbations
with large perturbation norm $\epsilon$.
We refer to \suppmat{}~\ref{sec:inconsistent} for further discussion.

\neuripsvspace{-0.1cm}
\section{Related work}
%
% \rh{A good place for this would be as a sub-section at the end of the introduction, after stating contributions.}
% %
% \paragraph{Regularization in the overparameterized regime.}
% A long list of recent works focus on arguing that overparameterization reduces variance and hence makes regularization redundant \cite{Belkin19, Mei19,
%   Nakkiran20}. Similarly for kernels in high dimensions ...

\neuripsvspace{-0.2cm}
\paragraph{Understanding robust overfitting}
% Since the discovery that the robust error overfits even for highly overparameterization deep neureal networks \cite{Rice20}, a number of
% works have since tried to explain this intriguing phenomenon.
The current literature attempting to explain
robust overfitting~\cite{Rice20}
primarily focuses on the effect of noise and on the smoothness of decision boundaries
learned by neural networks trained to convergence~\cite{Dong21,Sanyal21,Wu21}.
A slightly different line of work~\cite{Sagawa20a} shows that
overparameterized models require regularization
in order to achieve good classification accuracy on
all subpopulations.
However, a theoretical understanding of the role of regularization is still missing.
% \fy{don't understand the
%   shiori reference here}

\neuripsvspace{-0.2cm}
\paragraph{Theory for adversarial robustness of linear models}
Recent works~\cite{Javanmard20a,Javanmard20b}
provide a precise description of the robust risks for logistic and linear regression when trained with adversarial attacks based on the Convex Gaussian Minimax Theorem~\cite{Gordon88,Thrampoulidis15}.
The analysis focuses on inconsistent attacks for
both training and evaluation.
For linear regression, the authors observe that adversarial
training with $\ell_2$-perturbations mitigates
the peak in the double descent
curve around $d = n$ and hence acts similarly to ridge regularization
as explicitly studied in~\cite{Hastie19}.
Several other works focus on the role of gradient descent
in robust logistic regression.
In particular, \cite{Zou21}~proves that early stopping
yields robust adversarially-trained linear classifiers
even in the presence of noise.
Furthermore, the results of~\cite{Li2020} show that
gradient descent on robustly separable data converges to
the robust max-$\ell_2$-margin estimator~\eqref{eq:maxmarginAE}.

\section{Conclusion and future work}

% Highly overparameterized models enjoy inductive biases that make them generalize
% well even for noisy data and in the absence of explicit regularization. In this
% work we offer a different perspective and argue that interpolation can indeed be
% suboptimal when the goal is robustness to adversarial attacks or distribution
% shifts. We show that ridge regularization .....
% Moreover, our work provides both a rigorous analysis and insights about robust
% overfitting, showing that it also occurs in noiseless and linear settings.

In this work, we show that overparameterized linear models can overfit with
respect to the robust risk even when there is no noise in the training data.
Our results challenge the modern narrative that interpolating
overparameterized models yield good performance
without explicit regularization
and motivate the use of ridge regularization and early stopping for improved
robust generalization.
% \rh{``into perspective'' is vague, I honestly don't know what is meant by that. Let's have a concrete sentence here.}
Perhaps surprisingly, we further observe that ridge
regularization enhances the bias of logistic regression trained with adversarial
$\ell_{\infty}$-attacks towards sparser solutions,
indicating that the impact of
explicit regularization may go well beyond variance reduction.

\neuripsvspace{-0.2cm}
\paragraph{Future work}
Our simulations indicate that early stopping yields similar benefits
as ridge regularization in noiseless settings.
However, we leave a formal proof for future work.
Furthermore, the double descent phenomenon has been proven for
the standard risk on a broad variety of data distributions
and even for non-linear models such as random feature regression.
It is still unclear how our results translate to these settings.
In particular, our theoretical analysis
heavily makes use of the closed-form solution of the adversarial attacks, and
hence, cannot be applied to non-linear models in a straightforward way.

% While this paper only considers robustness with respect to adversarial attacks, overfitting in overparameterized regimes has also been observed for different loss functions including the worst group risk \cite{Sagawa20b}.
% what other scenarios it manifests in. we only show it for adv risk, that Sagawa paper shows it for worst group risk, but there are other evaluation metrics that may similarly suffer from overfitting, even though std risk never overfits
%
%
%
% Finally, we show overfitting for models which are biased, meaning that they do not achieve a vanishing standard risk as $d,n \to \infty$. Yet, the role of the implicit bias is unclear

\section*{Acknowledgments}

K.\ D.\ is supported by the ETH AI Center and the ETH Foundations of Data Science.
R.\ H.\ is supported by the IAS at TUM, the DFG (German Research Foundation), and by the NSF under award IIS-1816986.

\newpage

\bibliographystyle{abbrvnat}

\bibliography{bibliography}

%%%%%%%%%%%%%%%%%%%%%%%%%%%%%%%%%%%%%%%%%%%%%%%%%%%%%%%%%%%%
\section*{Checklist}

\begin{enumerate}

\item For all authors...
\begin{enumerate}
  \item Do the main claims made in the abstract and introduction accurately reflect the paper's contributions and scope?
    \answerYes
  \item Did you describe the limitations of your work?
    \answerYes{ We specified the models for which our results hold. }
  \item Did you discuss any potential negative societal impacts of your work?
    \answerNA
  \item Have you read the ethics review guidelines and ensured that your paper conforms to them?
    \answerYes
\end{enumerate}

\item If you are including theoretical results...
\begin{enumerate}
  \item Did you state the full set of assumptions of all theoretical results?
    \answerYes
	\item Did you include complete proofs of all theoretical results?
    \answerYes{See Section~\ref{sec:linregtheory},\ref{sec:logregtheory}}
\end{enumerate}

\item If you ran experiments...
\begin{enumerate}
  \item Did you include the code, data, and instructions needed to reproduce the main experimental results (either in the supplemental material or as a URL)?
    \answerYes{See Section~\ref{sec:appendix_experiments}.}
  \item Did you specify all the training details (e.g., data splits, hyperparameters, how they were chosen)?
    \answerYes{See Section~\ref{sec:appendix_experiments}.}
	\item Did you report error bars (e.g., with respect to the random seed after running experiments multiple times)?
    \answerNo{}
	\item Did you include the total amount of compute and the type of resources used (e.g., type of GPUs, internal cluster, or cloud provider)?
    \answerYes{See Section~\ref{sec:appendix_experiments}.}
\end{enumerate}

\item If you are using existing assets (e.g., code, data, models) or curating/releasing new assets...
\begin{enumerate}
  \item If your work uses existing assets, did you cite the creators?
    \answerYes{}
  \item Did you mention the license of the assets?
    \answerYes{See Section~\ref{sec:appendix_experiments}.}
  \item Did you include any new assets either in the supplemental material or as a URL?
    \answerNA{}
  \item Did you discuss whether and how consent was obtained from people whose data you're using/curating?
    \answerNA{}
  \item Did you discuss whether the data you are using/curating contains personally identifiable information or offensive content?
    \answerNA{}
\end{enumerate}

\item If you used crowdsourcing or conducted research with human subjects...
\begin{enumerate}
  \item Did you include the full text of instructions given to participants and screenshots, if applicable?
    \answerNA{}
  \item Did you describe any potential participant risks, with links to Institutional Review Board (IRB) approvals, if applicable?
    \answerNA{}
  \item Did you include the estimated hourly wage paid to participants and the total amount spent on participant compensation?
    \answerNA{}
\end{enumerate}

\end{enumerate}

%%%%%%%%%%%%%%%%%%%%%%%%%%%%%%%%%%%%%%%%%%%%%%%%%%%%%%%%%%%%
\newpage
\appendix
\section{Closed form expressions for the robust risks}
%
% \rh{Setting is not a good title for an appendix, because we already discuss the problem setting in the main body.
% I think it would be best to organize the appendix so that the organization is the same as that in the main body. For example, one section in the appendix for section 3, and then if possible have the proofs in the same order as the results in the section.
% }

In Section \ref{sec:linrobrisk} and \ref{sec:logrobrisk} we
derive closed-form expressions of the standard and robust risks from Equations~\eqref{eq:SR},\eqref{eq:AR}
for the settings studied in Section~\ref{sec:linreg},\ref{sec:logreg}.
We use those expressions repeatedly in our proofs.
Furthermore, Section~\ref{sec:dro} discusses that the robust risk \eqref{eq:AR}
upper-bounds the worst case risk under distributional mean shifts.

\subsection{Closed-form of robust risk for regression}
\label{sec:linrobrisk}
% \fy{linreg lemma uses $\EE_{X\sim\prob}$ whereas classification lemma
%   doesn't, would make it coherent}
The following lemma provides a closed-form expression
of the robust risk for the linear regression setting
studied in Section \ref{sec:linreg}.
A similar result for inconsistent attacks
has already been shown before (Lemma~3.1. in~\cite{Javanmard20a});
we only include the proof for completeness.

\begin{lemma}
\label{lm:linregrisk}
Assume that $\prob_X$ is the isotropic Gaussian distribution. Then, for the square loss, the robust risk \eqref{eq:AR} with respect to $\ell_2$-perturbations is given by
\begin{equation}
\label{eq:ARreg}
 \AR(\theta) = \| \thetatrue - \theta\|_2^2 + 2\epsilon \sqrt{2/\pi} \| \Piper \theta\|_2 \| \thetatrue - \theta \|_2 + \epsilon^2 \|\Pi_{\perp} \theta \|^2_2.
\end{equation}
\end{lemma}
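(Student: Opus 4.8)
The plan is to compute the robust risk directly from the definition in Equation~\eqref{eq:AR}, exploiting the isotropy of the Gaussian and the structure of the perturbation set $\pertset{2}(\eps) = \{\delta : \|\delta\|_2 \le \eps,\ \langle \thetatrue, \delta\rangle = 0\}$. First I would fix $\theta$ and write the pointwise robust loss at a sample $X$ as $\max_{\delta \in \pertset{2}(\eps)} (\langle \theta, X + \delta\rangle - \langle\thetatrue, X\rangle)^2$. Since $(u + t)^2$ is maximized over an interval $|t| \le M$ at $t = \pm M$ with value $(|u| + M)^2$, the inner maximization reduces to maximizing $|\langle \theta, \delta\rangle|$ over $\delta$ with $\|\delta\|_2 \le \eps$ and $\delta \perp \thetatrue$. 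That linear maximization is solved by aligning $\delta$ with the component of $\theta$ orthogonal to $\thetatrue$, i.e.\ with $\Piper\theta$, giving maximal value $\eps\|\Piper\theta\|_2$. Hence the pointwise robust loss equals $\bigl(|\langle \theta - \thetatrue, X\rangle| + \eps\|\Piper\theta\|_2\bigr)^2$.

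Next I would expand this square: it equals $\langle \theta - \thetatrue, X\rangle^2 + 2\eps\|\Piper\theta\|_2\,|\langle\theta - \thetatrue, X\rangle| + \eps^2\|\Piper\theta\|_2^2$, and then take the expectation over $X \sim \Normal(0, \idmat_d)$ term by term. The first term gives $\EE\langle\theta - \thetatrue, X\rangle^2 = \|\theta - \thetatrue\|_2^2$. The third term is deterministic and gives $\eps^2\|\Piper\theta\|_2^2$. For the middle term, $\langle\theta - \thetatrue, X\rangle$ is a centered Gaussian with variance $\|\theta - \thetatrue\|_2^2$, so $\EE|\langle\theta - \thetatrue, X\rangle| = \sqrt{2/\pi}\,\|\theta - \thetatrue\|_2$; multiplying by the constant $2\eps\|\Piper\theta\|_2$ yields $2\eps\sqrt{2/\pi}\,\|\Piper\theta\|_2\,\|\theta - \thetatrue\|_2$. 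Summing the three contributions gives exactly Equation~\eqref{eq:ARreg}.

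The only step requiring a little care is the inner maximization: one must verify that the maximizer $\delta^\star = \eps\,\Piper\theta/\|\Piper\theta\|_2$ (when $\Piper\theta \neq 0$; otherwise the middle and third terms vanish) indeed lies in $\pertset{2}(\eps)$ — it does, since it is orthogonal to $\thetatrue$ by construction and has norm $\eps$ — and that no other feasible $\delta$ can increase the squared loss, which follows because for the scalar problem $\max_{|t|\le M}(u+t)^2$ the optimum is attained at an endpoint regardless of the sign of $u$, and here $M = \eps\|\Piper\theta\|_2$ is the exact maximum of the linear functional $\delta \mapsto \langle\theta,\delta\rangle$ over the constraint set (by Cauchy–Schwarz restricted to the subspace $\thetatrue^\perp$). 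This is entirely elementary, so there is no genuine obstacle; the statement essentially mirrors Lemma~3.1 of~\cite{Javanmard20a} adapted to consistent rather than inconsistent perturbations, the only change being that the maximization is over $\thetatrue^\perp$ and hence $\|\theta\|_2$ is replaced by $\|\Piper\theta\|_2$ in the perturbation penalty.
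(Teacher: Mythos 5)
Your proposal is correct and follows essentially the same route as the paper's proof: reduce the inner maximization to maximizing the linear functional $\delta \mapsto |\langle \theta, \delta\rangle|$ over the ball intersected with $\thetatrue^\perp$ (yielding $\eps\|\Piper\theta\|_2$), rewrite the pointwise robust loss as $(|\langle \thetatrue - \theta, X\rangle| + \eps\|\Piper\theta\|_2)^2$, and then expand and use $\EE|\langle \thetatrue - \theta, X\rangle| = \sqrt{2/\pi}\,\|\thetatrue - \theta\|_2$ for isotropic Gaussian $X$. No gaps; the argument matches the paper's.
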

%
% Here, we establish Equation~\ref{eq:ARnoiseless}.
% The loss defined in~\eqref{eq:ridgeloss} can be written as:
% \begin{align*}
% \mathcal L_{\epsilon,0}(\theta)
% &=
% \sum_{i=1}^n ( | y_i - \innerprod{x_i}{\theta}| + \epsilon \norm[2]{\theta} )^2, \\
% %
% &=
% \sum_{i=1}^n ( y_i - \innerprod{x_i}{\theta})^2
% +
% 2\epsilon \norm[q]{\theta} |y_i - \innerprod{x_i}{\theta}|
% + \epsilon^2 \norm[q]{\theta}^2
%
%&=
%\norm[2]{\vy - \mX \theta}^2
%+
%2\epsilon \norm[q]{\theta}
%\norm[1]{\vy - \mX \theta}
%+
%n \epsilon^2 \norm[2]{\theta}^2
% \end{align*}
\begin{proof}
Define
$\tilde y_i = y_i - \innerprod{x_i}{\theta}$, and
note that using similar arguments as in Section 6.2. \cite{Javanmard20a}
% where $\norm[q]{\cdot}$ is the dual norm of $\norm[p]{\cdot}$. This has been established by~\citet{javanmard_PreciseTradeoffsAdversarial_2020a} for $p=2$.
% To see that this holds for $p\neq 2$,
\begin{align*}
\max_{ \delta_i \in \pertset{2}(\epsilon) } ( \tilde y_i - \innerprod{\delta_i}{\theta} )^2
&=
 ( \max_{\delta_i \in \pertset{2}(\epsilon)} \left| \tilde y_i - \innerprod{\delta_i}{\theta} \right|)^2 \\
&=
 ( \left| \tilde y_i \right|  + \max_{\norm[2]{\delta_i} \leq \epsilon, \delta_i \perp \thetastar} \left| \innerprod{\delta_i}{\theta} \right|)^2 \\
&=
( \left| \tilde y_i \right|   + \epsilon \norm[2]{\Pi_{\perp} \theta} )^2.
\end{align*}
With this characterization, we can derive a convenient expression for the robust risk:
\begin{align}
\AR(\theta)
&=
\EE_{X}{ ( \left| \langle X, \thetastar  - \theta \rangle \right| + \epsilon \norm[q]{ \Pi_{\perp} \theta} )^2 } \nonumber \\
&=
\EE_{X }{ \left( \langle X, \thetastar  - \theta \rangle  \right)^2 }
+
2\epsilon \EE_{X }{ \left|\langle X, \thetastar  - \theta \rangle \right| }  \norm[2]{\Pi_{\perp}\theta}
+
\epsilon^2 \norm[2]{\Pi_{\perp}\theta}^2.
\label{eq:robustriskexpressiongeneral}
\end{align}
Since we assume isotropic Gaussian features,
that is $\prob_X = \Normal(0,\idmat)$,
we can further simplify
% Here, the expectation is over $(x,y)$ drawn iid from some distribution.
% The expression reveals that the risk consists of the standard risk (the first term) and a second part that depends on the perturbation level $\epsilon$.
%
%
%
% Recall that in our setup, data is generated by a linear model with no noise
% \[
% y = \innerprod{x}{\theta^\ast},
% \]
% where $x \sim \mathcal N(0, I)$ and $\theta^\ast$ is the (unknown) model parameter.
% Then $y - \innerprod{x}{\theta^\ast} = \innerprod{x}{\theta - \theta^\ast}$ is Gaussian with variance
% $\norm[2]{\theta - \theta^\ast}$,
% and the robust risk~\eqref{eq:robustriskexpressiongeneral}, for $p=2$, becomes
\begin{align*}
\AR(\theta)
&=
\norm[2]{\theta - \theta^\ast}^2
+
2\epsilon
\sqrt{2/\pi}
 \norm[2]{\Pi_{\perp}}
\norm[2]{\theta - \theta^\ast}
+
\epsilon^2 \norm[2]{\Pi_{\perp}}^2
\end{align*}
which concludes the proof.
\end{proof}

% \begin{remark}
%  We obtain Equation \eqref{eq:ARnoiseless} when decomposing $\thetalambda = \Pi_\parallel \thetalambda + \Pi_\perp \thetalambda$ and
% $\norm[2]{\theta - \theta^\ast}^2 = \norm[2]{\Pi_\parallel \thetalambda - \theta^\ast}^2 + \norm[2]{\Pi_\perp \thetalambda}^2$.
%
% \end{remark}
%
% Finally, decompose $\thetalambda = \Pi_\parallel \thetalambda + \Pi_\perp \thetalambda$ and
% $\norm[2]{\theta - \theta^\ast}^2 = \norm[2]{\Pi_\parallel \thetalambda - \theta^\ast}^2 + \norm[2]{\Pi_\perp \thetalambda}^2$
% which gives us Equation \eqref{eq:ARnoiseless}
% \begin{align*}
% \risk(\thetalambda)
% %
% &=
% \norm[2]{\Pi_\parallel \thetalambda - \theta^\ast}^2
% + (1+\epsilon^2) \norm[2]{\Pi_\perp \thetalambda}^2
% +
% 2\epsilon
% \sqrt{2/\pi}
%  \norm[2]{\Pi_\perp \thetalambda}
% \sqrt{\norm[2]{\Pi_\parallel \thetalambda - \theta^\ast}^2 +  \norm[2]{\Pi_\perp \thetalambda}^2},
% \end{align*}
% which proves equation~\ref{eq:ARnoiseless}.

\subsection{Closed-form of robust risk for classification}
\label{sec:logrobrisk}
Similarly to linear regression, we can express the robust and standard risk
for the linear classification model in Section~\ref{sec:logreg} as
stated in the following lemma.
\begin{lemma}
\label{lm:logregrisk}
Assume that $\prob_X$ is the isotropic Gaussian distribution and
$\thetastar = (1,0,\cdots,0)^\top$. Then,
\begin{enumerate}
  \item For any non-decreasing loss $\lossf: \R \to \R$ we have
\begin{equation}
\label{eq:logregclosedform}
 \max_{ \delta_i \in \pertset{\infty}(\epsilon) }  \lossf( y_i\innerprod{x_i + \delta_i}{\theta} ) = \lossf( y_i\innerprod{x_i }{\theta} - \eps \|\Piper \theta\|_1  ).
\end{equation}

\item For the $0$-$1$ loss the
robust risk \eqref{eq:AR} with respect to $\ell_{\infty}$-perturbations is given by
\begin{align}
  \label{eq:robriskclass}
  \AR(\theta) = \frac{1}{\pi}\arccos\left(\frac{\langle\thetanull, \theta\rangle}{\|\theta\|_2}\right) + \frac{1}{2} \erf \left(\frac{\eps\|\Piper \theta\|_1}{\sqrt{2} \|\theta\|_2}\right) + I\left(\frac{\eps \|\Piper \theta\|_1}{\|\theta\|_2}, \frac{\langle\thetanull,\theta\rangle}{\|\theta\|_2}\right),
\end{align}
with
\begin{equation}
 I(t,u) \define \int_{0}^{t} \frac{1}{\sqrt{2\pi}} \exp\left(-\frac{x^2}{2}\right)\Phi\left(\frac{x u}{\sqrt{2}\sqrt{1- u^2}}\right)dx.
\end{equation}
\end{enumerate}
\end{lemma}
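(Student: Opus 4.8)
The plan is to prove the two parts in sequence, with Part 1 feeding directly into Part 2. For Part 1, I would start from the inner maximization $\max_{\|\delta_i\|_\infty \le \epsilon,\, \langle \delta_i,\thetastar\rangle = 0} \lossf(y_i \langle x_i + \delta_i, \theta\rangle)$. Since $\lossf$ is non-decreasing, the maximum over $\delta_i$ is attained by minimizing $y_i \langle \delta_i, \theta\rangle$, i.e.\ by making $y_i\langle x_i+\delta_i,\theta\rangle$ as small as possible, so it suffices to compute $\min_{\delta_i} y_i\langle\delta_i,\theta\rangle$ over the constraint set. With $\thetastar = e_1$, the consistency constraint $\langle\delta_i,\thetastar\rangle = 0$ just forces the first coordinate of $\delta_i$ to vanish; on the remaining coordinates the $\ell_\infty$-ball decouples coordinatewise, so the minimum of $\sum_{j\ge 2}(\delta_i)_j \theta_j$ subject to $|(\delta_i)_j|\le\epsilon$ is $-\epsilon\sum_{j\ge 2}|\theta_j| = -\epsilon\|\Piper\theta\|_1$ (here $\Piper$ projects off the $\thetastar$ direction, i.e.\ zeroes the first coordinate). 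Plugging back gives $\max_{\delta_i} \lossf(y_i\langle x_i+\delta_i,\theta\rangle) = \lossf(y_i\langle x_i,\theta\rangle - \epsilon\|\Piper\theta\|_1)$, which is \eqref{eq:logregclosedform}. (For the inconsistent perturbation set one drops the first-coordinate constraint and gets $\|\theta\|_1$ instead, but for this lemma only the consistent case with $\thetastar = e_1$ is needed.)

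For Part 2, I would apply Part 1 with $\lossf$ the $0$–$1$ loss $\indicator_{\sgn(\cdot)\neq \sgn(\langle\thetastar,X\rangle)}$ — note $\lossf(t) = \indicator_{t \le 0}$ composed with the appropriate sign is non-decreasing in the margin argument — so that
\[
\AR(\theta) = \EE_{X}\,\indicator\!\left\{\, Y\langle X,\theta\rangle - \epsilon\|\Piper\theta\|_1 \le 0,\ \text{or sign disagreement}\,\right\},
\]
with $Y = \sgn\langle\thetastar,X\rangle = \sgn(X_1)$. The event decomposes as: either $\theta$ already misclassifies $X$ (probability $\tfrac1\pi\arccos(\langle\thetastar,\theta\rangle/\|\theta\|_2)$, the standard classification error for two Gaussian half-spaces), or $\theta$ classifies $X$ correctly but the robust margin $|\langle X,\theta\rangle| - \epsilon\|\Piper\theta\|_1 < 0$ tips it over. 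I would then compute the probability of the latter event by writing $\langle X,\theta\rangle$ and $X_1$ as a jointly Gaussian pair with correlation $\rho = \langle\thetastar,\theta\rangle/\|\theta\|_2$, conditioning on the sign of $X_1$, and integrating the Gaussian density of $\langle X,\theta\rangle/\|\theta\|_2$ over the band $[0,\epsilon\|\Piper\theta\|_1/\|\theta\|_2)$ against the conditional probability that $X_1$ has the correct sign. The latter conditional probability is an error-function term, which is exactly how the $\tfrac12\erf(\cdot)$ term and the integral $I(t,u)$ with $u = \rho$ arise; a standard decomposition splits off the part independent of $\rho$ (giving $\tfrac12\erf$) from the $\rho$-dependent remainder (giving $I$).

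The main obstacle is the bivariate Gaussian bookkeeping in Part 2: getting the decomposition of the robust-error event right and carrying the conditioning on $\sgn(X_1)$ through cleanly so that the two correction terms come out as the stated $\erf$ term plus the integral $I(t,u)$, rather than some equivalent-but-unsimplified expression. Part 1 is essentially immediate once one observes the monotonicity-plus-coordinatewise-decoupling structure. I would also want to double-check the orientation conventions (which sign of the margin counts as "robustly correct") so that the $\arccos$ term matches the noiseless standard risk claimed implicitly by the $\eps = 0$ special case, and confirm that $I(t,u)$ as defined here with $\Phi$ matches the $\erf$-based $I$ appearing in Theorem~\ref{theo:logreg} up to the standard rescaling $\Phi(z) = \tfrac12(1+\erf(z/\sqrt2))$.
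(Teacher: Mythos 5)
Your proposal is correct and follows essentially the same route as the paper's proof: Part 1 via monotonicity plus coordinatewise optimization over the $\ell_\infty$-ball restricted to the coordinates orthogonal to $e_1$, and Part 2 via a reduction to a bivariate Gaussian in $\mathrm{span}(\theta,\thetastar)$, splitting the robust error into the $\arccos$ standard-misclassification term plus the small-margin band event, which in turn decomposes into the $\tfrac{1}{2}\erf$ term and the integral $I$. The only wording slip (inherited from the lemma statement itself) is that the adversary's margin-minimizing choice requires $\lossf$ to be non-increasing in its argument, which is what the paper's proof actually invokes and what your computation implicitly assumes.
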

\begin{proof}

We first prove Equation~\eqref{eq:logregclosedform}.
Because $\lossf$ is non-increasing, we have
  %% similarly to Lemma \ref{lm:linregrisk}, we have for the logistic loss $\lossf(x) = \log(1+\exp(-x))$,
\begin{align*}
\max_{ \delta_i \in \pertset{\infty}(\epsilon) }  \lossf( y_i\innerprod{x_i + \delta_i}{\theta} )
&=
 \lossf( \min_{\delta_i \in \pertset{\infty}(\epsilon)} y_i\innerprod{x_i + \delta_i}{\theta} ) \\
&=
 \lossf( y_i\innerprod{x_i }{\theta}  + \min_{\norm[\infty]{\delta_i} \leq \epsilon, \delta_i \perp \thetastar} \innerprod{\delta_i}{\theta}  ) \\
&=
 \lossf( y_i\innerprod{x_i }{\theta}  - \eps \|\Piper \theta\|_1  ),
\end{align*}
which establishes Equation~\eqref{eq:logregclosedform}. Note that for the last equation we used that, while minimization over $\delta$ has no closed-form solution in general, for our choice of $\thetastar = (1,0,\cdots,0)$, we get $\min_{\norm[\infty]{\delta_i} \leq \epsilon, \delta_i \perp \thetastar} \innerprod{\delta_i}{\theta}  = - \eps \|\Piper \theta\|_1$.
%% \fy{why do we care about logistic loss here? why not directly argue about 0-1? Can write about the robust train loss (with logistic) separately? currently confusing}

%% Define $\indicator(.) = \indicator_{.\leq 0}$ with $\indicator_{(.)}$
%% the indicator function.
%% Due to Equation \eqref{eq:logregclosedform},

We now prove the second part of the statement.
Let $\indicator\{E\}$ be the indicator function, which is $1$ if the event $E$ occurs, and $0$ otherwise.
Since $\lossf(\cdot)=\indicator_{\cdot\leq 0}$ is non-increasing we can
use~\eqref{eq:logregclosedform} and write
\begin{align*}
   \AR(\theta) &= \EE_X \max_{\delta \in \pertset{\infty}(\epsilon)} \indicator\{\sgn(\langle X,
    \thetastar\rangle) \langle X+\delta, \theta\rangle \leq 0\} \\
    %&= \EE_X  \indicator\{ \min_{\delta \in \pertset{\infty}(\epsilon)} \left(\sgn(\langle X, \thetastar\rangle) \langle X+\delta, \theta\rangle \leq 0\} \\
    &= \EE_X \indicator\{ \sgn(\langle X, \thetastar\rangle) \langle X, \theta\rangle - \eps \|\Piper \theta\|_1 \leq 0\}.
\end{align*}
Let $\EPipar \define \frac{1}{\|\theta\|_2^2}
\theta \theta^\top$ be the projection onto the subspace spanned by
$\theta$ and $\EPiper \define \idmat_d - \EPipar$ the projection onto the
orthogonal complement. Since $X$ is a vector with \iid{}~standard
normal distributed entries, we equivalently have
\begin{equation}
  \AR(\theta) = \EE_{Z_1,Z_2} \indicator\{ Z_1
   \sgn\left(Z_1
\|\EPipar \thetastar\|_2+ Z_2 \| \EPiper
\thetastar\|_2\right) - \eps \frac{\| \Piper \theta
   \|_1}{\|\theta\|_2} \leq 0\},
\end{equation}
%\sgn\left(Z_1
 %% \langle \EPipar \thetastar, \thetastar\rangle + Z_2 \langle \EPiper
 %% \thetastar, \thetastar\rangle \right)
with $Z_1,Z_2$ two independent standard normal random
variables.  For brevity of notation, define $\nu = \eps \frac{\|
  \Piper \theta \|_1}{\|\theta\|_2} $ and
$\Izozt = \sgn\left(Z_1
\|\EPipar \thetastar\|_2+ Z_2 \| \EPiper
\thetastar\|_2\right) =: \sgn( \beta^\top Z)$ with $\beta^\top = (\|\EPipar \thetastar\|_2, \| \EPiper \thetastar\|_2)$.
%$\Izozt = \sgn\left(Z_1
%\langle \EPipar \thetastar, \thetastar\rangle + Z_2 \langle \EPiper
%\thetastar, \thetastar\rangle\right)$.
% \fy{detail: we shouldn't need to
% consider when sign = 0, only have two labels for $y$...?}

% \rh{It's good practice to use new definitions only sparingly and only if it really makes the expositions simpler. Here, the definitions make the proof harder to follows. Specifically, it's unnecessary to define $b(Z_1,Z_2)$ because it's only used in a single equation below; $A$ is never used, so definitely not necessary, $\beta_1$ is used before it's defined in step 1 below, and $Z$ is also not defined as $(Z_1,Z_2)$. One solution is to only define $\beta_1$ and $\beta_2$ and work with $\beta_1Z_1 + \beta_2Z_2$ throughout instead of $b$ and $\beta^T Z$, perhaps it's possible to work with even fewer variables. Also, not necessary to define $T_1$ and $T_2$.}
% \kd{I agree that this is some extra notation, however, it makes the formulas much more compact. }

Define the event $\theevent = \{\sgn\left(Z_1 \|\EPipar
\thetastar\|_2+ Z_2 \| \EPiper \thetastar\|_2\right) - \eps \frac{\|
  \Piper \theta \|_1}{\|\theta\|_2} \leq 0\}$.  Because $Z_2$ is symmetric,
the distribution
of $Z_1 \Izozt$ is symmetric, hence we can rewrite the risk
%% we have $\AR(\theta) = \prob_{Z_1,Z_2}
%% [\theevent] = \prob_{Z_1,Z_2} [\theevent|Z_1\geq 0]$.
%% %% Because the problem is symmetric in $Z_1$, we can condition on $Z_1
%% %% \geq 0$.
%% We further decompose the risk
\begin{equation}
  \AR(\theta) = \underbrace{\prob( \Izozt \leq 0 \vert Z_1 \geq 0)}_{T_1} + \underbrace{\prob(Z_1 \leq \nu, \Izozt \geq 0 \vert Z_1 \geq 0)}_{T_2}
\end{equation}
%% with
%% \begin{equation}
%%   {\Usetpr= \{ \Izozt \leq 0 \vert Z_1 \geq 0\}}~~ \mathrm{and} ~~{\Vsetpr = \{ Z_1 \leq \nu \vert Z_1 \geq 0, \Izozt \geq 0 \}}
%% \end{equation}
and derive expression for $T_1, T_2$ separately.
%% We separate the remainder of the proof in two parts deriving an expression for each of the two terms.

\paragraph{Step 1: Proof for $T_1$}
Note that due to $\|\thetatrue\|_2=1$ we have $\|\beta\|_2=1$
%% Let $\beta_1 = \langle \EPipar \thetastar, \thetastar\rangle=
%% \frac{\langle \thetastar, \theta\rangle}{\|\theta\|_2}$ and $\beta_2 =
%% \langle \EPiper \thetastar, \thetastar\rangle= \frac{\| \Piper
%%   \theta\|_2}{\|\theta\|_2}$ where we have used that $\|\thetastar
%% \|_2 = 1$.
and recall that ${T_1 = \prob( \beta^\top Z\leq 0 \vert Z_1 \geq 0)}$.
Using the fact that both $Z_1$ and
$Z_2$ are independent standard normal distributed random variables, a simple geometric argument then yields that $T_1 =
\frac{\alpha}{\pi}$ with $\alpha =
\arc\cos\left(\frac{\beta_1}{\sqrt{\beta_1^2 +\beta_2^2}}\right) =
\arc\cos(\beta_1)$. Noting that $\beta_1 = \|\EPipar \thetastar\|_2 = \frac{\langle \thetastar, \theta\rangle}{\|\theta\|_2}$ then yields $T_1 =  \frac{1}{\pi}\arccos\left(\frac{\langle\thetanull, \theta\rangle}{\|\theta\|_2}\right)$.
%Hence we can conclude the step.

\paragraph{Step 2: Proof for $T_2$}
First, assume that $ \langle \thetastar, \theta\rangle \geq 0$. We
separate the event ${\Vsetpr = \{Z_1 \leq \nu, \Izozt \geq 0\}}$ into
  two events $\Vsetpr = \Vsetpr_1 \cup \Vsetpr_2$
\begin{equation*}
  {\Vsetpr_1 = \{ Z_1 \leq \nu, Z_2 \geq 0\}}
  ~~\mathrm{and}~~{\Vsetpr_2 = \{ Z_1 \leq \nu, \Izozt \geq 0, Z_2
    \leq 0\}}.
\end{equation*}
The conditional probability of the first event is directly given
\begin{equation*}
  \prob(\Vsetpr_1 |
  Z_1\geq 0) = \prob(Z_2 \geq 0) \prob(Z_1\leq \nu \vert Z_1\geq 0) =\frac{1}{2} \erf\left(\frac{\nu}{\sqrt{2}}\right).
\end{equation*}
Hence
it only remains to find an expression for $\prob(\Vsetpr_2 \vert Z_1
\geq 0)$. Letting $\mu$ denote the standard normal distribution, we can write
\begin{align*}
  \prob(Z_1 \leq \nu, Z_2 &\leq 0, \Izozt \geq 0\vert Z_1 \geq 0) = 2 \int_0^{\nu} \int_{0}^{\frac{\beta_1 x}{\beta_2}} d\mu(y)d\mu(x)
 = \int_0^\nu \frac{1}{2} \erf\left(\frac{\beta_1 x}{\beta_2}\right) d\mu(x).
\end{align*}
Together with Step 1, Equation~\eqref{eq:robriskclass} follows by noting that $\beta_1^2 + \beta_2^2
= 1$.  Finally, the proof for the case where $\langle \thetastar,
\theta\rangle \leq 0$ follows exactly from the same argument and the proof is complete.
\end{proof}

\subsection{Distribution shift robustness and consistent adversarial robustness}
\label{sec:dro}

In this section we introduce distribution shift robustness
and show the relation to consistent $\ell_p$-adversarial robustness
for certain types of distribution shifts.

When learned models are
deployed in the wild, test and train distribution might not be be the same. That
is, the test loss might be evaluated on samples from a slightly
different distribution than used to train the method. Shifts in the mean of the covariate
distribution is a standard intervention studied in the invariant causal
prediction literature~\cite{Buhlmann20, Chen20}.  For mean
shifts in the null space of the ground truth $\thetastar$
we define an alternative evaluation metric that we refer to as the
\emph{distributionally robust risk} defined as follows:
% An alternative evaluation metric that takes such distribution shifts into
% account is the distributionally robust risk that we define as follows:

\begin{align*}
  \DR(\theta) &\define \max_{\probrobx \in \probneighbor{q}(\epsilon;\probx)}
  \EE_{X\sim \probrobx}  \losstest(\langle \theta, X \rangle, \langle \thetastar, X\rangle), \:\text{ with }\\
%% \end{equation}
%% with
%% \begin{equation}
%%   \label{eq:probneighbor}
  \probneighbor{p}(\eps;\prob) &\define \{\probrob \in \probspace: \|\mu_{\prob}
  -\mu_{\probrob}\|_p\leq \epsilon \text{ and } \langle \mu_{\prob} -
\mu_{\probrob}, \thetatrue\rangle = 0\}, \nonumber
\end{align*}
where $\probneighbor{p}$ is the neighborhood of mean shifted probability
distributions.
% \fy{consistent bla?}

A duality between distribution shift robustness and
adversarial robustness has been established in earlier work such
as~\cite{sinha18} for general convex, continuous losses
$\losstest$. For our setting, the following lemma holds.
% Lemma \ref{lm:closedformattacks} implies that the robust risk with respect to consistent $\ell_p$ perturbations upper bounds the distributional robust risk with respect to causal invariant $\ell_q$ shifts:

\begin{lemma}
  \label{lem:dro}
  For any $\eps \geq 0$ and $\theta$, we have $\DR(\theta) \leq \AR(\theta)$.
  %% Let $\thetastar = (1,0,\cdots,0)^\top$ and $q,p\geq 1$ with $\frac{1}{q} +\frac{1}{p} = 1$. For any $\eps \geq 0$ and $\theta$, we have $\DR(\theta) \leq \AR(\theta)$.
%   where $\AR(\theta)$ is the robust risk with respect consistent $\ell_p$ perturbations and $\DR($ the distribtional robust risk with respect to causal invariant $\ell_q$ mean shifts.
\end{lemma}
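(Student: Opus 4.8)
The plan is to show $\DR(\theta) \leq \AR(\theta)$ by verifying that every distribution $\probrob$ in the mean-shift neighborhood $\probneighbor{p}(\eps;\probx)$ can be ``simulated'' by an adversary that applies a fixed, admissible perturbation $\delta$ to each sample drawn from $\probx$. First I would fix an arbitrary $\probrob \in \probneighbor{p}(\eps;\probx)$ and let $\delta \define \mu_{\probrob} - \mu_{\probx}$ be the difference of means. By the definition of the neighborhood, $\|\delta\|_p \leq \eps$ and $\langle \delta, \thetatrue\rangle = 0$, so $\delta \in \pertset{p}(\eps)$. Since $\probx = \Normal(0,\idmat_d)$, a sample $X' \sim \probrob$ --- at least for the Gaussian mean-shift family, or more generally when we only use that the expectation below factors through the mean --- can be written as $X' = X + \delta$ with $X \sim \probx$. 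Then
\begin{align*}
 \EE_{X'\sim\probrob} \losstest(\langle \theta, X'\rangle, \langle \thetastar, X\rangle)
 &= \EE_{X\sim\probx} \losstest(\langle \theta, X + \delta\rangle, \langle \thetastar, X\rangle),
\end{align*}
where we also used $\langle \thetastar, \delta\rangle = 0$ so that the ``clean'' label $\langle \thetastar, X\rangle$ is unchanged by the shift (this is exactly the consistency condition).

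The key step is then the pointwise bound: for the fixed $\delta \in \pertset{p}(\eps)$ we have, for every realization of $X$,
\begin{align*}
 \losstest(\langle \theta, X + \delta\rangle, \langle \thetastar, X\rangle)
 \leq \max_{\delta' \in \pertset{p}(\eps)} \losstest(\langle \theta, X + \delta'\rangle, \langle \thetastar, X\rangle),
\end{align*}
simply because $\delta$ is one feasible choice in the set over which the maximum is taken. Taking expectations over $X \sim \probx$ gives
\begin{align*}
 \EE_{X'\sim\probrob} \losstest(\langle \theta, X'\rangle, \langle \thetastar, X\rangle) \leq \AR(\theta).
\end{align*}
Since this holds for every $\probrob$ in the neighborhood, taking the supremum over $\probrob \in \probneighbor{p}(\eps;\probx)$ on the left-hand side yields $\DR(\theta) \leq \AR(\theta)$, as claimed.

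The main obstacle is the representation $X' = X + \delta$ relating a sample from the shifted distribution to a sample from $\probx$ perturbed by the constant $\delta$. For Gaussians with identity covariance this is immediate, since $\Normal(\delta,\idmat_d)$ is exactly the pushforward of $\Normal(0,\idmat_d)$ under $x \mapsto x + \delta$; but for a general member of $\probneighbor{p}$ with the same mean and different higher moments one must be slightly more careful. The cleanest fix --- which I expect matches the paper's intent --- is to restrict $\probneighbor{p}$ implicitly to mean-shifted copies of $\probx$ (i.e.\ translates), or to observe that the worst-case over the full neighborhood is still dominated by the adversarial risk because the inner objective $\EE_{X\sim\probrob}\losstest(\langle\theta,X\rangle,\langle\thetastar,X_{\mathrm{clean}}\rangle)$, with the clean label generated consistently, can only depend on $\probrob$ through quantities controlled by the $\ell_p$-ball of admissible shifts. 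I would state the Gaussian-translate version explicitly and note the general case follows by the same pointwise-domination argument applied conditionally.
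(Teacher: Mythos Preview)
Your proposal is correct and mirrors the paper's proof: fix $\probrob$, set $\delta=\mu_{\probrob}-\mu_{\probx}\in\pertset{p}(\eps)$, use the distributional identity $X'\stackrel{d}{=}X+\delta$ together with $\langle\thetastar,\delta\rangle=0$ to rewrite the shifted risk as $\EE_{X\sim\probx}\losstest(\langle\theta,X+\delta\rangle,\langle\thetastar,X\rangle)$, and then bound this by $\AR(\theta)$ via the pointwise inequality (equivalently, swapping $\max_\delta$ and $\EE_X$). Your flagged caveat about $X'=X+\delta$ only holding for translates is exactly the implicit assumption the paper makes as well---its distributional equivalence step is written precisely for that case---so your ``cleanest fix'' of restricting $\probneighbor{p}$ to translated copies of $\probx$ matches the paper's intent.
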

% \fy{not clear here what p,q refer to, i.e. that one is indexing adv. neighborhood of AR etc. also why did you need (1,0,0,0...)?}

% is trivially an upper bound for the risk $\DR(\theta)$. Indeed, we have

\begin{proof}
% \rh{Here, $x$ is a random variable, and above $X$ is a random variable. Can we use the same notation consistently?}
  The proof follows directly from the definition and consistency of the perturbations $\pertset{p}(\eps)$ and orthogonality of the mean shifts for the neighborhood $\probneighbor{p}$. By defining the random variable $W = X - \mu_\prob $ for $X \sim \prob$ we have the distributional equivalence
  \begin{equation*}
    X' = \mu_\prob + \delta + W\stackrel{d}{=} x + \delta
  \end{equation*}
  for $X'\sim \probrob$ and $X\sim \prob$ with $\mu_\probrob - \mu_\prob = \delta$ and hence
  \begin{align*}
    \DR(\theta) = \max_{\probrob \in \probneighbor{p}(\eps)} \EE_{X\sim\probrob} \losstest(\langle \theta, X\rangle, \langle \thetatrue, X\rangle) &= \max_{\|\delta\|_p \leq \eps, \delta \perp \thetatrue} \EE_{X\sim\prob} \losstest(\langle \theta, X+\delta \rangle, \langle \thetatrue,X\rangle) \\
    &\leq \EE_{X\sim\prob}  \max_{\|\delta\|_p \leq \eps, \delta \perp \thetatrue} \losstest(\langle \theta, X+\delta \rangle, \langle \thetatrue, X\rangle)
    = \AR(\theta)
  \end{align*}
  where the first line follows from orthogonality of the mean-shift to $\thetatrue$.
\end{proof}

\section{Experimental details}
\label{sec:appendix_experiments}
In this section we provide additional details on our experiments.
%% We parallelized our experiments on an
%% internal compute cluster
%% but all figures can be reproduced by a standard computer.
All our code including instructions and hyperparameters can be
found here: \url{https://github.com/michaelaerni/interpolation_robustness}.

\subsection{Neural networks on sanitized binary MNIST}
Figure~\ref{fig:teaser_plot_nn} shows that robust overfitting
in the overparameterized regime also occurs for single hidden layer neural
networks on an image dataset that we chose to be arguably devoid of noise.
We consider binary classification of MNIST classes 1 vs 3
and further reduce variance by removing ``difficult'' samples.
More precisely, we train networks of width $p \in \{10^1, 10^2, 10^3\}$
on the full MNIST training data
and discard all images that take at least one of the models
more than 100 epochs of training to fit.
While some recent work argues that such sanitation procedures
can effectively mitigate robust overfitting \cite{Dong21},
we still observe a significant gap between the best (early-stopped)
and final test robust accuracies in Figure~\ref{fig:teaser_plot_nn}.

We train all networks on a subset of $n=2 \times 10^3$ samples
using plain mini-batch stochastic gradient descent
with learning rate $\nu_p = \sqrt{0.1 / p}$
that we multiply by $0.1$ after 300 epochs.
This learning rate schedule minimizes the training loss efficiently;
we did not perform tuning using test or validation data.
For the robust test error, we approximate worst-case $\ell_\infty$-perturbations
using 10-step SGD attacks on each test sample.

\subsection{Linear and logistic regression}

If not mentioned otherwise,
we use noiseless \iid{}\ samples
from our synthetic data model
as described in Section~\ref{sec:data_model}
for our empirical simulations.
We calculate all risks in closed-form without noise and,
in the robust case, with consistent perturbations.
However, we approximate the integral for the robust 0-1 risk
in Theorem~\ref{theo:logreg} using a numerical integral solver
since we cannot obtain a solution analytically.

For linear regression, we always sample a training set of size $n=10^3$
and run zero-initialized gradient descent for $2 \times 10^3$ iterations.
The learning rate depends on the data dimension $d$ as $\nu_d = \sqrt{1 / d}$.
Since we observed the training to be initially unstable
for large overparameterization ratios $d/n$,
we linearly increase the learning rate from zero
during the first 250 gradient descent iterations.
For evaluation, the linear regression robust population risk always uses
consistent $\ell_2$-perturbations of radius $\epstest = 0.4$.
For the noisy case in Figure~\ref{fig:teaser_plot_linreg}
we set $\sigma^2 = 0.2$.

We fit all logistic regression models
except in Figure~\ref{fig:early_stopping}
by minimizing the (regularized) logistic loss from Equation~\eqref{eq:logridge}
using \emph{CVXPY} in combination with the \emph{Mosek} convex programming solver.
Whenever the max-$\ell_2$-margin solution is feasible for $\lambda \to 0$,
the problem in Equation~\eqref{eq:logridge} has many optimal solutions.
In that case, we directly optimize the constrained problem
from Equation~\eqref{eq:maxmarginAE} instead.
For Figure~\ref{fig:early_stopping},
we run zero-initialized gradient descent
on the unregularized loss ($\lambda = 0$) for $5 \times 10^5$ iterations.
We start with a small initial step size of $0.01$
that we double every $3 \times 10^4$ steps until iteration $3 \times 10^5$.
Next, we perform all simulations in Figure~\ref{fig:logreg_explanation}
on $n=10^3$ samples from our data model with $d=8 \times 10^3$.
Both training and evaluation use consistent $\ell_\infty$-perturbations
of radius $\eps=0.1$.
Lastly, for the noisy case in Figure~\ref{fig:teaser_plot_logreg},
we flip 2\% of all training sample labels.

\subsection{Theoretical predictions}

In order to obtain the asymptotic theoretical predictions for logistic
regression in Figure~\ref{fig:logreg_theory_combined} corresponding to
the empirical simulations with $n = 10^3$ and $\eps = 0.05$,
we obtain the solution of the optimization problems in
Theorem~\ref{thm:nonseperable},\ref{thm:seperable}
with $\eps_0 = 0.05 \sqrt{10^3 \gamma}$
by solving the system of equations $\nabla C = 0$ (with $C$ the optimization objective form Theorem~\ref{thm:nonseperable},\ref{thm:seperable})
using \emph{MATLAB}'s optimization toolbox
where we approximate expectations via numerical integration. We note that the optimization problem is numerically challenging to solve, in particular for small values of $\gamma$.

\section{Linear regression -- additional insights}
\ifarxiv
\label{sec:linreg_inconsistent}

In this section we discuss how inconsistent adversarial training prevents interpolation for linear regression.
\else

In \suppmat~\ref{sec:intuitionlinreg}  we give an intuitive explantion for the robust overfitting phenomenon described in Section~\ref{sec:linreg}. Furthermore, in \suppmat~\ref{sec:linreg_inconsistent} we discuss how inconsistent adversarial training prevents interpolation for linear regression.

\neuripsvspace{-0.1in}
\subsection{Intuitive explanation}
\label{sec:intuitionlinreg}

We now shed light on the phenomena revealed by
Theorem~\ref{thm:main_thm_lr} and Figure~\ref{fig:main_thm_lr}.
In particular, we discuss why regularization can
reduce the robust risk even in a noiseless setting
and why the effect is indiscernible for the standard risk.

% We now give insights explaining why the robust error overfits even in the
% noiseless case.  As shown in Figure~\ref{fig:lrb}, in this setting, the standard
% risk does not benefit from regularization, while we can clearly see that it
% improves the robust risk.
% This phenomenon is exclusively an issue in overparameterized regimes for
% noiseless observations, as illustrated in Figure~\ref{fig:lrb}. In this case
% the standard risk never benefits from regularization while we can clearly see
% that regularization helps the robust risk.

For this purpose, we examine the robust risk as a function of
$\lambda$, depicted in Figure~\ref{fig:linreg_increasing_fit} for
different overparameterization ratios $\gamma>1$ and $\epsilon=0.4$.
The arrows point in the direction of increasing $\lambda$.
We observe how the minimal robust risk is achieved for a $\lambdaopt$
bounded away from zero and how the optimum increases with the
overparameterization ratio $d/n \to\gamma$, indicating that stronger
regularization is needed the more overparameterized the estimator is.

In order to understand this overfitting phenomenon better, we decompose the
ridge estimate $\thetalambda$ into its projection $\Pi_{\parallel}$ onto the
ground truth direction $\thetatrue$ and the projection $\Pi_\perp$
onto the orthogonal complement,
i.e., $\thetalambda = \Pi_\parallel \thetalambda + \Pi_\perp \thetalambda$.
%% onto to the direction ground truth $\thetatrue$
%% Let $\Pi_{\parallel}$ be the orthonormal projection onto the ground truth
%% $\thetanull$, and let $\Pi_{\perp}$ be the projection on its orthogonal
%% complement. Basic algebra yields that we can decompose the estimator
%% $\thetalambda$ into orthogonal components $\thetalambda = \Pi_\parallel
%% \thetalambda + \Pi_\perp \thetalambda$, and,
%% As a straight-forward extension of
%% Lemma 3.1 in \cite{Javanmard20a} \fy{sorry isn't that kinda obvious and not worth mentioning javanmard?},
For the noiseless setting ($\sigma^2=0$), substituting this decomposition into
Equation~\eqref{eq:AR} yields the following closed-form expression of the robust
risk which now involves the parallel error $\|\thetanull - \Pi_{\parallel}
\thetalambda\|_2^2$ and the orthogonal error $\|\Pi_{\perp} \thetalambda\|_2^2$:
%and can be rewritten as
%% \rh{We can have $\epstest=\epsilon$ throughout, right? Let's please do that if possible, and note that this is important below where the RHS uses $\epstest$ but the LHS uses $\epsilon$.
%% }
\begin{align}
\label{eq:ARnoiseless}
% \SR(\thetalambda) &=  \|\thetanull - \Pi_{\parallel} \thetalambda\|_2^2 + \|\Pi_{\perp} \thetalambda\|_2^2 \\
\AR(\thetalambda) =  \|\thetanull - \Pi_{\parallel} \thetalambda\|_2^2  + (1+\epstest^2) \|\Pi_{\perp} \thetalambda\|_2^2
+ \sqrt{\frac{8 \epstest^2 }{\pi} \|\Pi_{\perp} \thetalambda\|_2^2 (\|\thetanull - \Pi_{\parallel} \thetalambda\|_2^2 + \|\Pi_{\perp} \thetalambda\|_2^2 )}.
\end{align}
%simple algebra based on
We provide the proof in \suppmat{}~\ref{sec:linrobrisk}.
% \fy{TODO ???- write
% Lemma}

Figure~\ref{fig:tradeofflinreg} shows that, as $\lambda$ increases, the
orthogonal error decreases faster than the parallel error increases.
% \fy{some detail how we drew Figure}
Since, by Equation~\eqref{eq:ARnoiseless}, the orthogonal error is weighted more
heavily for large enough perturbation strengths $\epstest$, some nonzero ridge
coefficient
%$\lambda$ that \emph{prevents interpolation}
yields the best trade-off.
On the other hand, the standard risk with $\epstrain =0 $
weighs both errors equally, resulting in an optimum at $\lambda \to 0$.

\begin{figure*}
	\centering
	\includegraphics[width=5.5in]{figures/linreg_theory_legend.pdf}
	\begin{subfigure}{0.61\textwidth}
		\centering
		\includegraphics[width=3.15in]{figures/linreg_increasing_fit.pdf}
		\caption{Robust risk as $\lambda$ increases}
		\label{fig:linreg_increasing_fit}
	\end{subfigure}
	\begin{subfigure}{0.38\textwidth}
		\centering
		\includegraphics[width=2.25in]{figures/linreg_gammacurves.pdf}
		\caption{Orthogonal and parallel parameter error}
		\label{fig:tradeofflinreg}
	\end{subfigure}
	\caption{
		Theoretical curves depicting the robust risk with $\epstest=0.4$ (a)
		and decomposed terms (b) as $\lambda$ increases (arrow direction)
		for different choices of the overparameterization ratio $d/n\to\gamma$.
		In (b) we observe that for large $\gamma > 1$, as $\lambda$ increases,
		the orthogonal error $\|\Proj_\perp \thetalambda\|_2$ decreases
		whereas the parallel error $\|\thetatrue - \Proj_\parallel \thetalambda\|_2$ increases.
		For $\eps>0$, the optimal $\lambda$ is large enough to prevent interpolation.
	}
	
	%%   Relation between the key quantities of the robust risk for
	%%   standard ridge regression estimators in the noiseless case.  Each dashed
	%%   line corresponds to the trajectory of the risk terms for a fixed $\gamma$ as
	%%   the ridge parameter $\lambda$ increases.  The slide lines correspond to the
	%%   $\lambda$ which minimize the robust risk for fixed test attack strengths
	%%   $\epstest$ as a function of $\gamma$.  The arrows denote the direction of
	%%   increase for both types of trajectories.  \rh{As the curve show, ...
		%% conclusion/take-away here} }

\label{fig:linreg_noiseless_theory}
\end{figure*}

\subsection{Inconsistent adversarial training}
\label{sec:linreg_inconsistent}
\fi
% We further highlight the relation between early stopping and ridge regularization when training with \iid{} isotropic Gaussian features.
% \subsection{Inconsistent adversarial training}
As shown in \cite{Javanmard20a} and using the same arguments as in Section~\ref{sec:linrobrisk},
the robust square loss under inconsistent $\ell_2$-perturbations
can be reformulated as
\begin{align*}
    \empriskrobunreg(\theta) &=
        \frac{1}{n} \sum_{i=1}^n (\vert\y_i - \langle x_i, \theta \rangle \vert + \epstrain \|\theta\|_2)^2
    \nonumber \\
    &=
    \frac{1}{n} \sum_{i=1}^n (y_i - \langle x_i, \theta \rangle )^2
    + \epstrain^2 \|\theta\|_2^2
    + \frac{2 \epstrain}{n}
    \|\theta\|_2 \sum_{i=1}^n \vert\y_i - \langle x_i, \theta \rangle \vert .
    \label{eq:linreg_robust_estimator}
\end{align*}
Thus, we can see that adversarial training with inconsistent perturbations prevents interpolation even when $d>n$,
that is, $\empriskrobunreg(\theta)=0$ is unattainable for any $\eps > 0$.
Nevertheless, we note that optimizing the reformulated robust square loss
is equivalent to
$\ell_2$-regularized linear regression with $\lambda = \epstrain^2$
and an additional term involving both the weight norm and absolute
prediction residuals.
We can observe this effect in Figure~4 of~\cite{Javanmard20a}
where larger $\epstrain$ yield similar effects to larger ridge penalties $\lambda$.

\section{Logistic regression -- additional insights}
In this section we further discuss robust logistic regression studied in Section~\ref{sec:logreg}.
% In \suppmat~\ref{sec:consistentAT}, we show that the observed empirical benefits
% of ridge regularization and early stopping
% also apply under consistent training perturbations.
% Next,
\suppmat{}~\ref{sec:inconsistent} presents further experiments to
contrast consistent and inconsistent perturbations for adversarial training.
Furthermore, for completeness, we investigate standard training
(that is, $\epstrain = 0$)
in Section~\ref{sec:logreg_st_vs_at} and note that it yields significantly
worse standard and robust prediction performance.

\subsection{Inconsistent adversarial training}
\label{sec:inconsistent}

As observed in Section~\ref{sec:non_separable_data},
label noise can prevent interpolation
and hence improve the robust risk of an unregularized estimator
with $\lambda \to 0$.
We now show similar empirical effects of inconsistent training perturbations
with large enough radius $\epstrain$.

\begin{figure}[h]
    \centering
    \begin{subfigure}[l]{0.48\textwidth}
        \centering
        \begin{subfigure}[t]{\textwidth}
            \centering
            \includegraphics[width=2.6in]{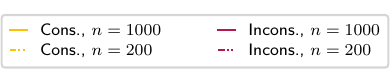}
        \end{subfigure}
        \begin{subfigure}[b]{\textwidth}
            \centering
            \includegraphics[width=2.6in]{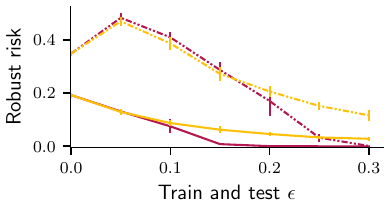}
        \end{subfigure}
        \caption{Varying $\eps$ for different $n$}
        \label{fig:logreg_inconsistent_vs_consistent_eps_increase}
    \end{subfigure}
    \begin{subfigure}[r]{0.48\textwidth}
        \centering
        \begin{subfigure}[t]{\textwidth}
            \centering
            \includegraphics[width=2.6in]{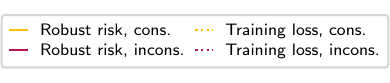}
        \end{subfigure}
        \begin{subfigure}[b]{\textwidth}
            \centering
            \includegraphics[width=2.6in]{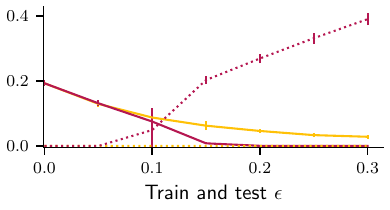}
        \end{subfigure}
        \caption{Robust risk vs.~training loss}
        \label{fig:logreg_inconsistent_vs_consistent_eps_increase_loss}
    \end{subfigure}
    \caption{
        Comparison of logistic regression adversarial training
        with consistent vs.\ inconsistent $\ell_\infty$-perturbations.
        (a) Robust risks of the unregularized estimators $(\lambda \to 0)$
        for $n=200, 1000$ as $\eps$ increases.
        While for small $\eps$, consistent and inconsistent perturbations
        yield similar robust risks,
        inconsistent perturbations with large $\eps$
        outperform consistent perturbations in terms of robustness.
        We provide an explanation in (b) where we focus on $n=1000$.
        In contrast to training with consistent perturbations,
        inconsistent perturbations may prevent the training loss
        from vanishing as $\eps$ grows large enough.
        Hence, inconsistent training perturbations
        can induce spurious regularization effects.
        We average all experiments over five
        independent dataset draws
        from our data model with fixed $d=500$
        and indicate one standard deviation via error bars.
    }
    \label{fig:logreg_inconsistent_vs_consistent_2}
\end{figure}

Concretely, we perform unregularized ($\lambda \to 0$)
adversarial training using consistent vs.\ inconsistent
$\ell_\infty$-perturbations for different $\eps$
on fixed $n=200,10^3$ and $d=500$.
Figure~\ref{fig:logreg_inconsistent_vs_consistent_eps_increase} displays
the robust risks of the resulting estimators.
For small $\eps$,
all risks behave very similarly,
further corroborating our observations in Figure~\ref{fig:logreg_inconsistent_vs_consistent}.
However, as the perturbation radius $\eps$ grows large,
inconsistent perturbations for unregularized adversarial training
yield estimators with better robust risk compared to consistent perturbations.

In order to understand this phenomenon,
we focus on $n=10^3$ and depict the robust risk
as well as the robust (unregularized) logistic training loss
in Figure~\ref{fig:logreg_inconsistent_vs_consistent_eps_increase_loss}.
We observe that, for large $\eps$,
inconsistent adversarial training fails to achieve a vanishing loss.
Hence, large enough inconsistent perturbations induce noise
which prevents interpolation.
This observation is similar to the observation made in Section~\ref{sec:non_separable_data}, where explicit label noise can have spurious regularization effects
and in turn, lead to a lower robust risk.

\subsection{Standard vs.\ adversarial training}
\label{sec:logreg_st_vs_at}

\begin{figure}
    \centering
    \begin{subfigure}[t]{\textwidth}
        \centering
        \includegraphics[width=3.9in]{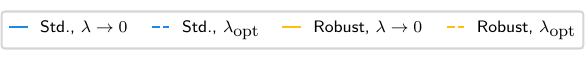}
    \end{subfigure}
    \begin{subfigure}[l]{0.48\textwidth}
        \centering
        \includegraphics[width=2.6in]{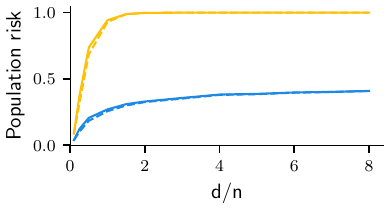}
        \caption{Standard training and $\ell_\infty$-perturbations}
        \label{fig:logreg_standard_training}
    \end{subfigure}
    \begin{subfigure}[r]{0.48\textwidth}
        \centering
        \includegraphics[width=2.6in]{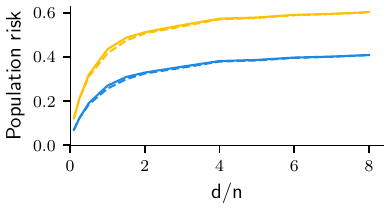}
        \caption{Adversarial training with $\ell_2$-perturbations}
        \label{fig:logreg_l2_perturbations}
    \end{subfigure}
    \caption{
        Additional logistic regression simulations
        using $n=10^3$ training samples from our data model
        for varying degrees of overparameterization $d/n$.
        (a) Standard training evaluated using
        consistent $\ell_\infty$-perturbations of radius $\epstest=0.1$.
        (b) Adversarial training
        using inconsistent $\ell_2$-perturbations
        of radius $\eps = 0.5$ for training
        and the corresponding consistent perturbation set for evaluation.
        In both settings,
        the robust risks are large and not even an optimally
        weighted ridge penalty helps to reduce them.
    }
    \label{fig:logreg_st_and_l2_perturbations}
\end{figure}

Throughout this paper, we focus on adversarial training for logistic regression.
For completeness,
we also provide simulation results for standard training ($\epstrain=0$)
in Figure~\ref{fig:logreg_standard_training}.
We again use a dataset of size $n=10^3$.
In contrast to adversarial training with $\epstrain > 0$,
we do not observe overfitting
for neither the standard nor robust risk.
However, for $d > n$, the robust risk exhibits its maximum possible value
and hence fails to provide any insights.
We note that our observations are consistent with~\cite{Salehi19}.

\subsection{Adversarial training with \texorpdfstring{$\ell_2$}{l2}-perturbations}
\label{sec:logreg_l2_perturbations}

As mentioned in Section~\ref{sec:logreg},
we focus on $\ell_\infty$-perturbations
in the context of logistic regression but for completeness also discuss $\ell_2$-perturbations.
Following the same argument as in Lemma~\ref{lm:logregrisk},
it is trivial to see that $\ell_2$-perturbations punish
the $\ell_2$-norm of the estimator.
Intuitively, we therefore expect that adversarial training
with $\ell_2$-perturbations results in an estimator $\thetalambda$
that is close to a rescaled version of the estimator obtained
if training without adversarial perturbations.
Since both the robust and standard risk are
independent of the estimator scale,
we hence do not expect any benefits
from explicit $\ell_2$-regularization, i.e., no robust overfitting.
Indeed, our simulation results
in Figure~\ref{fig:logreg_l2_perturbations}
show almost no regularization benefits
for neither the standard nor robust risk.

\section{Proof of Theorem~\ref{thm:main_thm_lr}}
\label{sec:linregtheory}
%\subsection{Proof of Theorem \ref{thm:main_thm_lr}}
\label{sec:prooflinregthm}
% The theorem is based on the proof of Theorem 5 \cite{Hastie19}. While Theorem \ref{thm:main_thm_lr} only considers isotropic features, we note that the proof can easiyl be extended to non-isotropic features und the same assumptions as used for Theorem 5 \cite{Hastie19}. The proof is separated in two steps. While Theorem 5 \cite{Hastie19} describes the risk where we additionally take the expectation over the observation noise of the training dataset, in a first step we show that we can replace the expectation with an asymptotic almost sure statement over the draws of the observation noise. In the second step, which consists of the main part of the proof, we then show that $\| \Piper \thetalambda\|_2^2 \to \Pperpinfty$.

In this section, we provide a proof of Theorem \ref{thm:main_thm_lr},
which characterizes the asymptotic risk of the linear regression
estimator $\thetalambda$ defined in Equation~\eqref{eq:linregridge}.

%The linear regression estimator can be written in closed form as follows.
We first introduce some notation and give the standard closed form solution for
the ridge regression estimate $\thetalambda$.  Denoting the input data
matrix by $\Xs \in \RR^{d \times n}$, the observation vector
$y\in\R^n$ reads $y = \Xs^\top \thetastar + \xi$ where $\xi\sim
\Normal(0,I)$ is the noise vector. The noise vector contains \iid{}
zero-mean $\sigma^2$-variance Gaussian noise as entries.  Defining the
empirical covariance matrix as $\Sx = \frac{1}{n} \Xs^\top \Xs$ yields
the ridge estimate
%% which allows us to express
%% the least-squares estimator as
\begin{equation}
\begin{split}
\thetalambda &= \frac{1}{n} (\tlambda \idmat_d + \Sx)^{-1} \Xs^\top y 
 \\
 &= (\tlambda \idmat_d + \Sx)^{-1} \Sx \thetastar + \frac{1}{n} (\tlambda \idmat_d + \Sx)^{-1} \Xs^\top \xi.
 \label{eq:ridgeclosedform}
 \end{split}
\end{equation}
For $\lambda \to 0$, we obtain the min-norm interpolator
\begin{equation*}
  \thetaminnorm = \lim_{\lambda \to 0} \thetalambda %= (\tlambda \idmat_d + \Sx)^{-1}\Xs^\top y 
  = \Sx^{\dagger} \Xs^\top y,
\end{equation*}
where $\Sx^{\dagger}$ denotes the Moore-Penrose pseudo inverse.
%% inverse $\Sx^{-1}$ may not exists. Instead, we replace $\Sx^{-1}$ with the Moore-Penrose pseudo inverse $\Sx^{\dagger}$, which is also the solution obtained when taking the limit ${\lim_{\lambda \to 0} (\tlambda \idmat_d + \Sx)^{-1}\Xs^\top = \Sx^{\dagger} \Xs^\top}$.

We now compute the adversarial risk of this estimator. By Equation
\eqref{eq:ARreg}, the adversarial risk depends on the estimator only
via the two terms $\|\thetalambda - \thetastar\|_2$ and $\|\Piper
\thetalambda \|_2$.  To characterize the asymptotic risk, we hence
separately derive asymptotic expressions for each of both terms.
The following convergence results hold almost surely with respect
to the draws of the train dataset, with input features $\Xs$ and
observations $y$, as $n,d \to \infty$.

\paragraph{Step 1: Characterizing $\| \thetalambda  - \thetastar\|^2_2$.} 
Here, we show that 
\begin{align}
\label{eq:normdiffest}
\| \thetalambda  - \thetastar\|^2_2 \to \Riskinfty = \Biasinfty + \Varinfty,
\end{align}
where ${\Biasinfty = \lambda^2 m'(-\lambda)}$ and ${\Varinfty = \sigma^2
  \gamma (m(-\lambda) - \lambda m'(-\lambda))}$ are the asymptotic
bias and variance. \citet{Hastie19} considers a similar setup and Theorem 5 of \cite{Hastie19} show
% \rh{Can you give more detail here, it is not clear how theorem 5 relates to this. Is Theorem 5 for the adversarial risk - no, right? Then say so. And this is unclear ``the expectation over the noise during training $\EE_{\xi} \|\thetalambda  - \thetastar\|^2_2 \to \Biasinfty + \Varinfty$,'' }
%% In the paper \cite{Hastie19}, the authors define the standard risk $\EE_{\xi} \|\thetalambda  - \thetastar\|^2_2 $ where the expectation is taken over the observation noise $\xi$ in the train dataset. In particular, Theorem  5 \cite{Hastie19}
 that ${\EE_{\xi} \|\thetalambda -
  \thetastar\|^2_2 \to \Biasinfty + \Varinfty}$ and the expectation is
taken over the observation noise $\xi$ in the train dataset. In this
paper, we define the population risks
%as $\SR(\theta) = \|\thetalambda - \thetastar\|^2_2$
without the expectation over the
noise. Hence, in a first step, the goal is to extend Theorem 5
\cite{Hastie19} for the standard risk $\SR(\thetalambda) = \|\thetalambda - \thetastar\|^2_2$ such that \eqref{eq:normdiffest} holds almost surely
over the draws of the training data.

% we show that we can replace the expectation with  $\|\thetalambda  - \thetastar\|^2_2 \to \Biasinfty + \Varinfty$ almost sure statement over the draws of $\xi$ as $d,n \to \infty$.

Using Equation~\eqref{eq:ridgeclosedform} we can rewrite
\begin{align*}
 \| \thetalambda  - \thetastar\|^2_2 &= \| \left(\idmat_d - (\tlambda \idmat_d + \Sx)^{-1} \Sx\right) \thetastar + \frac{1}{n} (\tlambda \idmat_d + \Sx)^{-1} \Xs^\top \xi\|_2^2 \\
  &= \underbrace{\| \left(\idmat_d - (\tlambda \idmat_d + \Sx)^{-1} \Sx\right) \thetastar \|_2^2}_{T_1} + \underbrace{\langle \frac{\xi}{\sqrt{n}}, (\tlambda \idmat_d + \Sx)^{-2}\Sx \frac{\xi}{\sqrt{n}}\rangle}_{T_2} \\
 &+   \underbrace{\left\langle \frac{\Xs^\top}{\sqrt{n}} (\tlambda \idmat_d + \Sx)^{-1}  \left(\idmat_d - (\tlambda \idmat_d + \Sx)^{-1} \Sx\right) \thetastar,    \frac{\xi}{\sqrt{n}} \right\rangle}_{T_3},
\end{align*}
% \rh{can you insert an extra step here please, how does the second equality follow from the first? The first and last term are clear from expanding the squares, the middle one is not.}
where we used for the second equality that ${\langle \frac{\xi}{\sqrt{n}}, \frac{X}{\sqrt{n}}  (\tlambda \idmat_d + \Sx)^{-2}\frac{X^\top}{\sqrt{n}} \frac{\xi}{\sqrt{n}}\rangle = \langle \frac{\xi}{\sqrt{n}}, (\tlambda \idmat_d + \Sx)^{-2}\Sx \frac{\xi}{\sqrt{n}}\rangle}$.

The first term $T_1 \to \Biasinfty$ follows directly via Theorem 5
\cite{Hastie19}.  We next show that $T_2 \to \Varinfty$ and $T_3\to 0$
almost surely, which establishes Equation~\ref{eq:normdiffest}.

%% Regading the first term in the RHS above: Theorem 5 \cite{Hastie19},
%% shows that ${\| \left(\idmat_d - (\tlambda \idmat_d + \Sx)^{-1}
%%   \Sx\right) \thetastar \|_2^2 \to \Biasinfty}$.

\paragraph{Proof that $T_2\to \Varinfty$:}
While Theorem 5 \cite{Hastie19} also shows that ${\EE_{\xi}
  \tr\left(\frac{1}{n} \xi \xi^\top \Sx (\tlambda \idmat_d +
  \Sx)^{-2}\right) \to \Varinfty}$, we require the convergence almost
surely over a single draw of $\xi$. In fact, this directly follows from the
same argument as used for the proof of Theorem 5 \cite{Hastie19} and
the fact that $\| \frac{\xi}{\sqrt{n}} \|_2^2 \to \sigma^2$. Hence
${\langle \frac{\xi}{\sqrt{n}}, (\tlambda \idmat_d + \Sx)^{-2}\Sx
  \frac{\xi}{\sqrt{n}}\rangle \to \Varinfty}$ almost surely over the
draws of $\xi$.

% \rh{After reading the proof of step 1, it seems that the proof is really the same as the proof as for Theorem 5. Can't we just refer to that proof in the first place? The way the proof is written now, one would go to the the proof of Theorem 5 in .. anyways because at the relevant parts, we say ``we use the same argument as..''}

%% The only thing left to show is that ${\left\langle \frac{\Xs}{\sqrt{n}} (\tlambda \idmat_d + \Sx)^{-1}  \left(\idmat_d - (\tlambda \idmat_d + \Sx)^{-1} \Sx\right) \thetastar,    \frac{\xi}{\sqrt{n}} \right\rangle} \to 0$.

\paragraph{Proof that $T_3\to 0$:} This follows straight forwardly from sub-Gaussian concentration inequalities and from the fact that
\begin{equation*}
 \left\|  \frac{\Xs}{\sqrt{n}} (\tlambda \idmat_d + \Sx)^{-1}  \left(\idmat_d - (\tlambda \idmat_d + \Sx)^{-1} \Sx\right) \thetastar \right\|_2 = O(1),
\end{equation*}
which is a direct consequence of the Bai-Yin theorem \cite{Bai93},
stating that for sufficiently large $n$, the non zero eigenvalues of
$\Sx$ can be almost surely bounded by $(1+\sqrt{\gamma})^2 \geq
\lambda_{\max}(\Sx) \geq \lambda_{\min}(\Sx) \geq
(1-\sqrt{\gamma})^2$. Hence we can conclude the first part of the
proof.

\paragraph{Step 2: Characterizing $\|\Piper \thetalambda\|_2$. } 
Here, we show that
\begin{align}
  \label{eq:vaduz}
\|\Piper \thetalambda\|_2^2 \to  \Riskinfty -\lambda^2 (m(-\lambda))^2 =:\Pperpinfty.
\end{align}
We assume without loss of generality that $\|\thetastar \|_2 =1$ and hence $\Piper = \idmat_d - \thetastar (\thetastar)^\top$. It follows that
\begin{align*}
 \|\Piper \thetalambda\|_2^2 
 &= \|\thetalambda\|_2^2 - \left(\langle \thetalambda, \thetastar \rangle\right)^2 \\
&= \|\thetastar - \thetalambda - \thetastar\|_2^2 - \left(1 - \langle \thetastar - \thetalambda, \thetastar \rangle\right)^2 \\
 &= \|\thetastar - \thetalambda\|_2^2 - 2 \langle \thetastar - \thetalambda, \thetastar \rangle +1  - \left(1 - \langle \thetastar - \thetalambda, \thetastar \rangle\right)^2 \\
 &= \|\thetastar - \thetalambda\|_2^2 - \left(\langle \thetastar - \thetalambda, \thetastar \rangle\right)^2.
\end{align*}
The convergence of the first term is already known form step 1. Hence,
it is only left to find an asymptotic expression for $\langle
\thetastar - \thetalambda, \thetastar \rangle$. Inserting the closed
form expression from Equation \eqref{eq:ridgeclosedform}, we obtain:
\begin{align}
  \label{eq:zurich}
 \langle \thetastar - \thetalambda, \thetastar \rangle = \langle \idmat_d - \left(\tlambda \idmat_d + \Sx)^{-1} \Sx\right) \thetastar, \thetastar \rangle - \langle  (\tlambda \idmat_d + \Sx)^{-1} \frac{\Xs^\top \xi}{n}, \thetastar \rangle.
\end{align}
Note that $\langle (\tlambda \idmat_d + \Sx)^{-1} \frac{\Xs \xi}{n},
\thetastar \rangle $ vanishes almost surely over the draws of $\xi$
using the same reasoning as in the first step. Hence, we only need to
find an expression for the first term on the RHS of
Equation~\eqref{eq:zurich}. Note that we can use Woodbury's matrix
identity to write:
\begin{align*}
 \langle \idmat_d - \left(\tlambda \idmat_d + \Sx)^{-1} \Sx\right) \thetastar, \thetastar \rangle =  \tlambda \langle (\tlambda \idmat_d + \Sx)^{-1} \thetastar, \thetastar \rangle. 
\end{align*}
However, the expression on the RHS appears exactly in the proof of Theorem 1 \cite{Hastie19} (Equation 116), which shows that
$ \tlambda \langle (\tlambda \idmat_d + \Sx)^{-1} \thetastar, \thetastar \rangle  \to \lambda m(-\lambda)$ with $m(z)$ as in Theorem \ref{thm:main_thm_lr}.  Hence the proof of almost sure convergence~\eqref{eq:vaduz} of $\|\Piper \thetalambda\|_2$ is complete.

Substituting Equations~\eqref{eq:normdiffest} and \eqref{eq:vaduz} into
robust risk \eqref{eq:ARreg} expression yields:
%% Inserting the expression for $\| \thetalambda  - \thetastar\|^2_2$ in  and the expression for $\|\Piper \thetalambda\|_2 $  into the robust risk \eqref{eq:ARreg} expression yields:
\begin{align*}
 \AR(\thetalambda) \: \overset{\text{a.s.}}{\longrightarrow} \:\Riskinfty +  \epstest^2  \Pperpinfty + \sqrt{\frac{8 \epstest^2}{\pi} \Pperpinfty  \Riskinfty} = \limitrisk.
\end{align*}
Finally, we note that $\lim_{\lambda \to 0} \limitrisk$ exists and is finite for any $\gamma \neq 1$ since:  $\lim_{\lambda \to 0} m(-z) = \frac{1
}{1-\gamma}$ for $\gamma <1$ and $\lim_{\lambda \to 0} m(-z) = \frac{1}{\gamma(\gamma-1)}$ for $\gamma >1$,  $\lim_{z\to 0} zm'(-z) = 0$, $\lim_{z \to
0} z^2m'(-z) = 0$ for $\gamma <1$ and $\lim_{z \to
0} z^2m'(-z)= 1 - \frac{1}{\gamma}$
for $\gamma >1$ (see also Corollary 5 in~\cite{Hastie19}). Hence, we can conclude from the continuity of the risk that $\AR(\thetahat_0)  \: \overset{\text{a.s.}}{\longrightarrow} \:\lim_{\lambda \to 0} \limitrisk$ and therefore, the proof is complete.

%\qed

\section{Details on Theorem~\ref{theo:logreg}}
\label{sec:logregtheory}

% \rh{I found this proof really difficult to follow because of lots of notation and frequent references to other works. Would it be possible to have a proof overview of say half a page or a page, which explains the idea and then go through the proof in more details and to simplify the notation? Also, what are the key differences to \cite{Javanmard20b}? I understand that the setup is slightly different as explained below, but does this really lead to a notable different proof? If yes, it would be good to point out in the proof overview where the two proofs diverge.}

In this section we give a formal statement for Theorem~\ref{theo:logreg}. The
results are based on the Convex Gaussian Minimax Theorem
(CGMT)~\cite{Gordon88,Thrampoulidis15}.
We first prove the case when training
with consistent perturbations~\eqref{eq:consistentpert} and noiseless observations.
Then, we show how the theorems extend to the case when training
with inconsistent perturbations~\eqref{eq:inconsistentpert} and
training label noise.
%Finally, in Section~\ref{sec:sovleprob}, we discuss
%how to numerically solve the optimization problems
%presented in Theorems~\ref{thm:nonseperable},\ref{thm:seperable}.

The results presented in this section
have similarities with the ones in~\cite{Javanmard20b}.
However, we study a discriminative data model
with features drawn from a single Gaussian and a
$1$-sparse ground truth.
In contrast, the authors of~\cite{Javanmard20b} study a generative data model
with features drawn from two Gaussians.
Furthermore, several papers study logistic regression for isotropic Gaussian
features in high dimensions~\cite{Salehi19,Sur19}, but focus their analysis
on the standard risk and do not consider adversarial robustness.

An immediate consequence of the proof of Lemma~\ref{lm:logregrisk} is
that the adversarial loss from Equation~\eqref{eq:logridge}
with respect to consistent $\ell_{\infty}$-attacks~\eqref{eq:consistentpert}
for the $1$-sparse ground truth has the closed-form equivalent
\begin{equation}
    \label{eq:logregrisk}
    \empriskrob(\theta) = \frac{1}{n} \sum_{i=1}^n \losstrain(y_i \langle\theta,\x_i\rangle - \eps\|\Piper \thetahat\|_1) + \lambda \|\theta\|_2^2,
\end{equation}
where $\Piper$ is the projection matrix to the orthogonal subspace of $\thetastar$.

Let $\Moreau_f(\x,t) = \min_{y} \frac{1}{2t}(x-y)^2 + f(y)$ be the Moreau envelope
and let $\Zpar,\Zperp$ be two independent standard normal random variables.
We can now state Theorem~\ref{thm:nonseperable} that describes the asymptotic risk
of $\thetaepslambda$, for $\lambda > 0$ and for
% \rh{Throughout this appendix, is $\thetahat$ equal to $\thetaepslambda$? If yes let's use the latter - we need to be consistent with the notation used in the main body.}
the asymptotic regime where $d,n \to \infty$.
The proof of the theorem can be found in \suppmat{}~\ref{sec:proofnonseperable}.

\begin{theorem}
    \label{thm:nonseperable}
    Assume that we have \iid{} random features $\x_i$
    drawn from an isotropic Gaussian,
    noiseless observations $\y_i = \sgn(\langle x_i, \thetastar\rangle)$,
    and ground truth $\thetastar = (1,0,\hdots,0)^\top$.
    Further, assume that $\lambda >0$ and $\eps = \eps_0/\sqrt{d}$, where $\eps_0$ is a numerical constant.
    Let $(\nuper^\star,\nupar^\star,  \orv^\star,\odelta^\star,\mu^\star, \tau^\star )$ be the unique solution of
    \begin{equation}
        \begin{split}
            \label{eq:nonseperable}
            & \min_{\substack{\nuper\geq 0, \tau\geq 0, \\ \nupar, \odelta \geq 0}}
            \max_{\substack{r\geq 0, \\ \mu\geq 0}} ~ \EE_{\Zpar,\Zperp}\left[ \Moreau_{\lossf}(\abs{\Zpar} \nupar + \Zperp \nuper - \eps_0 \odelta, \frac{\tau}{\orv})\right]  - \odelta \mu +\frac{\orv \tau}{2} + \lambda(\nuper^2 +\nupar^2) \\
            %       -\odelta&\EE_{\Zperp}\left[ \Moreau_{\vert.\vert}(\orv \sqrt{\gamma} \Zperp, \frac{\otaur\odelta}{\nuper})\right]
            &-\nuper \sqrt{\left[ (\mu^2+\gamma\orv^2) -  (\mu^2+\gamma\orv^2) \erf(\mu/(\sqrt{\gamma}\orv\sqrt{2})) - \sqrt{\frac{2}{\pi}} \sqrt{\gamma}\orv\mu \exp(-\mu^2/(\gamma \orv^2 2)) \right]} .
        \end{split}
    \end{equation}
    Then, for $\lambda >0$, the estimator $\thetaepslambda$ from Equation~\eqref{eq:logridge}
    with the logistic loss and consistent $\ell_{\infty}$-perturbations satisfies
    asymptotically as $d,n\to\infty$ and $d/n \to \gamma$ that
    \begin{align}
        \label{eq:solnonsep}
        \frac{1}{\sqrt{d}} \|\Piper \thetaepslambda \|_1  \to \odelta^\star ~~\mathrm{and}~~
        \langle \thetaeps,  \thetanull \rangle  \to \nupar^\star ~~\mathrm{and}~~
        \|\thetaepslambda \|_2^2 \to \nupar^{\star 2} + \nuper^{\star 2}.
    \end{align}
    The convergences hold in probability.
\end{theorem}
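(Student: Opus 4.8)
The plan is to reduce the minimization of $\empriskrob$ to a form amenable to the Convex Gaussian Minimax Theorem (CGMT)~\cite{Gordon88,Thrampoulidis15}, scalarize it, and then read off the limiting order parameters. First I would remove the dependence between the labels and the covariates. By Lemma~\ref{lm:logregrisk}, the robust loss equals the expression in Equation~\eqref{eq:logregrisk}, i.e.\ $\empriskrob(\theta)=\frac1n\sum_i\lossf\bigl(y_i\langle x_i,\theta\rangle-\eps\|\Piper\theta\|_1\bigr)+\lambda\|\theta\|_2^2$ with $\lossf$ the logistic loss. Writing $x_i=(x_{i,1},\tilde x_i)$ and $\theta=(\theta_1,\tilde\theta)$, we have $\|\Piper\theta\|_1=\|\tilde\theta\|_1$ and, since $\thetastar=(1,0,\dots,0)^\top$ gives $y_i=\sgn(x_{i,1})$, also $y_i x_{i,1}=|x_{i,1}|$; moreover $g_i:=y_i\tilde x_i$ is $\Normal(0,\idmat_{d-1})$-distributed and independent of $|x_{i,1}|$. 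Hence $\empriskrob(\theta)=\frac1n\sum_i\lossf\bigl(\theta_1|x_{i,1}|+\langle\tilde\theta,g_i\rangle-\eps\|\tilde\theta\|_1\bigr)+\lambda(\theta_1^2+\|\tilde\theta\|_2^2)$, and because $\eps=\eps_0/\sqrt d$ the penalty term is exactly $\eps_0\,\odelta$ with $\odelta=\|\tilde\theta\|_1/\sqrt d$.

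Second, I would introduce an auxiliary vector $v\in\R^n$ with $v_i=\langle\tilde\theta,g_i\rangle$ enforced by a multiplier $r\in\R^n$, so that the Gaussian matrix $G=[g_1,\dots,g_n]^\top$ appears only through the bilinear term $\tfrac1n r^\top G\tilde\theta$ and the remaining objective is convex in $(\theta,v)$ and concave (in fact linear) in $r$. Restricting $(\theta,v,r)$ to a large compact set — admissible a priori because $\lambda>0$ bounds $\|\thetaepslambda\|_2$ and logistic stationarity forces $|r_i|\le1$ — CGMT replaces this primary problem by the auxiliary one in which $\tfrac1n r^\top G\tilde\theta$ is substituted by $\tfrac1n\bigl(\|\tilde\theta\|_2\,r^\top h+\|r\|_2\,g^\top\tilde\theta\bigr)$ with fresh independent Gaussians $h\in\R^n$, $g\in\R^{d-1}$.

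Third comes the scalarization. Maximizing over the direction of $r$ at fixed norm turns the $h$-term together with $-\tfrac1n r^\top v$ into $\tfrac{\|r\|_2}{n}\bigl\|\,\|\tilde\theta\|_2\,h-v\,\bigr\|_2$; using $\sqrt x=\min_{\tau\ge0}\{x/(2\tau)+\tau/2\}$ and minimizing over $v$ collapses the loss term into $\tfrac1n\sum_i\Moreau_{\lossf}\bigl(|x_{i,1}|\,\theta_1+h_i\|\tilde\theta\|_2-\eps_0\odelta;\ \tau/\orv\bigr)$, which concentrates to $\EE_{\Zpar,\Zperp}\bigl[\Moreau_{\lossf}(|\Zpar|\nupar+\Zperp\nuper-\eps_0\odelta;\ \tau/\orv)\bigr]$, where $\nupar=\theta_1$, $\nuper=\|\tilde\theta\|_2$ and $\orv$ (the variable written $r$ in~\eqref{eq:nonseperable}) is the order parameter for $\|r\|_2$ after the natural normalization. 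The remaining $g$-term $\tfrac{\|r\|_2}{n}g^\top\tilde\theta$, combined with $\lambda\|\tilde\theta\|_2^2$ and the constraint $\|\tilde\theta\|_1=\sqrt d\,\odelta$, is treated by a rescaled multiplier $\mu\ge0$: the soft-thresholding identity gives $\min_{\|\tilde\theta\|_2=\nuper}\{a^\top\tilde\theta+\mu'\|\tilde\theta\|_1\}=-\nuper\|(|a|-\mu')_+\|_2$, and the normalized squared norm concentrates to $\EE_Z[(|\sqrt{\gamma}\,\orv\,Z|-\mu)_+^2]$, which has the closed form $(\mu^2+\gamma\orv^2)-(\mu^2+\gamma\orv^2)\erf\!\bigl(\tfrac{\mu}{\sqrt{\gamma}\,\orv\,\sqrt{2}}\bigr)-\sqrt{2/\pi}\,\sqrt{\gamma}\,\orv\,\mu\,e^{-\mu^2/(2\gamma\orv^2)}$. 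Collecting all pieces, together with the $-\mu\odelta$ from the $\ell_1$ Lagrangian, the $\orv\tau/2$ from the square-root identity, and $\lambda(\nuper^2+\nupar^2)$, yields exactly the deterministic scalar min--max~\eqref{eq:nonseperable}.

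Finally, I would verify that~\eqref{eq:nonseperable} has a unique saddle point $(\nuper^\star,\nupar^\star,\orv^\star,\odelta^\star,\mu^\star,\tau^\star)$ — strict convexity in $(\nuper,\nupar,\odelta,\tau)$ from $\lambda>0$ and the strict convexity of $\Moreau_{\lossf}$, strict concavity in $(\orv,\mu)$ from the explicit expression above — and then invoke the standard CGMT transfer: near-minimizers of the auxiliary problem satisfy $(\|\tilde\theta\|_2,\theta_1,\|\tilde\theta\|_1/\sqrt d)\to(\nuper^\star,\nupar^\star,\odelta^\star)$ in probability, and hence so does $\thetaepslambda$, giving $\|\Piper\thetaepslambda\|_1/\sqrt d\to\odelta^\star$, $\langle\thetaepslambda,\thetastar\rangle\to\nupar^\star$, and $\|\thetaepslambda\|_2^2=\theta_1^2+\|\tilde\theta\|_2^2\to\nupar^{\star 2}+\nuper^{\star 2}$. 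The main obstacle I expect is not any single identity but the bookkeeping forced by the scaling $\eps=\eps_0/\sqrt d$ interacting with the $\ell_1$ penalty: one must confirm that $\odelta,\nuper,\nupar$ are all $\Theta(1)$ so that $\eps_0\odelta$ is a nontrivial shift, establish uniform-in-$n$ compactness of both $\thetaepslambda$ and the dual variable $r$ so that CGMT genuinely applies, and push the non-separable $\ell_1$ term through the scalarization, which is precisely where the soft-thresholding reduction and its concentration carry the argument.
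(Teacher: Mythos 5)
Your proposal is correct and follows essentially the same route as the paper's proof: a Lagrangian reformulation exposing the Gaussian bilinear term, CGMT with the compactness justifications you give, scalarization via the square-root identity and the Moreau envelope with concentration by the law of large numbers, soft-thresholding for the $\ell_1$ part yielding the $\erf$ expression, and the transfer of the order parameters back to $\thetaepslambda$. The only cosmetic difference is that you decouple $\|\Piper\theta\|_1$ by introducing $\odelta$ directly with a Lagrange multiplier $\mu$, whereas the paper reaches the identical scalar objective by passing through the conjugate of the loss (with $\mu=\|w\|_\infty$); these are dual formulations of the same step.
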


For $\lambda >0$, the loss in Equation~\eqref{eq:logregrisk} has a unique minimizer.
In contrast, for $\lambda = 0$, the minimizer of Equation~\eqref{eq:logregrisk} is not unique.
In the latter case, we study the robust max-$\ell_2$-margin solution from Equation~\eqref{eq:maxmarginAE},
which corresponds to the limit when $\lambda \to 0$ (see Section~\ref{sec:estimator}).
Theorem \ref{thm:seperable} characterizes the asymptotic behavior of the corresponding solution,
with proof in \suppmat{}~\ref{sec:proofseperable}.

\begin{theorem}
    \label{thm:seperable}
    Assume that we have \iid{} random features $\x_i$
    drawn from an isotropic Gaussian,
    noiseless observations $\y_i = \sgn(\langle x_i, \thetastar\rangle)$,
    and $\thetastar = (1,0,\cdots,0)^\top$.
    Further, assume that $\lambda = 0$ and $\eps = \eps_0/\sqrt{d}$,
    where $\eps_0$ is a numerical constant.
    Let $(\nuper^\star,\nupar^\star, \orv^\star,\odelta^\star, \zeta^\star,\kappa^\star,\tau^\star)$ be the unique solution of
    %  \rh{I don't understand the $( \cdot )_+$ here - is the the relu function?  Pls define. Also, is the expectation squared or is expectation over the squared function? I assume the latter, but better to be clear with the notation here.}
    \begin{equation}
        \begin{split}
        	\label{eq:thmsepeq}
     	\max_{\substack{r\geq 0, \\ \zeta\geq 0 }}
        \min_{\substack{\nuper\geq 0, \\ \nupar, \delta\geq 0}}
        ~&\max_{ \kappa\geq 0}
        \min_{\tau\geq 0}
        ~ \nupar^2 - \kappa \nuper -\odelta \zeta - \frac{\gamma \orv^2}{4(1+\frac{\kappa}{2\tau})} \\
        +& r \sqrt{\EE_{\Zpar, \Zperp}
        \left[ \max\left(0,1+\eps_0\delta - \abs{\Zpar} \nupar + \Zperp \nuper\right)^2 \right]} \\
        +& \frac{1}{2(1+\frac{\kappa}{2\tau})} \left(\frac{\gamma \orv^2+\zeta^2}{2}\erf\left(\frac{\zeta}{\sqrt{2} \sqrt{\gamma} \orv}\right) - \frac{\zeta^2}{2} + \frac{\sqrt{\gamma} \orv \zeta}{\sqrt{2 \pi}} \exp\left(- \frac{\zeta^2}{2 \gamma \orv^2} \right) \right) + \frac{\kappa\tau}{2}.
        \end{split}
    \end{equation}
    Then, the estimator $\thetaeps$ from Equation~\eqref{eq:maxmarginAE} with the logistic loss
    and consistent $\ell_{\infty}$-perturbations satisfies asymptotically as $d,n\to\infty$ and $d/n \to \gamma$ that
    \begin{align}
        \label{eq:solsep}
        \frac{1}{\sqrt{d}} \|\Piper \thetaeps \|_1  \to \odelta^\star ~~\mathrm{and}~~
        \langle \thetaeps,  \thetanull \rangle \to \nupar^\star ~~\mathrm{and}~~
        \|\thetaeps \|_2^2 \to \nupar^{\star 2} + \nuper^{\star 2}.
    \end{align}
    The convergences hold in probability.
\end{theorem}

\begin{remark}
Theorem~\ref{theo:logreg} follows from Theorems~\ref{thm:nonseperable}~and~\ref{thm:seperable}
when inserting the expression from Equations~\eqref{eq:solnonsep},\eqref{eq:solsep}
into the expression of the risk in Lemma~\ref{lm:logregrisk}.
\end{remark}

\paragraph{Inconsistent adversarial attacks}
\label{sec:inconsistentproof}
We now show that Theorems~\ref{thm:nonseperable},\ref{thm:seperable}
also hold when training with inconsistent attacks~\eqref{eq:inconsistentpert}.

For inconsistent adversarial attacks, we simply need to change
$\eps \|\Piper \theta\|_1$ to $\eps \| \theta\|_1 = \eps \|\Piper \theta\|_1 + \eps \| \Pipar \theta\|_1$
in the optimization objective in Equations~\eqref{eq:thmnonsepridge},\eqref{eq:f5proofmaxma}.
To show that these modifications do not change the asymptotic solution as $d,n \to \infty$,
note that ${\eps \| \Pipar \theta\|_1 = \frac{\eps_0}{\sqrt{n}} \| \Pipar \theta\|_1 \to 0}$
which follows from the fact that  $\| \Pipar \theta\|_1$ remains bounded as $d,n \to \infty$.

% for any $\xi >0$ we have that
%
%
% can add the constraint  for some $\xi>0$ sufficiently large without changing the final solution.

% for some $\xi >0$, we can restrict  $\| \Pipar \theta\|_1 $
%  Finally, the fact that $\| \Pipar \theta\|_1 $ remains asymptotically bounded straightforwardly follows
%  the fact that $\| \Pipar \theta\|_1  \to \nupar$ when training with consistent attacks.
% Therefore, we can conclude that Theorem \ref{thm:nonseperable},\ref{thm:seperable} also hold when training with inconsistent adversarial attacks.

%  The goal is again to solve the optimization problem by formulation a system of equations. First, note that $\EE_Z (b+aZ)_+^2 = (b^2+a^2)/2(1+\erf(\frac{b}{\sqrt{2}a})) + \frac{a b}{\sqrt{2 \pi}} \exp(-b^2/(2 a^2))$. Hence, we cna write
% \begin{equation}
%  \sqrt{\EE_{Z,Z_{\sigma}} \left(1+\eps_0\delta - Z_{\sigma} \nupar + Z \nuper\right)_+} = \sqrt{\EE_{Z_{\sigma}} (b^2+a^2)/2(1+\erf(\frac{b}{\sqrt{2}a})) + \frac{a b}{\sqrt{2 \pi}} \exp(-b^2/(2 a^2))} =: \sqrt{T}
% \end{equation}
% with $a = \nuper$ and $b = 1+\eps_0\delta - Z_{\sigma} \nupar$. We use the following shorthand notation: Let $H = \EE_Z \huberloss \left(\frac{Z \sqrt{\gamma} \orv}{2(1+\frac{\kappa}{2\tau})}, \frac{\zeta}{2(1+\frac{\kappa}{2\tau})} \right)  $ be the expected huber loss and for the derivatives let $H_x$ denotes the derivative with respect to the first input of the Huber loss and $H_y$ of the second. Furthermore, we denote with $T_{\nuper}$ the derivative with respect to $\nuper$ and similarily for the other variables. In particular, we have
%

\paragraph{Label noise}
\label{sec:noiseprob}
While our results assume noiseless observations $y_i = \sgn(\langle x_i, \thetastar\rangle)$,
Theorem~\ref{thm:nonseperable},\ref{thm:seperable} can be extended to the case
where additional label noise is added to the observations.
That is, we observe $y_i = \sgn(\langle x_i, \thetastar\rangle) \xi_i$
with $\xi_i$ \iid{}, $\prob(\xi_i = 1) = 1- \sigma$
and $\prob(\xi_i = -1) = \sigma$, where $\sigma$ is the strength of the label noise.

Note that, as discussed in Section~\ref{sec:inconsistent},
the robust max-margin solution~\eqref{eq:maxmarginAE} might not exist for noisy observations.
In that case, the robust logistic regression estimate~\eqref{eq:logridge} has a unique solution for $\lambda =0$.
In fact, following the same argument as in~\cite{Javanmard20b}, asymptotically,
we can find a threshold $\gamma^\star$ such that for any $\gamma < \gamma^\star$,
the robust max-$\ell_2$-margin solution does not exist,
and for any $\gamma \geq \gamma^\star$, the robust max-$\ell_2$-margin solution exists.
The threshold can be found using the CGMT when following the same argument as in Theorem~6.1~of~\cite{Javanmard20b}.

Finally, we remark that,
when $\lambda >0$ or $\lambda = 0$ and $\gamma < \gamma^\star$,
we can extend Theorem~\ref{thm:nonseperable} by replacing $\abs{\Zpar}$ with $\xi \abs{\Zpar}$,
where $\xi$ is drawn from the same distribution as $\xi_i$ defined above.
Similarly, for $\lambda = 0$ and $\gamma \geq \gamma^\star$,
we can extend Theorem~\ref{thm:seperable} by replacing~$\abs{\Zpar}$ with~$\xi \abs{\Zpar}$.

\paragraph{Outline of the proof}

The proof of Theorems~\ref{thm:nonseperable},\ref{thm:seperable} heavily relies on the proofs
of Theorem~6.3~and~6.4 in~\cite{Javanmard20b}.
In particular, our proof essentially follows the same structure by first reducing the problem via an application of the Lagrange multiplier to an expression that suits the CGMT framework.
This allows us to instead study the auxiliary optimization problem as described in Equation~\eqref{eq:auxiliarycgmt},
which we then simplify to a scalar optimization problem using standard concentration inequalities
of Gaussian random variables.

The major difference to Theorems~6.3~and~6.4 in~\cite{Javanmard20b} is that we study a discriminative data model
with a sparse ground truth,
whereas Theorem~6.3~and~6.4 in~\cite{Javanmard20b} assume a generative data model
and, in particular, do not allow sparse ground truth vectors $\thetastar$.
This is due to the different attack sizes as we choose $\eps = \eps_0/\sqrt{d}$
while Theorems~6.3~and~6.4 in~\cite{Javanmard20b} assume a constant attack size $\eps = \eps_0$.

%
%
%
% label noise
%
%
% only modifications needed to be made in Theorem \ref{thm:nonseperable},\ref{thm:seperable}
%
%
%
% and we can therefore also use the same  Theorem \ref{thm:nonseperable}.
%

%
% this threshold with high probability.
%
%
%
% fact, we note that the asymptotic threshold $\gamma^\star$ such that the
% train loss does not vanish with high probability for all $\gamma < \gamma^\star$
% and vanishes with high probability for all $\gamma \geq \gamma^\star$ as $d/n
% \to \gamma$ can be found when solving a low dimensional scalar optimization
% problem similar to

%
% $\lambda > 0$
%
%
% when there exist one or an infinite number of solutions respectively. We can
% extend these results for the case when there is label noise... Note that label
% noise can prevent interpolation even for $\lambda=0$ leading to a unique
% solution in the case when the training data is not robustly linearly separable.
% The asymptotic threshold for which this happens...
%
%
%
%
% However, as discussed in Section \ref{sec:logregnoise}, label noise in the train
% data may prevent interpolation even when $\lambda=0$ and hence Equation
% \eqref{eq:logregrisk} may also have a unique bounded solution for $\lambda = 0$.
% \at{the unique solution in this case is not determined by simply replacing
% $\Zpar$ and setting $\lambda=0$ in thm F.1 though, right? should we say
% explicitly that we do not provide the explicit derivation for these instances?}

%
% thus the noise might prevent the unregulated estimator from achieving a vanishing train loss. This is also discussed in \suppmat{} \ref{sec:logregnoise}.

\subsection{Proof of Theorem \ref{thm:nonseperable}}

 \label{sec:proofnonseperable} Denote with $\Xs \in \RR^{n \times d}$ the input data matrix and with $y \in \RR^n$ the vector containing the observations.
Recall that the estimator $\thetahat$ is given by
\begin{align}
 \thetahat &= \arg\min_{\theta} \frac{1}{n} \sum_{i = 1}^{n} \lossf(\y_i \langle x_i, \theta \rangle - \eps  \|\Piper \theta\|_1) + \lambda \|\theta\|_2^2 \nonumber \\
 &= \arg\min_{\theta,v} \frac{1}{n} \sum_{i = 1}^{n} \lossf( v_i - \eps  \|\Piper \theta\|_1) + \lambda \|\theta\|_2^2 ~~\mathrm{such~that~} v  = D_y \Xs \theta, \label{eq:thmnonsepridge}
\end{align}
where $\lossf(x) = \log(1+\exp(-x))$ is the logistic loss, $\Xs \in \RR^{n\times
d}$ is the data matrix and $D_y$ the diagonal matrix with entries $(\Dy)_{i,i} =
\y_i$. We can then introduce the Lagrange multipliers $u \in \R^n$  to obtain
\begin{equation*}
 \min_{\theta,v} \max_{u} ~\frac{1}{n} \sum_{i=1}^{n} \lossf(v_i - \eps \|\Piper \theta\|_1) + \frac{1}{n} u^\top \Dy \Xs \theta - \frac{1}{n} u^\top v+ \lambda \|\theta\|_2^2.
\end{equation*}
% where
% Define $\Xs \Pipar = \Xs \Pi_{\parallel}$
%
% \rh{Shouldn't it be $\Pi_{\parallel}\X$ instead? If not, I don't understand what's going on. }
%
% and $\Xs \Piper = \Xs \Pi_{\perp}$, which allows us to write
%
% \rh{Every few lines you introduce a new variable or notation - this makes it extremely difficult to verify correctness and follow the proof. Is this really necessary? $\Xs \Piper$ is only used once, so it seems unnecessary! We need to simplify the notation to make this readable.}
Furthermore, we can separate $\Xs = \Xs \Piper + \Xs\Pipar$, which yields
\begin{align}
\label{eq:primalproblem}
 \min_{\theta,v} \max_{u} ~ \frac{1}{n} \sum_{i=1}^{n} \lossf(v_i - \eps \|\Piper \theta\|_1) + \frac{1}{n} u^\top \Dy \Xs \Pipar \theta
 +\frac{1}{n} u^\top \Dy \Xs \Piper \theta - \frac{1}{n} u^\top v + \lambda \|\theta\|_2^2.
\end{align}

\paragraph{Convex Gaussian Minimax Theorem}
We can now make use of the CGMT, which states that
\begin{equation}
    \label{eq:primalcgmt}
    \min_{\theta \in U_{\theta}} \max_{u \in U_{u}} u^\top \Xs \theta + \psi(u,\theta),
\end{equation}
with $\psi$ convex in $\theta$ and concave in $u$,
has asymptotically, when $d,n\to\infty$, $d/n \to \gamma$, pointwise the same solution as
\begin{equation}
    \label{eq:auxiliarycgmt}
    \min_{\theta \in U_{\theta}} \max_{u \in U_{u}} \|u\|_2 g^\top \theta + u^\top h \|\theta\|_2 + \psi(u,\theta),
\end{equation}
where $g \in \RR^d$ and $h\in \RR^n$ are random vectors with \iid{} standard normal entries,
and $U_{\theta}$ and $U_u$ are compact sets.
As is common in the literature, we call Equation~\eqref{eq:primalcgmt} the primal optimization problem
and Equation~\eqref{eq:auxiliarycgmt} the auxiliary optimization problem.
Several works have already used the CGMT to study high dimensional asymptotic logistic regression~\cite{Salehi19},
also when training with adversarial attacks~\cite{Javanmard20b}.
We omit the precise statement and refer the reader to~\cite{Thrampoulidis15}.
However, we note that we can apply the CGMT due to the following observations:
\begin{enumerate}
    \item The objective~\eqref{eq:primalproblem} is concave in $u$ and convex in $v,\theta$.
    \item We can restrict $u,v,\theta$ to compact sets without changing the solution.
    For $\theta$, we note that this is a consequence of $\lambda >0$,
    and for $u,v$, we note that the stationary condition requires $u_i = \lossf'(v_i - \eps \| \Piper \theta\|_1)$.
    \item $\Xs \Piper$ is independent of the observations $\y$ and of $\Xs \Pipar$.
\end{enumerate}
Therefore, as a consequence of the CGMT, we can show that the solution
of the primal optimization problem~\eqref{eq:primalproblem} asymptotically concentrates
around the same value as the solution of the following auxiliary optimization problem:

%
%
% We now verify that the optimization problem in  \eqref{eq:primalproblem} can be written in form of Equation \eqref{eq:primalcgmt}. First note that $\Xs \Piper$ is independent of the observations $\y$ and of $\Xs \Pipar$. Let $\tilde{\Xs}$ have \iid{} normal distributed entries such that $\Piper \tilde{\Xs} = \Piper \Xs$ and $\Pipar \tilde{\Xs}$ is independent of $\Pipar \Xs$. We can then rewrite Equation \eqref{eq:primalproblem} as
% \begin{equation}
% \label{eq:primalcgmt2}
%  \min_v \min_{\Piper \theta} \max_{D_y u} u^\top D_y \tilde{\Xs} \Piper \theta + \psi_v(D_y u,\Piper\theta),
% \end{equation}
% with $\psi_v$ convex  in $D_y u$ and concave in $\Piper \theta$ for every $v$. In order to apply the CGMT it is only left to show that $\theta$ and $u$ are contained in compact sets. Because $\lambda >0$, we know that the norm of the solution stays bounded and hence we can always find compact sets  $U_{\theta}$ and such that $\theta$ is contained in it.  Furthermore, note that the stationary condition for $v$ requires $u_i = \lossf'(v_i - \eps \| \Piper \theta\|_1)$, which shows that also the optimal solution for $u$ is contained in a compact set.
%
% Hence, we can apply the CGMT point-wise for every $v$ which states that the primal optimization problem~\eqref{eq:primalcgmt2} concentrates around the
% same solution with respect to the draws over $\tilde{X}$ as the solution of the following auxiliary optimization problem:

\begin{align*}
    \min_{\theta,v} \max_{u} ~ &\frac{1}{n} \sum_{i=1}^{n} \lossf(v_i - \eps \|\Piper \theta\|_1) + \frac{1}{n} u^\top \Dy \Xs \Pipar \theta +\frac{1}{n} \|u^\top \Dy\|_2 g^\top \Piper \theta \\ +&\frac{1}{n} u^\top \Dy h \| \Piper \theta \|_2  - \frac{1}{n} u^\top v + \lambda \|\theta\|_2^2,
\end{align*}
where $g\in\RR^d$ and $h\in \RR^n$ are vectors with \iid{} standard normal entries.

\paragraph{Scalarization of the optimization problem}
We now aim to simplify the optimization problem.
In a first step, we maximize over $u$. For this, define $\orv =  \|u\|_2/\sqrt{n}$, which allows us to equivalently write
\begin{equation*}
    \min_{\theta,v} \max_{\orv \geq 0} ~~ \frac{1}{n} \sum_{i=1}^{n} \lossf(v_i - \eps \|\Piper \theta\|_1) + \frac{r}{\sqrt{n}} \|\Dy \Xs \Pipar \theta  + \Dy h \| \Piper \theta \|_2  -  v \|_2 +\frac{1}{\sqrt{n}} \orv g^\top \Piper \theta + \lambda \|\theta\|_2^2,
\end{equation*}
where we have used the fact that $\|u^\top \Dy\|_2 = \|u\|_2$.
In order to proceed, we want to separate
$\Piper \theta$ from the loss $\lossf(v,\Piper \theta) := \frac{1}{n}
\sum_{i=1}^{n} \lossf(v_i - \eps \|\Piper \theta\|_1)$.
Denoting the conjugate of $\lossf$ by $\conjlossf$,
we can write $\lossf(v,\Piper \theta)$ in terms of its conjugate with respect to $\Piper \theta$:
\begin{align*}
    \lossf(v,\Piper \theta) &= \sup_{w} \frac{1}{\sqrt{d}}w^\top  \Piper \theta - \conjlossf(v,w) \\
    &= \sup_{w} \frac{1}{\sqrt{d}}w^\top  \Piper \theta -
    \sup_{\odelta \geq 0} \left(
    \frac{\sqrt{d}}{\sqrt{d}}\odelta \|w\|_{\infty} -
    \frac{1}{n} \sum_{i=1}^n \lossf(v_i - \sqrt{d}\eps
    \odelta) \right) \\
    &= \sup_{w} \inf_{\odelta \geq 0}
    \frac{1}{\sqrt{d}}w^\top  \Piper \theta - \odelta
    \|w\|_{\infty} + \frac{1}{n} \sum_{i=1}^n \lossf(v_i -
    \eps_0 \odelta),
\end{align*}
where, for the second identity, we use the derivation for the conjugate of $\lossf$
from Lemma~A.2 in the paper~\cite{Javanmard20b}.
% \begin{equation*}
%  \conjlossf(v,w) = \sup_{\odelta \geq 0} \frac{\sqrt{d}}{\sqrt{d}}\odelta \|w\|_{\infty} - \frac{1}{n} \sum_{i=1}^n \lossf(v_i - \sqrt{d}\eps \odelta) = \sup_{\odelta \geq 0} \odelta \|w\|_{\infty} - \frac{1}{n} \sum_{i=1}^n \lossf(v_i - \eps_0 \odelta).
% \end{equation*}
Hence, we obtain:
\begin{align}
	\label{eq:delta}
	\max_{\substack{r\geq 0}} \min_{\substack{\theta,v }}\max_{w} \min_{\odelta\geq0}~~\frac{1}{n} &\sum_{i=1}^n \lossf(v_i - \eps_0 \odelta)  + \frac{r}{\sqrt{n}} \left\|\Dy \Xs \Pipar \theta + \Dy h \| \Piper \theta \|_2  -  v \right\|_2 + \lambda \|\theta\|_2^2 \\  +&  \frac{1}{\sqrt{d}}w^\top \Piper \theta - \odelta \|w\|_{\infty}  + \frac{1}{\sqrt{n}} \orv g^\top \Piper \theta.
	\end{align}
In particular, note that the problem is concave in $r, w$ and convex in $\theta,v,\delta$. Thus, we can 
interchange the order of maximization and minimization:
\begin{align}
		\max_{\substack{r\geq 0}} \min_{\substack{v}} \min_{\odelta\geq0}\max_{w}\min_{\theta}~~\frac{1}{n} &\sum_{i=1}^n \lossf(v_i - \eps_0 \odelta)  + \frac{r}{\sqrt{n}} \left\|\Dy \Xs \Pipar \theta + \Dy h \| \Piper \theta \|_2  -  v \right\|_2 + \lambda \|\theta\|_2^2 \\  +&  \frac{1}{\sqrt{d}}w^\top \Piper \theta - \odelta \|w\|_{\infty}  + \frac{1}{\sqrt{n}} \orv g^\top \Piper \theta   \nonumber.
% \max_{\substack{r\geq 0}} \min_{\substack{\theta,v,\\ \odelta\geq0}} ~~\frac{1}{n} &\sum_{i=1}^n \lossf(v_i - \eps_0 \odelta)  + \frac{r}{\sqrt{n}} \left\|\Dy \Xs \Pipar \theta + \Dy h \| \Piper \theta \|_2  -  v \right\|_2 + \lambda \|\theta\|_2^2 \\  +& \max_w\left[ \frac{1}{\sqrt{d}}w^\top \Piper \theta - \odelta \|w\|_{\infty}  + \frac{1}{\sqrt{n}} \orv g^\top \Piper \theta \right]  \nonumber.
\end{align}
Next, we simplify the optimization over $\theta$. Write $\Pipar \theta = \Pipar 1 \nupar$ with $\nupar \in \RR$ (here we use the fact that $\thetastar = (1,0,\cdots,0)$) and let $\nuper = \|\Piper \theta\|_2$. We can simplify:
\begin{align}
\label{eq:nuperpar}
 \max_{\substack{r\geq 0}} \min_{\substack{\nuper \geq 0,\\ \odelta\geq0, \\ \nupar, v}}\max_w   ~~&\frac{1}{n} \sum_{i=1}^n \lossf(v_i - \eps_0 \odelta)
  + \frac{r}{\sqrt{n}} \|\Dy \Xs \Pipar 1 \nupar  + \Dy h \nuper  -  v \|_2  + \lambda (\nupar^2 +\nuper^2)\\
  -&  \frac{1}{\sqrt{d}}\nuper \| \Piper(w - \sqrt{\gamma} \orv g)\|_2  -\odelta \|w\|_{\infty}  \nonumber
\end{align}
In order to obtain a low dimensional scalar optimization problem, we still need to scalarize the optimization over $w$ and $v$. For this, we replace the term $ \|\Dy \Xs \Pipar 1 \nupar  + \Dy h \nuper  -  v \|_2 $ with its square, which is achieved by using the following identity $\min_{\tau\geq0} \frac{x^2}{2\tau} + \frac{\tau}{2} = x$.
Hence,
\begin{align*}
\max_{\substack{r\geq 0}} \min_{\substack{\nuper \geq 0, \tau \geq 0\\ \odelta\geq0, \\ \nupar, v}}~~ &\frac{1}{n} \sum_{i=1}^n \lossf(v_i - \eps_0 \odelta)
+\frac{\orv}{2\tau n} \|\Dy \Xs \Pipar 1 \nupar   + \Dy h \nuper  -  v \|_2^2+ \frac{\tau \orv}{2}  + \lambda (\nupar^2 +\nuper^2)
  \\
  +& \max_w  \left[- \frac{1}{\sqrt{d}}\nuper \| \Piper(w - \sqrt{\gamma} \orv g)\|_2  -\odelta \|w\|_{\infty} \right].
\end{align*}
We can now separately solve the following two inner optimization problems:
\begin{align}
\label{eq:term1_opt}
 &\max_{w}~ - \nuper\frac{1}{d}\| \Piper(w - \sqrt{\gamma} \orv g)\|_2  -\odelta \|w\|_{\infty} \\
  \label{eq:term2_opt}
 &\min_{v}~ \frac{\orv}{2\tau n} \|\Dy \Xs \Pipar 1 \nupar +  \Dy h \nuper  -  v \|_2^2 +  \sum_{i=1}^n \lossf(v_i - \eps_0 \odelta)
\end{align}
\paragraph{Equation \eqref{eq:term1_opt}}
 Let $\SoftT_t(x) = \left\{\begin{matrix} 0 & \abs{x} \leq t \\ \sgn(x)(\abs{x}-t) &\mathrm{else}
\end{matrix}\right. $ be the soft threshold function. We have
\begin{align*}
 &\max_{w}~ -\nuper \frac{1}{\sqrt{d}}\| \Piper(w - \sqrt{\gamma} \orv g)\|_2  -\odelta \|w\|_{\infty} \\
 = - &\min_{w}~  \nuper\frac{1}{\sqrt{d}}\| \Piper(w - \sqrt{\gamma} \orv g)\|_2  +\odelta \|w\|_{\infty} \\
 \overset{\mu = \|w\|_{\infty}}{=} -&\min_{\mu \geq0} \nuper~ \sqrt{ \frac{1}{d} \sum_{i=2}^d \left(\SoftT_{\mu}\left(\sqrt{\gamma} \orv g_i\right)\right)^2} +\odelta \mu \\
 \overset{\mathrm{LLN~as~} d\to \infty}{\to} -&\min_{\mu \geq0} \nuper~ \sqrt{\EE_Z \left(\SoftT_{\mu}\left(\sqrt{\gamma} \orv Z \right)\right)^2} +\odelta \mu,
\end{align*}
where we used in the third line that the ground truth $\thetastar$ is $1$-sparse and in the last line that the expectation exists for $ Z \sim \Normal(0,1)$. Finally, we can further simplify
\begin{align*}
 \EE_Z \left(\SoftT_{\mu}\left(\sqrt{\gamma} \orv Z \right)\right)^2 &= \gamma r^2 \EE_Z \left(\SoftT_{\mu/(\sqrt{\gamma}r)}\left( Z \right)\right)^2 \\
 &= \gamma r^2 \EE_Z (Z- \mu/(\sqrt{\gamma} r))^2 - \EE_Z \indicator_{\abs{Z} \leq \mu/(\sqrt{\gamma} r)} (Z- \mu/(\sqrt{\gamma} r))^2 \\
 &=  (\mu^2+ \gamma r^2)\left(1 -  \erf(\mu/(\sqrt{2\gamma} r)) \right)- \sqrt{\gamma} r \mu \sqrt{\frac{2}{\pi}}  \exp(-\mu/(2 \gamma r^2 )).
\end{align*}
Hence, we can conclude the first term.

\paragraph{Equation \eqref{eq:term2_opt}}
For the second term we also aim to apply the law of large numbers. We have
\begin{align*}
  &\min_{v}~ \frac{\orv}{2\tau n} \|\Dy \Xs \Pipar 1 \nupar  + \Dy h \nuper  -  v \|_2^2 +  \frac{1}{n}\sum_{i=1}^n \lossf(v_i - \eps_0 \odelta) \\
  \overset{\tildev = v - \eps\odelta}{=} &\min_{\tildev}~  \frac{\orv}{2\tau n} \|\Dy \Xs \Pipar 1 \nupar +  \Dy h \nuper  -  \tildev - \eps_0 \odelta \|_2^2 +  \frac{1}{n} \sum_{i=1}^n \lossf(\tilde{v}_i) \\
  = &\min_{\tildev}~ \frac{1}{n} \sum_{i=1}^n \frac{\orv}{2\tau n} \left((\Dy \Xs \Pipar 1 \nupar)_i +  (\Dy h \nuper)_i  -  \tildev_i - \eps_0 \odelta \right)^2 +   \lossf(\tilde{v}_i) \\
  \overset{\mathrm{LLN}}{\to} &~~\EE_{\Zpar,\Zperp}\left[ \Moreau_{\lossf}(\abs{\Zpar}\nupar + \Zperp \nuper - \eps_0 \odelta, \frac{\tau}{\orv})\right],
\end{align*}
 where in the last line we used that $(\Dy\Xs \Pipar 1)_i = \y_i\x_i^\top\thetanull = \sgn(\x_i^\top\thetanull) \x_i^\top\thetanull$ has the same distribution as $\abs{\Zpar}$ with  $ \Zpar \sim \Normal(0,1)$. Further, to apply the law of large numbers, we need to show that the Moreau envelope exists. Similarly to Theorem 1 \cite{Salehi19}, this follows immediately when noting that ${\Moreau_{\lossf}(x,\mu) \leq \lossf(x) = \log(1+\exp(-x)) \leq \log(2) + \abs{x}}$.
Finally, we obtain the desired optimization problem in Equation \eqref{eq:nonseperable} when combining these results.

\paragraph{Convergence}
% So far, we only proved point-wise convergence to the same solution for the primal~\eqref{eq:thmnonsepridge} and auxiliary~\eqref{eq:nonseperable} optimization problem. However, since we also optimize over $v$, we need to show uniform convergence to guarantee that the solutions are asymptotically the same. This follows when using similar arguments as in Lemma A.5 \cite{Thrampoulidis15}.
One can check that the optimization problems defined in
Equations~\eqref{eq:nonseperable},\eqref{eq:thmnonsepridge} are convex
in the variables that we minimize over,
and concave in the variables that we maximize over. Indeed,
Equation~\eqref{eq:thmnonsepridge} is immediate and
Equation~\eqref{eq:nonseperable} follows straightforwardly from the fact that
the problem in Equation~\eqref{eq:delta} is convex and concave as desired.
Therefore, also the problem in Equation ~\eqref{eq:nonseperable} satisfies the
convexity and concavity properties that we need. 
Hence, both problems in Equations~\eqref{eq:nonseperable},\eqref{eq:thmnonsepridge} have a unique solution.
Finally, note that the optimum $\delta^\star$ in Equation \eqref{eq:delta}
satisfies $\delta^\star = \frac{1}{\sqrt{d}} \| \Piper \theta\|$, and similarly
the optima $\nuper^\star$ and $\nupar^\star$ in Equation \eqref{eq:nuperpar}
satisfy $\nuper^\star = \| \Piper \theta\|_2$ and $\nupar^\star = \langle
\theta,  \thetanull \rangle$. We can therefore conclude the proof as the
solutions of the optimization problems~\eqref{eq:nonseperable}, \eqref{eq:thmnonsepridge}
concentrate asymptotically around the same optima as $d,n \to \infty$.
%\qed

 \subsection{Proof of Theorem \ref{thm:seperable}}
 \label{sec:proofseperable}
Recall the robust max-margin solution from Equation~\eqref{eq:maxmarginAE}:
\begin{equation}
\label{eq:f5proofmaxma}
   \min_{\theta,\delta} \|\theta\|_2^2 ~~\mathrm{such~that}~ \langle \theta, \x_i
 \rangle - \delta \geq 1 ~\mathrm{for~all}~ i ~\mathrm{and}~ \eps \|\Piper \theta\|_1 = \delta
  \end{equation}
Like in the previous section, after introducing the Lagrange multipliers $\zeta$ and $u$ we can equivalently write
  \begin{equation}
  \label{eq:delta2}
   \min_{\theta, \delta}\max_{\substack{u: u_i\geq 0, \\ \zeta\geq 0}}
   \|\theta\|_2^2 + \frac{1}{n}u^\top \left( 1 + 1 \eps_0 \delta - D_y \Xs \theta\right) + \zeta\left(\frac{\| \Piper \theta|_1}{\sqrt{d}} -\delta \right),
   \end{equation}
   and again separating $\Xs=\Xs \Piper + \Xs \Pipar$, we get
   \begin{equation}
   \label{eq:primalsep}
      \min_{\theta, \delta}\max_{\substack{u: u_i\geq 0, \\ \zeta\geq 0}}
   \|\theta\|_2^2 + \frac{1}{n}u^\top \left( 1 + 1 \eps_0 \delta - D_y \Xs \Pipar \theta - D_y \Xs \Piper \theta \right) + \zeta \left(\frac{\|\Piper \theta\|_1}{\sqrt{d}} -\delta  \right) .
     \end{equation}
\paragraph{Convex Gaussian Minimax Theorem}
     Since the adversarial attacks are consistent and the observations are
     noiseless, we know the solution in Equation \eqref{eq:f5proofmaxma} exists
     for all $d,n$. Yet, in order to apply the CGMT, we have to show that we can
     restrict $u$ and $\theta$ to compact sets. This follows from a simple trick
     as explained in Section~D.3.1 in~\cite{Javanmard20b}. Hence, the primal
     optimization problem from Equation \eqref{eq:primalsep} can be
     asymptotically replaced with the following auxiliary optimization problem,
where, as before, $g \in \RR^d$ and $h\in \RR^n$ are random vectors with \iid{}
standard normal entries:
     \begin{equation}
  \begin{split}
  \label{eq:auxiliary2}
      \min_{\theta, \delta}
      \max_{\substack{u: u_i\geq 0, \\ \zeta\geq 0}}
      & \|\theta\|_2^2 + \frac{1}{n}u^\top
      \left( 1 + 1 \eps_0 \delta - D_y \Xs \Pipar \theta + D_y h \|\Piper \theta\|_2 \right) \\
      +& \frac{1}{n}\|u\|_2  g^\top \Piper \theta + \zeta\left(\frac{\|\Piper \theta\|_1}{\sqrt{d}} -\delta\right) .
   \end{split}
     \end{equation}
     \paragraph{Scalarization of the optimization problem}
     The goal is again to scalarize the optimization problem. As a first step, we can solve the optimization over $u$ when defining $r = \frac{\|u\|_2}{\sqrt{n}}$:
    \begin{equation*}
    \begin{split}
        \min_{\theta, \delta\geq 0}
        \max_{\substack{r\geq 0, \\ \zeta\geq 0}}
        & \|\theta\|_2^2 + \frac{r}{\sqrt{n}}
        \| \max\left(0, 1 + 1 \eps_0 \delta - D_y \Xs \Pipar \theta + D_y h \|\Piper \theta\|_2 \right)\|_2 \\
        +& \frac{r\sqrt{\gamma}}{\sqrt{d}} g^\top \Piper \theta + \zeta\left( \frac{1}{\sqrt{d}}\|\Piper \theta\|_1 -\delta\right),
     \end{split}
     \end{equation*}
where $\max$ applies element-wise over the vector.
We can now swap maximization and minimization
since the objective is convex in $\theta,\delta$ and concave in $r$:
    \begin{equation*}
	\begin{split}
		\max_{\substack{r\geq 0, \\ \zeta\geq 0}}\min_{\theta, \delta\geq 0}~
		&\|\theta\|_2^2 + \frac{r}{\sqrt{n}}\| \max\left(0, 1 + 1 \eps_0 \delta - D_y \Xs \Pipar \theta + D_y h \|\Piper \theta\|_2 \right)\|_2 \\
		+ &\frac{r\sqrt{\gamma}}{\sqrt{d}} g^\top \Piper \theta + \zeta\left( \frac{1}{\sqrt{d}}\|\Piper \theta\|_1 -\delta\right) .
	\end{split}
\end{equation*}
We now want to separate
$\|\Piper \theta\|_2$ from the term in $\max$.
This is achieved by introducing the variable $\nuper \geq 0$
and the Lagrange multiplier $\kappa$.
Further, we set $\nupar = \langle \thetastar, \Pipar \theta\rangle$
(recall that $\thetastar = (1,0,\cdots,0)$),
which allows us to equivalently write
\begin{equation}
\begin{split}
    \max_{\substack{r\geq 0, \\ \zeta\geq 0 }}
    \min_{\substack{\nuper\geq 0, \\ \nupar, \delta\geq 0, \Piper \theta}}
    ~\max_{ \kappa\geq 0}~
    & \nupar^2 + \|\Piper \theta\|_2^2 +\kappa(\|\Piper \theta\|_2 - \nuper) \\
    +& \frac{r}{\sqrt{n}}\| \max\left(0, 1 + 1 \eps_0 \delta - D_y \Xs \Pipar \theta^* \nupar + D_y h \nuper \right)\|_2 \\
    +& \frac{r\sqrt{\gamma}}{\sqrt{d}} g^\top \Piper \theta +  \zeta\left(\frac{1}{\sqrt{d}}\|\Piper \theta\|_1 -\delta\right) .
\end{split}
\end{equation}

Next, we use again the trick $\min_{\tau\geq0} \frac{x^2}{2\tau} + \frac{\tau}{2} = x$, which yields
\begin{equation}
\begin{split}
    \label{eq:nuperpar2}
    \max_{\substack{r\geq 0, \\ \zeta\geq 0 }}
    \min_{\substack{\nuper\geq 0, \\ \nupar, \delta\geq 0, \Piper \theta}}
    ~\max_{ \kappa\geq 0}
    \min_{\tau\geq 0}~
    & \nupar^2  + \|\Piper \theta\|_2^2 -\kappa \nuper + \frac{\kappa}{2\tau}\|\Piper \theta\|_2^2 \\
    &+ \frac{\kappa\tau}{2}+ \frac{r}{\sqrt{n}}\| \max\left(0, 1 + 1 \eps_0 \delta - D_y \Xs \Pipar \theta^* \nupar + D_y h \nuper \right)\|_2 \\
    &+ \frac{r\sqrt{\gamma}}{\sqrt{d}} g^\top \Piper \theta +  \zeta\left(\frac{1}{\sqrt{d}}\|\Piper \theta\|_1 -\delta\right) .
\end{split}
\end{equation}
In a next step, note that due to high dimensional concentration, we have that
     \begin{align*}
 &\frac{r}{\sqrt{n}}\| \max\left(0, 1 + 1 \eps_0 \delta - D_y \Xs \Pipar \theta^* \nupar + D_y h \nuper \right)\|_2 \\ \overset{\mathrm{LLN}}{\to}~ &r \sqrt{\EE_{\Zpar,\Zperp} \left[\max\left(0,1+\eps_0\delta - \abs{\Zpar} \nupar + \Zperp \nuper\right)^2\right]} =: \sqrt{T},
\end{align*}
% Further, we can extend the formula to:
%     \begin{equation}
%   \begin{split}
%       \min_{\nuper\geq 0,\nupar, \delta,\otaur \geq 0}\max_{r\geq 0, \zeta\geq 0, \kappa}
%    \nupar^2  + \|\Piper \theta\|_2^2 +\frac{\kappa}{2\otaur} \|\Piper \theta\|_2^2 + \frac{\kappa \otaur}{2} -\kappa\nuper + T
%    + \frac{r\sqrt{\gamma}}{\sqrt{d}}\|u\|_2  g^\top \Piper \theta + \zeta(\frac{\|\Piper \theta\|_1}{\sqrt{d}} -\delta)
%    \end{split}
%      \end{equation}
where $\Zpar,\Zperp$ are standard Gaussian distributed random variable. 
Next, by completion of the squares we get
    \begin{equation*}
  \begin{split}
    \max_{\substack{r\geq 0, \\ \zeta\geq 0 }}  \min_{\substack{\nuper\geq 0, \\ \nupar, \delta\geq 0, \Piper \theta}}~\max_{ \kappa\geq 0}\min_{\tau\geq 0}~
   &\nupar^2  + (1+\frac{\kappa}{2\tau})\|\Piper \theta + \frac{r \sqrt{\gamma}}{\sqrt{d} 2(1+\frac{\kappa}{2\tau})} g\|_2^2 - \frac{r^2 \gamma}{ 4 (1+\frac{\kappa}{2\tau})} \|g/\sqrt{d}\|_2^2
   \\
   & -\kappa\nuper + \sqrt{T} +\zeta  \left(\frac{1}{\sqrt{d}}\|\Piper \theta\|_1 -\delta\right) + \frac{\kappa\tau}{2}
   \end{split}
     \end{equation*}
with $\|g/\sqrt{d}\|_2^2 \to 1$.  Next, note that we can again swap minimization and maximization due to the convexity and concavity, respectively, of the optimization. Hence, we can rewrite
    \begin{equation*}
	\begin{split}
		\max_{\substack{r\geq 0, \\ \zeta\geq 0 }}  \min_{\substack{\nuper\geq 0, \\ \nupar, \delta\geq 0}}~\max_{ \kappa\geq 0}\min_{\tau\geq 0, \Piper \theta}~
		&\nupar^2  + (1+\frac{\kappa}{2\tau})\|\Piper \theta + \frac{r \sqrt{\gamma}}{\sqrt{d} 2(1+\frac{\kappa}{2\tau})} g\|_2^2 - \frac{r^2 \gamma}{ 4 (1+\frac{\kappa}{2\tau})} \|g/\sqrt{d}\|_2^2
		\\
		& -\kappa\nuper + \sqrt{T} +\zeta  \left(\frac{1}{\sqrt{d}}\|\Piper \theta\|_1 -\delta\right) + \frac{\kappa\tau}{2} .
	\end{split}
\end{equation*}
Finally, to obtain the desired optimization problem, we only need to solve  the inner optimization over $\Piper \theta$.  For this, we can write:
\begin{equation*}
\begin{split}
 \min_{\Piper \theta}~&(1+\frac{\kappa}{2\tau})\|\Piper \theta + \frac{r \sqrt{\gamma}}{\sqrt{d} 2(1+\frac{\kappa}{2\tau})} g\|_2^2 + \zeta \frac{\|\Piper \theta\|_1}{\sqrt{d}}  \\
 \overset{\tilde{\theta}_{\perp} = \frac{\Piper \theta}{\sqrt{d}}}= \min_{\tilde{\theta}_{\perp}} &\frac{1}{d} (1+\frac{\kappa}{2\tau})\| \tilde{\theta}_{\perp} +\frac{r \sqrt{\gamma}}{2(1+\frac{\kappa}{2\tau})} g \|_2^2 + \zeta \frac{\|\Piper \theta\|_1}{d}
 \\ = &\frac{1}{d} \sum_{i=2}^d  \min_{(\tilde{\theta}_{\perp})_i}~ (1+\frac{\kappa}{2\tau})((\tilde{\theta}_{\perp})_i +\frac{r \sqrt{\gamma}}{ 2(1+\frac{\kappa}{2\tau})} g_i)^2 +  \zeta \abs{(\tilde{\theta}_{\perp})_i}
   \\ = &\frac{1}{d} 2(1+\frac{\kappa}{2\tau}) \sum_{i=2}^d  \min_{(\tilde{\theta}_{\perp})_i}~ \frac{1}{2}((\tilde{\theta}_{\perp})_i +\frac{r \sqrt{\gamma}}{ 2(1+\frac{\kappa}{2\tau})} g_i)^2 + \frac{\zeta}{2(1+\frac{\kappa}{2\tau})} \abs{(\tilde{\theta}_{\perp})_i}
    \\ = &\frac{1}{d} 2(1+\frac{\kappa}{2\tau}) \sum_{i=2}^d  \huberloss(-\frac{r \sqrt{\gamma}}{ 2(1+\frac{\kappa}{2\tau})} g_i, \frac{\zeta}{2(1+\frac{\kappa}{2\tau})})
\\ \to  & 2(1+\frac{\kappa}{2\tau}) \EE_Z ~\huberloss \left(\frac{r \sqrt{\gamma}}{ 2(1+\frac{\kappa}{2\tau})} Z, \frac{\zeta}{2(1+\frac{\kappa}{2\tau})}\right)
 \end{split}
\end{equation*}
where we solve the optimization in the fourth line with  $\huberloss$ being the Huber loss, given by ${\huberloss(x,y) = \left\{ \begin{matrix} 0.5 x^2 ~~\abs{x} \leq y \\ y (\abs{x} - 0.5 y)                                                                                       \end{matrix} \right.}$. Finally, we can conclude the proof from
\begin{equation*}
 \EE_Z ~\huberloss \left(a Z, b\right) =  \frac{a^2+b^2}{2}\erf\left(\frac{b}{\sqrt{2} a}\right) - \frac{b^2}{2} + \frac{a b}{\sqrt{2 \pi}} \exp\left(- \frac{b^2}{2 a^2} \right) .
\end{equation*}

\paragraph{Convergence}
One can check that  the optimization problems defined in Equations~\eqref{eq:thmsepeq},\eqref{eq:primalsep} are convex in the variables which we minimize over and concave in the variables which we maximize over. 
Indeed, Equation~\eqref{eq:primalsep} is immediate and 
 Equation~\eqref{eq:thmsepeq} follows straightforwardly, like before, from the
 fact that the desired convexity and concavity are satisfied for the problem defined in Equation~\eqref{eq:nuperpar2}. 
Thus, both problems defined in Equations~\eqref{eq:thmsepeq},\eqref{eq:primalsep} have unique solutions.
We note again that the optimum $\delta^\star$ in Equation \eqref{eq:delta2}
satisfies $\delta^\star = \frac{1}{\sqrt{d}} \| \Piper \theta\|$, and similarly
the optima $\nuper^\star$ and $\nupar^\star$ in Equation \eqref{eq:nuperpar2}
satisfy $\nuper^\star = \| \Piper \theta\|_2$ and $\nupar^\star = \langle
\theta,  \thetanull \rangle$. Hence we can conclude the proof as the solutions
of problems~\eqref{eq:thmsepeq}, \eqref{eq:primalsep}
concentrate asymptotically as $d,n \to \infty$ around the same optima.

\end{document}